\documentclass[10pt,twocolumn]{article}

\usepackage[T1]{fontenc}
\usepackage{mathptmx}
\usepackage[letterpaper,textwidth=177.53mm,textheight=672pt,columnsep=6.95mm,centering,headsep=12pt]{geometry}
\usepackage{amsmath,amssymb,amsthm}
\usepackage{graphicx}
\usepackage{booktabs}
\usepackage{multirow}
\usepackage{array}
\usepackage[table]{xcolor}
\usepackage{longtable}
\usepackage{algorithm}
\usepackage{algpseudocode}
\usepackage{hyperref}
\hypersetup{hidelinks}
\usepackage{ragged2e}
\usepackage{tabularx}
\usepackage{tabulary}
\usepackage{etoolbox}
\usepackage{placeins}
\usepackage{titlesec}
\usepackage{fancyhdr}

\renewcommand{\thesection}{\Roman{section}}

\titleformat{\section}{\normalfont\bfseries\normalsize\centering}{\thesection.}{0.5em}{\MakeUppercase}
\titleformat{\subsection}{\normalfont\itshape\normalsize}{\Alph{subsection}.}{0.5em}{}
\titleformat{\subsubsection}{\normalfont\itshape\normalsize}{\arabic{subsubsection})}{0.5em}{}
\titlespacing*{\section}{0pt}{10pt plus 2pt}{5pt plus 1pt}
\titlespacing*{\subsection}{0pt}{8pt plus 2pt}{3pt plus 1pt}
\titlespacing*{\subsubsection}{0pt}{6pt plus 2pt}{2pt plus 1pt}
\newcommand{\appendices}{\appendix\renewcommand{\thesection}{\Alph{section}}%
  \titleformat{\section}{\normalfont\bfseries\normalsize\centering}{APPENDIX \thesection}{0.5em}{\MakeUppercase}}
\newenvironment{keywords}{\par\noindent\textbf{Keywords:}\ }{\par}


\usepackage{needspace}
\let\oldsection\section
\renewcommand{\section}{\Needspace{5\baselineskip}\oldsection}
\let\oldsubsection\subsection
\renewcommand{\subsection}{\Needspace{4\baselineskip}\oldsubsection}
\let\oldsubsubsection\subsubsection
\renewcommand{\subsubsection}{\Needspace{4\baselineskip}\oldsubsubsection}

\newtheorem{lemma}{Lemma}
\newtheorem{proposition}{Proposition}
\newtheorem{corollary}{Corollary}
\newtheorem{assumption}{Assumption}
\newtheorem{definition}{Definition}
\newtheorem{remark}{Remark}

\newcommand{\citep}[1]{\cite{#1}}

\newcommand{\assets}{paper_assets}
\usepackage{refcount}
\newcommand{\apptables}[2]{\noindent{\footnotesize\textit{Tables of this appendix: \ifnum\getpagerefnumber{#1}=\getpagerefnumber{#2}\relax p.~\pageref{#1}\else pp.~\pageref{#1}--\pageref{#2}\fi.}}\par\smallskip}
\graphicspath{{\assets/}}

\newcommand{\xfigwdcol}{\xdef\xfigwd{\the\columnwidth}}
\newcommand{\xfigwdtext}{\xdef\xfigwd{\the\textwidth}}

\begin{document}

\twocolumn[
\begin{center}
{\LARGE\bfseries BRIDGE: An Acceptance Gate to Block False Improvements in Model Updates, with a Retrospective Clinical Evaluation\par}
\vspace{10pt}
{\large Dwi Sunaryono\par}
\vspace{3pt}
{\small Department of Informatics, Faculty of Intelligent Electrical and Informatics Technology,\\ Institut Teknologi Sepuluh Nopember, Surabaya 60111, Indonesia\\ Corresponding author: dwi@its.ac.id\par}
\vspace{4pt}
{\small Preprint, October 2026\par}
\end{center}
\vspace{4pt}
\begin{quote}\small
\noindent\textbf{Abstract.} Models are updated repeatedly, and an update that looks better on validation data but not on new data is a false improvement that can misguide clinical decisions. The naive rule accepts every positive validation gain and is optimistic when the same data choose and judge the update. BRIDGE is a model-agnostic acceptance gate that decides each update from predictions and the available labels, with fixed settings. Each fold of the data judges the update chosen on the other folds (cross-fitting) to reduce selection bias; acceptance requires the gain's lower bootstrap bound to exceed a margin. The four settings are (A) 11 new datasets, (B) synthetic tasks, (C) 12 retrospective clinical tasks, and (D) 39 supporting datasets. The false-improvement rate falls from 19.1\% (naive) to 1.1\% (BRIDGE) in (A), 20.7\% to 1.7\% (B), 20.0\% to 0\% (C), and 12.5\% to 0.5\% (D); every 95\% confidence interval of the reduction over datasets or tasks excludes zero (not over the clinical sources). An equal-size random draw from the naive acceptances contains more false improvements in (A)--(E) (post hoc, $p<0.001$). On the OpenML collection (E) of 14 new datasets, the false-improvement rate falls from 25.6\% to 1.0\%. BRIDGE removes 97\% to 100\% of the false improvements that the naive rule accepts, and its mean gain per deployed update is 1.8 to 3.1 times the naive rule's. The 95\% upper bound on the false-improvement rate is below 5.5\%: formal in (B) for independent tasks (3.56\%), and conditional on the real data tested (0.33\% to 5.40\%).

\vspace{4pt}
\begin{keywords}
Acceptance gate, bootstrap, clinical prediction, cross-fitting, decision algorithm, false improvement, model selection, model updating, risk control
\end{keywords}
\end{quote}
\vspace{10pt}
]
\thispagestyle{plain}

\section{Introduction}
\label{sec:intro}

Machine-learning models are typically selected and approved on the basis of training and validation scores, but performance on those scores is often not reproduced on previously unseen data \cite{kaufman2012,varoquaux2022,recht2019,roberts2021}. Deployed models therefore frequently have to be revised \cite{sculley2015,paleyes2022}, and the changes in question include a threshold shift \cite{lipton2014}, an ensemble \cite{caruana2004}, feature selection \cite{guyon2003,ambroise2002}, and hyperparameter tuning \cite{bergstra2012,bischl2023}.

The naive validation-improvement rule accepts such a change whenever the validation score improves, during development or after deployment. Formally, let $b$ denote the baseline predictor, the model currently in use, and $c$ a candidate update. Let $\mathcal{M}$ be a performance metric evaluated on a validation set $\mathcal{D}_v$ that is held out from fitting, but not from the selection of the candidate. The update is accepted whenever the observed difference
\begin{equation}
\label{eq:acceptproblem}
\hat\Delta(c,b) \;=\; \mathcal{M}(c;\mathcal{D}_v) \;-\; \mathcal{M}(b;\mathcal{D}_v)
\end{equation}
is positive. The same validation data serve two purposes: choosing the candidate and then estimating the gain of the chosen candidate against the baseline. The twofold use spans classical model comparison \cite{varma2006,cawley2010,tibshirani2009,krstajic2014} and modern automated pipelines \cite{feurer2015,akiba2019}.

Because the validation data cannot judge the update chosen on that same data, in this study each accepted update is scored on separate data set aside from the start, the locked reference. The outcome is one of two: the change improves, when the selected metric is higher than the baseline metric, or the change is a false improvement (a harmful update in the pre-registration documents), when the metric is lower or equal. A tie is counted as a false improvement, because a change that brings no improvement still carries the cost and the risk of replacing the model (formalized in Definition~\ref{def:risk}). The risk quantity of interest \cite{feng2021biometrics} is the proportion of false improvements among those accepted.

The false-improvement rate among accepted updates is difficult to bound, because two sources make the validation gain optimistic. The first source of optimism is selection bias: a candidate is chosen as the best of many on the validation data and then judged on those same data, so its apparent improvement is larger than the true one. Maximizing over many candidates is a multiple-comparisons problem, and the resulting optimism is known as selection bias or as the winner's curse \cite{varma2006,cawley2010}. The bias grows with the number of candidates and the amount of tuning, as formalized in \eqref{eq:maxbias} (Section~\ref{sec:rel-selection}; \cite{ambroise2002,tibshirani2009}).

The second source of optimism is data leakage into the scores used for selection: Kapoor and Narayanan \cite{kapoor2023} found leakage in $294$ machine-learning papers across $17$ fields, including the use of test-set information in model or feature selection. The BRIDGE evaluation protocol closes that source: the locked reference is used only for judging, so leakage into the judging data is excluded by design and the optimism studied in this paper is that of selection.

Four directions of prior work address parts of this problem. First, estimating the performance of a selection procedure (nested cross-validation \cite{varma2006}, bias correction of the selected error rate \cite{tibshirani2009}) returns a performance estimate. Second, comparing two models or two training procedures \cite{mcnemar1947,dietterich1998,nadeau2003} assumes the compared pair is fixed before the results are seen. Third, governing a released model or an evaluation resource (medical-device approval policies \cite{feng2021biometrics,feng2021chil,fda2019}, no-regression certification \cite{zhang2026,payonly2026}, adaptive data analysis \cite{dwork2015,blumhardt2015}) needs data beyond the labels already available. Fourth, controlling the risk of a chosen policy (Learn then Test \cite{angelopoulos2021}) needs a separate calibration set.

\begin{figure}[t]
\centering
\xfigwdcol
\includegraphics[width=0.95\linewidth]{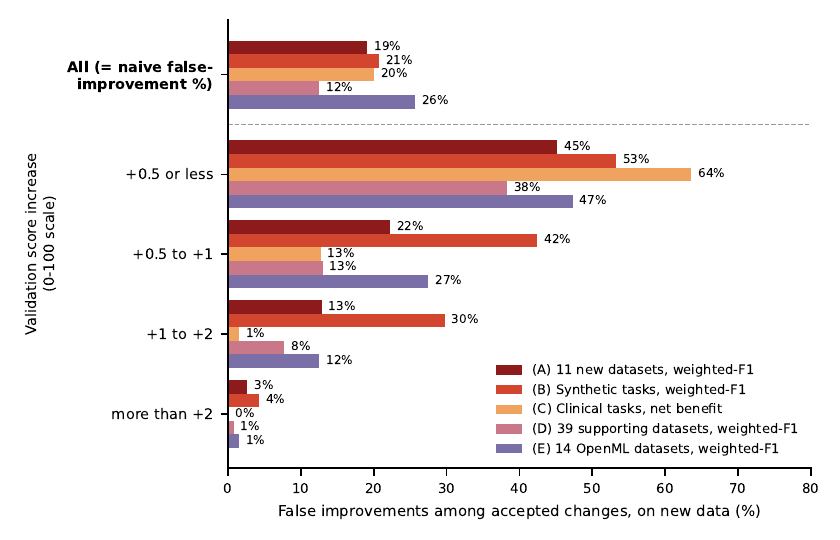}
\caption{False improvements accepted by the naive rule, grouped by validation gain, in collections (A)--(E).}
\label{fig:validation_gain}
\vspace{2pt}
\parbox{\linewidth}{\justifying\scriptsize \emph{Note.} The figure is descriptive. Current model: histogram gradient boosting (HGB) with the default candidate space (HGB, decision trees, and a support vector classifier); other learners, other candidate members, and a joint seven-family space (XGBoost, multilayer perceptron, logistic regression, naive Bayes, k-nearest neighbors) are tested separately (Tables~\ref{tab:members} and~\ref{tab:scope}). Data and comparison with BRIDGE: Sections~\ref{sec:exp-rules}--\ref{sec:exp-clinical}. Above $+5$ points none of the $726$ accepted changes is a false improvement in any setting, so that band is merged into ``more than $+2$''.}
\end{figure}

Despite these four directions, the naive rule remains in use in practice, and the danger is real. Across the five collections, (A) to (D) and the OpenML collection (E), $12.5\%$ to $25.6\%$ of the updates that the naive rule accepts are false improvements (Fig.~\ref{fig:validation_gain}, row ``All''). The danger concentrates where the evidence is weakest: for a validation gain of $+0.5$ points or less, $38\%$ to $64\%$ of the accepted changes are false improvements, whereas above $+2$ points $0\%$ to $4\%$ are (Fig.~\ref{fig:validation_gain}). The pattern is descriptive; it was seen after the study was designed, and it was not used to set the gate parameters, which were fixed beforehand.

If left alone, the problem accumulates, because each false improvement replaces a model that is already running, with its cost and its risk. Repeated updates cause a slow decline in performance, the effect called bio-creep \cite{feng2021biometrics,feng2021chil}, and add technical debt to the machine-learning system \cite{sculley2015,paleyes2022}. In the clinic, an update that makes a model look better in validation but not on new data is a false improvement, and a screening or diagnostic model updated on that basis can misguide decisions about patients \cite{roberts2021,varoquaux2022}. Regulators therefore ask for control over the changes made to a deployed model \cite{fda2019}.

The four directions leave two gaps, a methodological gap and an empirical gap. For the methodological gap, a fixed acceptance gate for one selected update combines cross-fitted selection, a bootstrap decision score, and a margin; its false-improvement rate is evaluated across tasks (Table~\ref{tab:related}). Honest holdouts and tests with re-selection also separate choosing from judging and serve as comparators, whereas the closest governance and calibration methods use further data \cite{feng2021biometrics,zhang2026} or bound the risk of a selected policy \cite{angelopoulos2021}. The empirical gap is the absence, in the literature reviewed in Section~\ref{sec:related}, of a large-scale pre-registered evaluation across change types, metrics, and domains, with a factorial component analysis of the acceptance gate.

Pre-registration means that the design, the hypotheses, and the analysis are fixed before the runs. The gap matters because model comparisons often depend on the seed and the data split \cite{bouthillier2021} and leakage and methodological failures are common \cite{kapoor2023,varoquaux2022}. In this study, pre-registration is internal: each plan was written and hashed before its run, without a third-party registry, so the order of plan and run rests on the recorded times (Appendix~\ref{app:prereg}). Pre-registration limits the freedom to choose the analysis after the results are seen \cite{nosek2018}, whereas seed variation is handled by drawing a new seed for every trial.

The paper therefore proposes BRIDGE, an acceptance gate that decides one update using only the labels already available. BRIDGE separates the data that choose an update from the data that judge it (cross-fitting) and accepts the update only when the lower bootstrap bound on its gain exceeds a minimum gain, the margin. Changes whose gain is small and uncertain---the group with the most false improvements in Fig.~\ref{fig:validation_gain}---are therefore rejected. On the collections tested (Section~\ref{sec:exp-settings}), the false-improvement rate among accepted updates falls to at most $1.7\%$ (Section~\ref{sec:exp-summary}).
 The contributions respond to the two gaps: C1 and C2 answer the methodological gap, C3 answers the empirical gap, and C4 makes both usable and checkable by others.
\begin{itemize}
\item \textbf{C1: BRIDGE, an acceptance gate with fixed settings.} BRIDGE blocks false improvements by accepting or rejecting one update after it has been chosen, using the current model's predictions, the candidate's predictions, and the labels already available, with no new data. What is new is the cross-fitting step, which separates the data that choose an update from the data that judge it, used together with the bootstrap lower bound and the margin. The settings are the seven frozen parameters (the fold count, the bootstrap count, the quantile, the margin, the bag count, the bag fraction, and the SEARCH size) and the candidate space. These settings are fixed once during development and used unchanged for every task, so no per-task tuning reintroduces selection bias. Binary and multiclass tasks use the same procedure (Section~\ref{sec:method}).
\item \textbf{C2: A finite-sample bound and a component analysis.} The finite-sample bound is Proposition~\ref{prop:bound}: a $95\%$ upper bound on the proportion of false improvements among those accepted, computed on the locked reference after each decision. The gate itself decides only from POOL, so the bound is an evaluation protocol. The bound applies to a finite number of tasks rather than only asymptotically, with the condition that the tasks are drawn independently from one population (Section~\ref{sec:method-risk}). The bound targets the false-improvement rate among accepted updates rather than a prediction error. The component analysis uses a factorial design over three components: the cross-fitted evidence, the bootstrap lower bound, and the margin. The design switches the three on and off in all eight combinations and also varies the size of the candidate space, to show which component lowers the false-improvement rate (Section~\ref{sec:exp-ablation}).
\item \textbf{C3: An internally pre-registered evaluation and a clinical application.} The evaluation is internally pre-registered, so the design, the pass criterion, the dataset list, and the seed rule are fixed before each run. Each file is recorded with a SHA-256 cryptographic hash \cite{nist2015} in a manifest that is kept in the code repository (chronology in Appendix~\ref{app:prereg}). A reader can therefore check that each reported result comes from the hashed code. All rules judge the same decisions in the four settings (A) to (D) (Section~\ref{sec:exp-settings}), each with a new seed and ground truth on the locked reference; the OpenML collection (E) is run through the released program as well. BRIDGE is compared with acceptance rules built on classical test statistics, an honest holdout, and a fixed margin. The conditions of use are tested for base learners, tuning, and imbalanced data. The clinical application applies the gate retrospectively to $12$ clinical tasks from $6$ sources, with net benefit \cite{vickers2006} as the decision metric (Section~\ref{sec:exp-clinical}).

\item \textbf{C4: A reproducible open-source implementation.} The open-source implementation is the package \texttt{bridge-gate} (\texttt{pip install bridge-gate}), which runs the paper's procedure as its default behavior and can be applied to a user's own data. The package reproduces every stored result row of $15$ runs, and the repository and the release archive are listed (Appendix~\ref{app:prereg}; Data Availability Statement).
\end{itemize}
\section{Literature Review}
\label{sec:related}

The related work is organized around three questions that match the three subsections: why validation-based model choice is biased, and how data leakage adds to that bias (Section~\ref{sec:rel-selection}); what classifier comparison tests assess (Section~\ref{sec:rel-tests}); and how prior work governs the approval of model updates (Section~\ref{sec:rel-gates}). Table~\ref{tab:related}, at the end of Section~\ref{sec:rel-gates}, compares the approaches closest to BRIDGE.

\subsection{Selection Bias and Data Leakage in Validation-Based Model Choice}
\label{sec:rel-selection}

Validation-based model choice is the common practice of selecting, from many candidates, the one with the highest validation score, and then reporting that score as evidence of improvement \cite{cawley2010}. This practice causes selection bias, although for a single model that is already fixed the validation score is unbiased. Once the best of many candidates is chosen, the score of the selected candidate is the highest of many noisy scores. On average, that score therefore exceeds the gain that the candidate would show on new data.

In symbols, writing $\hat\Delta(c,b)$ for the validation gain of candidate $c$ over the baseline $b$, with $\mathcal{C}$ the frozen candidate space (Section~\ref{sec:method-partition}),
\begin{equation}
\label{eq:maxbias}
\mathbb{E}\!\left[\max_{c\in\mathcal{C}}\hat\Delta(c,b)\right]
\;\ge\;
\max_{c\in\mathcal{C}}\mathbb{E}\!\left[\hat\Delta(c,b)\right],
\end{equation}
where all expectations are over the random validation-data draw. The gap between the two sides of \eqref{eq:maxbias} can grow with the number of candidates and the amount of tuning, so hyperparameter search itself becomes a form of fitting \cite{cawley2010}. Beyond this selection bias, a reported difference can also be lost in seed and partitioning variability, which replication studies find can be as large as the difference itself \cite{bouthillier2021}.

Three approaches reduce selection bias: (i) nested cross-validation \cite{varma2006} and bias correction of the minimum cross-validation error \cite{tibshirani2009} estimate the performance of the selection procedure rather than of the model deployed \cite{bates2024}; (ii) post-selection inference accounts for the mechanism that selected the object \cite{berk2013,taylor2015}; and (iii) cross-fitting separates the data used to choose from the data used to judge \cite{chernozhukov2018}. BRIDGE uses this principle within each round of the gate (Section~\ref{sec:method-gate}) and returns an accept/reject decision for one update.

Besides selection bias, the second source of optimism noted in Section~\ref{sec:intro} is data leakage. Group leakage, where rows from the same group or subject cross the split, and temporal leakage, where information from a future time enters an earlier stage, both make a validation score too optimistic \cite{saeb2017,arp2022,kapoor2023}. The paper closes that source with an explicit group and temporal audit before any split (Section~\ref{sec:exp-settings}).

\subsection{Statistical Tests for Comparing Classifiers}
\label{sec:rel-tests}

Statistical tests are the standard way to compare classifiers, and the tests differ in the objects being compared: (i) McNemar's test \cite{mcnemar1947} compares the predictions of two models that are already trained, on the same observations; (ii) two training procedures are compared by refitting on several data splits, using the $5\times2$ cross-validated (cv) paired $t$-test \cite{dietterich1998} or the corrected resampled $t$-test \cite{nadeau2003}; and (iii) two methods are compared across datasets by rank-based \cite{demsar2006} or Bayesian tests \cite{benavoli2017}. The first two kinds assume that the compared pair is fixed before the results are inspected. Neither therefore assesses the decision that follows when the best configuration is chosen among many on the same data. The third kind answers which method is better, not whether a single update should be accepted.

\subsection{Gates for Approving Model Updates}
\label{sec:rel-gates}

The work most directly related to this paper concerns gates, that is, rules that accept or reject a proposed model update before the update is used. Three approaches build such gates for model updates, and each approach has its own data needs. First, medical-device approval policies \cite{feng2021biometrics,feng2021chil} formalize bio-creep, the risk that approving too many changes slowly degrades performance, and test online on a single deployment stream. Second, slice-wise no-regression certification \cite{zhang2026} evaluates a candidate and its baseline on independent certification data, with fallback to the current model. Third, disagreement labeling \cite{payonly2026} uses unlabeled deployment data and requests labels only on the items where the candidate and baseline differ. All three need data beyond the labels already available, whereas BRIDGE uses only the labels already available.

Calibration methods also need a separate calibration set and work with policies, where a policy is a model together with its decision rule, for example a threshold. Learn then Test \cite{angelopoulos2021} uses the calibration set to certify that the risk of a chosen policy stays below a target level, with a multiple-testing procedure that controls the family-wise error across the candidate policies. Conformal risk control \cite{angelopoulos2024} uses it to calibrate a single threshold so that the expected loss of the resulting policy is bounded. BRIDGE instead answers a paired question, whether this update improves on the current model, from the cross-fit evidence (Section~\ref{sec:method-gate}).

Three further approaches answer other questions: (i) reusable-holdout and leaderboard mechanisms \cite{dwork2015,blumhardt2015} guard an evaluation resource reused across many queries rather than deciding one update; (ii) clinical model updating, such as recalibration and revision \cite{janssen2008,steyerberg2019}, changes the model to fit new data without an accept/reject gate; and (iii) sharpness-aware minimization \cite{foret2021}, invariant risk minimization \cite{arjovsky2019}, and group distributionally robust optimization \cite{sagawa2020} change how models are trained, not whether a post-training change is applied.

This paper asks whether a single update, chosen from many candidates, may be accepted using only the labels already available, and it evaluates how often the accepted updates are false improvements with an upper bound computed on a locked reference over many decisions. BRIDGE answers by separating the data that choose from the data that judge (Section~\ref{sec:method}). Table~\ref{tab:related} compares the closest approaches on what the method controls, whether it accounts for selection, what labels the decision needs, and what the bound covers.

Relative to the methods in Table~\ref{tab:related}, the closest comparison is with the calibration methods. Learn then Test and conformal risk control can take the calibration set from the same label budget, but these methods bound the risk of a selected policy, whereas the evaluation of BRIDGE bounds the share of false improvements over many decisions. The comparison with Learn then Test is conceptual, and no empirical advantage is claimed. BRIDGE chooses an update on one part of the data and judges it on another, and it uses the training labels that already exist. Its bound covers the acceptance procedure over a population of tasks rather than one deployed model.

\begin{table*}[!t]
\centering
\scriptsize
\setlength{\tabcolsep}{3pt}
\caption{Positioning of BRIDGE against related gates and methods.}
\label{tab:related}
\rowcolors{0}{}{}
\setlength{\tymin}{28pt}\begin{tabulary}{\linewidth}{@{}LLlLL@{}}
\toprule
Method & What it controls & \begin{tabular}[b]{@{}l@{}}Accounts for\\selection\end{tabular} & \begin{tabular}[b]{@{}l@{}}Labels needed\\at decision\end{tabular} & \begin{tabular}[b]{@{}l@{}}Bound\\covers\end{tabular} \\
\midrule
Nested CV and bias corrections \cite{varma2006,cawley2010,bates2024} & error-estimate optimism & yes (inside estimate) & existing labels only & selection procedure \\
McNemar / $5\times2$~cv / corrected $t$ \cite{mcnemar1947,dietterich1998,nadeau2003} & fixed-pair difference & no & existing labels only & prespecified comparison \\
Learn then Test \cite{angelopoulos2021} & policy risk & yes & reserved calibration set & selected policy \\
Update approval \cite{feng2021biometrics,feng2021chil} & bio-creep & yes (within policy) & labeled deployment stream & approval policy \\
No-regression certificates \cite{zhang2026,payonly2026} & per-slice regression & yes (within slices) & certification set (Zhang); disagreements only (Balachandran) & deployed model \\
BRIDGE & false-improvement rate among accepted updates & \textbf{yes} (per-fold separation) & \textbf{existing training labels only} & acceptance procedure, population-level \\
\bottomrule
\end{tabulary}

\vspace{2pt}
\parbox{\linewidth}{\justifying\scriptsize *Column meanings and reading: Section~\ref{sec:rel-gates}; empirical comparison with an honest holdout: Table~\ref{tab:holdout}.}
\end{table*}

\section{The BRIDGE Procedure and Experimental Design}
\label{sec:method}

The BRIDGE gate (a label, not an acronym) turns a candidate update into an accept/reject decision from data that are already labeled. The gate operates within a complete procedure (Fig.~\ref{fig:partition}) that runs from the split of the data to the deployment decision.

The proposed procedure (Fig.~\ref{fig:partition}) has three parts: the fixed settings, the outer layer, and the inner layer. The fixed settings were chosen during development, before the confirmatory runs. These settings comprise the model type, the candidate space, and the gate parameters (Table~\ref{tab:hyper}). The outer layer splits the data of a task once into the locked reference and TRAIN. The locked reference is used solely to score the experiments, and the rows of TRAIN are re-drawn in every trial. TRAIN has $2{,}430$ rows, or $1{,}000$ on small datasets, and is split into SEARCH ($630$ rows) and POOL ($1{,}800$ rows, three folds of $600$). The base models are fitted on SEARCH, and an update is selected and judged on POOL. Table~\ref{tab:hyperbasis} (Appendix~\ref{app:sens}) gives the basis for these sizes.

In the inner layer, the cross-fit, POOL is split into three folds, and in each round a candidate is selected on two folds and judged on the third. The pooled judgments decide whether the update is accepted. An accepted update is re-selected on the whole of POOL and deployed; otherwise the baseline is retained. Every judgment therefore rests on data that were not used to fit or to select the candidate being judged.

Each component of the gate addresses one source of error: (i) selection bias, which makes the gain of the best candidate optimistic when the data that chose it also judge it \eqref{eq:maxbias}, so cross-fitting judges each candidate on a fold that did not choose it; (ii) sampling noise, which can make a gain positive by chance, so the gate uses the bootstrap lower bound; and (iii) a gain too small to justify replacing the running model, so the margin sets the minimum improvement. The false-improvement rate is then evaluated with the finite-sample bound (Proposition~\ref{prop:bound}) and the KL-ratio bound on the locked reference (Section~\ref{sec:method-risk}).

Sections~\ref{sec:method-partition}--\ref{sec:method-risk} follow the figure: Section~\ref{sec:method-partition} presents the partition and the candidate space, and Section~\ref{sec:method-gate} presents the cross-fit and the bootstrap gate. Section~\ref{sec:method-metrics} presents the metrics and multiclass tasks, and Section~\ref{sec:method-risk} presents the upper bound on the false-improvement rate. Section~\ref{sec:exp-settings} then describes the experimental design, namely the data, the partition into trials, the comparators, and the evaluation metrics. Tables~\ref{tab:notation}, \ref{tab:gate}, and~\ref{tab:outcomes} list the notation, and Algorithms~\ref{alg:gate}--\ref{alg:crossfit} give the procedure step by step.

\subsection{Data Partition and Candidate Space}
\label{sec:method-partition}

The data partition, the outer layer of Fig.~\ref{fig:partition}, begins with the labeled examples of a task, $\{(x_i,y_i)\}_{i=1}^{N}$ with $y_i\in\{1,\dots,K\}$ for $K$ classes. A disjoint subset, the \emph{locked reference}, is set aside once and used only to score the experiments; it supplies the ground truth at evaluation time. The rows that remain once the reference has been set aside are eligible for training. Each trial samples its training rows and splits the rows into a \emph{SEARCH} set and a \emph{POOL} set with a fresh seed (Section~\ref{sec:exp-settings}),
\begin{equation}
\label{eq:partition}
\text{TRAIN} \;=\; \text{SEARCH} \;\sqcup\; \text{POOL},
\end{equation}
where SEARCH fits the base learners and POOL carries the evidence used both to select an update and to gate it (notation in Table~\ref{tab:notation}). Base-learner parameters and preprocessing are fitted only on SEARCH. On POOL nothing is fitted: the gate selects one threshold-and-weight configuration from the prespecified candidate space.

\begin{table}[!t]
\centering
\scriptsize
\providecolor{bghead}{HTML}{DBEFF8}\providecolor{bgzebra}{HTML}{F2F9FC}\providecolor{bgpass}{HTML}{D7F0D2}\providecolor{bgfail}{HTML}{FCDCD7}
\setlength{\tabcolsep}{3pt}
\caption{Frozen parameters of BRIDGE and of the evaluation protocol.}
\label{tab:hyper}
\setlength{\tymin}{28pt}\begin{tabulary}{\columnwidth}{@{}lL@{}}
\toprule
Parameter & Value \\
\midrule
Cross-fit folds $F$ & $3$ \\
Bootstrap replicates $N_{\mathrm{boot}}$ & $20{,}000$ \\
Lower quantile $q$ & $5\%$ \\
Margin $\varepsilon$ & $0.005$ \\
Base learner & HGB, library defaults with at most 300 iterations and early stopping \\
Bagged base models & $32$ bags of $80\%$ of SEARCH \\
Ensemble members & DT and SVC, library settings \\
Preprocessing & fitted on SEARCH only (WiDS exception: Table~\ref{tab:leakage}) \\
Candidate space & $173$ (binary); $14+8K$ for $K$ classes \\
Training rows & $2{,}430$ (SEARCH $630$, POOL $1{,}800$); $1{,}000$ when fewer rows \\
Locked reference & one pre-registered seed per task; size by a fixed rule \\
Trials & $100$ per dataset, $200$ per clinical task, $60$ per multiclass dataset, $1$ per task of $\mathcal{Q}$; one fresh seed per trial \\
\bottomrule
\end{tabulary}

\vspace{2pt}
\parbox{\linewidth}{\justifying\scriptsize *Details of every value and its basis (where it was set and where it was tested): Table~\ref{tab:hyperbasis} in Appendix~\ref{app:sens}.}
\end{table}

\begin{table*}[!t]
\centering\footnotesize
\setlength{\tabcolsep}{4pt}\renewcommand{\arraystretch}{1.1}
\caption{Data units, partition, and predictors.}
\label{tab:notation}
\begin{tabularx}{\linewidth}{@{}p{2.6cm}>{\raggedright\arraybackslash}X@{}}
\toprule
\textbf{Symbol / term} & \textbf{Definition} \\ \midrule
Task & One dataset, or one draw from the synthetic population $\mathcal{Q}$. \\
Trial & One re-randomization of the training rows, the SEARCH/POOL split, and the bag draw under a fresh seed; produces one decision; trials per task: $100$ (real), $200$ (clinical), $60$ (multiclass), $1$ ($\mathcal{Q}$). \\
Decision; index $j$ & Accept or reject in one trial; $j$ runs over trials on real data and over tasks on $\mathcal{Q}$; $J$ is the number of trials or tasks. \\
Source & Clinical: data provider (six sources; the four National Health and Nutrition Examination Survey (NHANES) tasks form one source, the four Behavioral Risk Factor Surveillance System (BRFSS) tasks another). Non-clinical: the dataset. \\
\rowcolor{gray!10} Locked reference $R_j$ & Rows set aside once per task with a pre-registered seed; used only as the ground truth to judge the deployed update. Size: Appendix~\ref{app:datasets}. \\
\rowcolor{gray!10} TRAIN $=$ SEARCH $\sqcup$ POOL & SEARCH fits the base models; POOL holds the evidence for selection and gating. Size: Appendix~\ref{app:datasets}. \\
\rowcolor{gray!10} Development datasets & The $18$ datasets on which $\mathcal{C}$ and the gate were frozen; a subset of the $34$ confirmation datasets in (D). \\
\rowcolor{gray!10} New data (unseen during development) & Data held out from designing and freezing the gate (Section~\ref{sec:exp-settings}). \\
\rowcolor{gray!20} $b$ (baseline) & Default base learner (histogram gradient-boosted classifier, HGB) averaged over the kept bags at threshold $0.5$ (Section~\ref{sec:method-partition}). \\
\rowcolor{gray!20} $\bar p_\ell(x)$ & Mean predicted probability of base model $\ell$ over its retained bags ($80\%$ subsamples of SEARCH); bag-eligibility rule: Section~\ref{sec:method-partition}. \\
\rowcolor{gray!20} $c$ (candidate), $\mathcal{C}$ & A configuration $f_{w,t}$; $\mathcal{C}$ is the $173$ configurations of the binary default space (threshold-only, ensembles with decision trees (DT), ensembles with DT and a support vector classifier (SVC)); other protocols use separate prespecified spaces; basis: Table~\ref{tab:hyperbasis}. \\
\rowcolor{gray!20} Update & A proposed model change: a candidate from $\mathcal{C}$ selected on POOL that the gate may accept or reject; accepted update: the deployed configuration, re-selected on the whole of POOL. \\
\rowcolor{gray!30} Frozen & Gate parameters ($F$, $N_{\mathrm{boot}}$, $q$, $\varepsilon$, bag count) are fixed before any evaluation run and never tuned per dataset; values: Table~\ref{tab:hyper}. \\
\bottomrule
\end{tabularx}

\vspace{2pt}
\parbox{\linewidth}{\justifying\scriptsize *Shading marks the four groups in order: \emph{data units} (white), \emph{data partition} (light gray), \emph{models and updates} (gray), and \emph{protocol} (darker gray).}
\end{table*}
\rowcolors{0}{}{}

\paragraph{Base learners.}
The default base learner is a histogram gradient-boosted classifier (HGB) \cite{friedman2001,ke2017,pedregosa2011} with the settings fixed before any evaluation and listed in Table~\ref{tab:hyperbasis}. The base learner uses a fixed project configuration: the library defaults, except at most $300$ boosting iterations with early stopping after $10$ iterations without improvement (Table~\ref{tab:hyperbasis}). This fixed learner separates the effect of the gate from the effect of tuning. Tuning is treated separately as an update that can be proposed to the gate (Section~\ref{sec:exp-base}).

All base models are bagged to reduce the variability of the predicted probabilities. Each bag is an $80\%$ subsample of SEARCH, and $32$ bags are drawn \cite{breiman1996,buhlmann2002} (the basis and the tests of $16$ and $64$ bags are in Table~\ref{tab:hyperbasis}, Appendix~\ref{app:sens}). A bag is retained provided that every class has enough examples for the internal validation split of the base model. The bagged score $\bar p_\ell(x)$ of base model $\ell$ is the average over the retained bags. Appendix~\ref{app:perdataset} gives the minimum class count per bag and the few trials skipped under this rule.

\paragraph{Baseline and candidates.}
The \emph{baseline} $b$ is the base learner alone at the fixed threshold $0.5$, the usual default of a probabilistic classifier. A \emph{candidate} $c$ is a weighted, thresholded combination of $L$ base models,
\begin{equation}
\label{eq:predictor}
f_{w,t}(x) \;=\; \mathbb{1}\!\left[\textstyle\sum_{\ell=1}^{L} w_\ell\,\bar p_\ell(x) \;\ge\; t\right],
\end{equation}
where $w$ lies on the simplex ($w_\ell\ge 0$, $\sum_\ell w_\ell=1$) and $t\in(0,1)$ is the decision threshold; the baseline is recovered at $L=1$, $w_1=1$, $t=0.5$.

In the binary default, $\mathcal{C}$ contains three families: other thresholds for the HGB base learner; mixtures of HGB with decision trees (DT) \cite{breiman1984}; and mixtures of HGB, DT, and a support vector classifier (SVC) \cite{cortes1995}. Each family is combined with a grid of thresholds, because moving the decision threshold is a standard way to trade errors between the classes \cite{lipton2014}. Moving the threshold also handles class imbalance without reweighting the training data. The mixtures follow weighted ensemble averaging \cite{caruana2004}. The grids are coarse by design to keep the space small, because the selection bias of \eqref{eq:maxbias} grows with the number of candidates.

The default space has $173$ configurations. Appendix~\ref{app:sens} gives the grids, the count, and the development history of the space, and Table~\ref{tab:hyperbasis} gives the basis of each setting. Section~\ref{sec:exp-ablation} tests whether this coarse choice matters. In that section, the same grids are refined to $1{,}825$ and $11{,}249$ configurations (grids in Table~\ref{tab:kspace}). The multiclass space follows the same idea, with class scales in place of thresholds. Section~\ref{sec:method-metrics} describes this space together with the multiclass metrics.

The three members, HGB, DT, and SVC, were fixed stepwise during development and represent three ways of learning: boosted trees, a single tree, and a kernel. During development, SVC replaced an earlier k-nearest neighbors (KNN) member \cite{cover1967}; Table~\ref{tab:devruns} (Appendix~\ref{app:prereg}) lists the development runs behind this choice. The base model throughout is the HGB classifier; DT and SVC enter only as candidate members. Two other member pairs are evaluated as conditions of use in Section~\ref{sec:exp-base}: logistic regression \cite{cox1958} with naive Bayes \cite{hand2001} and a multilayer perceptron (MLP) \cite{rumelhart1986} with KNN. A joint space of seven member families (DT, SVC, logistic regression, naive Bayes, MLP, KNN, and XGBoost) is evaluated there as well. XGBoost \cite{chen2016} and an MLP are also evaluated there as alternative base learners. No member is tuned, and Appendix~\ref{app:sens} gives the member settings.

With the space fixed in this way, the only adaptation per task on POOL is the choice of one configuration from $\mathcal{C}$, or no change. The space and the gate parameters were frozen during development on $18$ datasets (dataset list in Table~\ref{tab:splits}) before any evaluation. The confirmation datasets of (D), including those $18$, are therefore re-evaluated with fresh per-trial seeds, whereas (A) contains task families not used during development (Section~\ref{sec:exp-settings}).

\begin{figure*}[t]
\centering
\xfigwdtext
\includegraphics[width=0.95\textwidth]{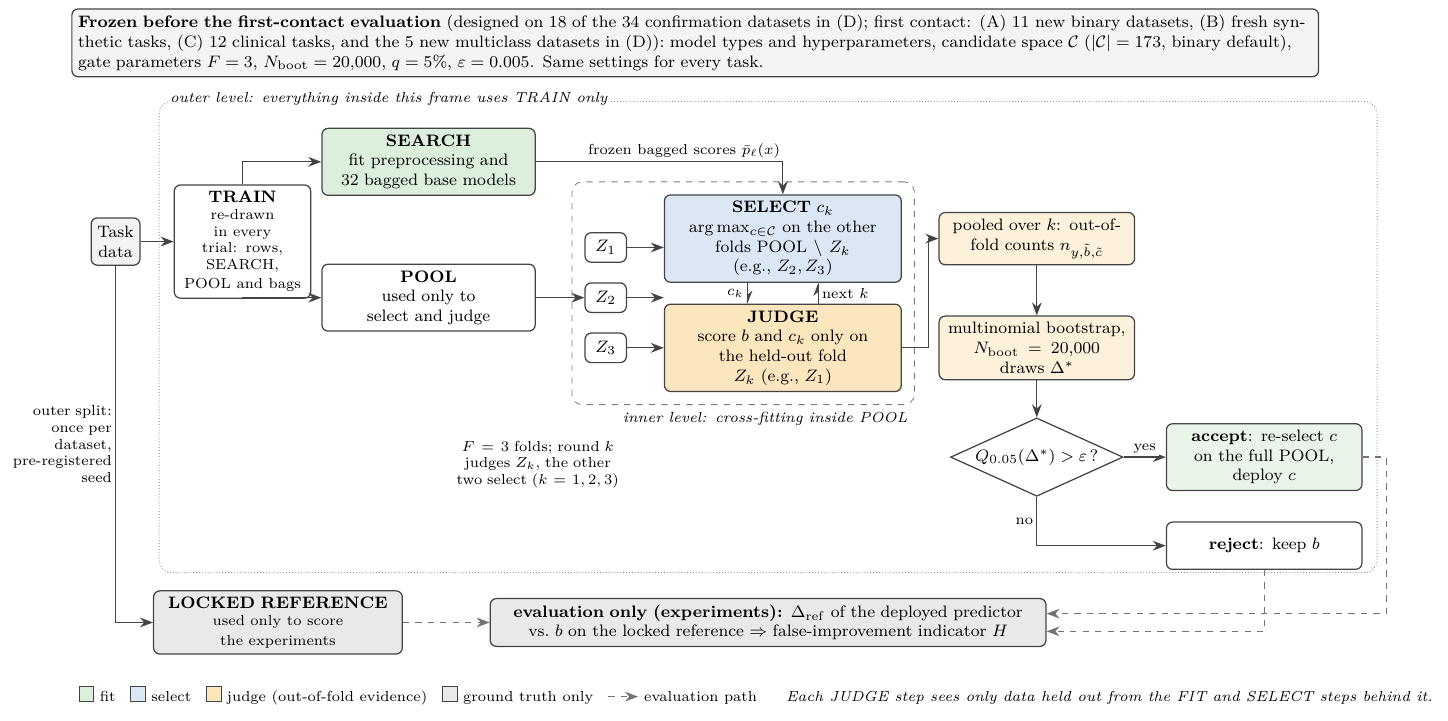}
\caption{Two-level separation: outer train/reference split, inner three-fold cross-fit within POOL; colors mark subset roles.}
\label{fig:partition}
\end{figure*}

\subsection{Cross-Fitted Selection and Bootstrap Gate}
\label{sec:method-gate}

\paragraph{Cross-fitted selection.}
POOL is partitioned into $F=3$ stratified folds $Z_1,\dots,Z_F$, with other values of $F$ tested in Appendix~\ref{app:sens}. In round $k$, the fold $Z_k$ is the scoring fold and its complement $\text{POOL}\setminus Z_k$ is the selection data. A candidate $c_k$ is chosen on $\text{POOL}\setminus Z_k$ by maximizing the metric of interest subject to strictly improving on the baseline, and the baseline and $c_k$ are then evaluated on $Z_k$. When no candidate strictly improves on $\text{POOL}\setminus Z_k$, $c_k$ is the baseline $b$ (the fallback). Algorithm~\ref{alg:gate} gives the steps of the gate. Algorithm~\ref{alg:crossfit} gives the selection rule, including the tie rule.

Repeating over the $F$ folds and pooling the held-out predictions yields the cross-fitted triple counts
\begin{equation}
\label{eq:counts}
n_{y,\tilde b,\tilde c} \;=\; \bigl|\{\, i \in \text{POOL} : y_i = y,\; \hat b_i = \tilde b,\; \hat c_i = \tilde c \,\}\bigr| ,
\end{equation}
where $y$, $\tilde b$, and $\tilde c$ range over the classes, and $\hat b_i$ and $\hat c_i$ are the predicted labels of the baseline and of the candidate. The label $\hat c_i$ records the prediction of the candidate $c_k$ that was selected for the fold containing observation $i$. The pooled counts therefore combine the predictions of the per-fold candidates, which need not be the same configuration. Section~\ref{sec:method-metrics} gives the coding of the labels for binary and multiclass tasks.

The baseline confusion table $n^{\mathrm{base}}_{y,\hat y}=\sum_{\tilde c} n_{y,\hat y,\tilde c}$ and the candidate confusion table $n^{\mathrm{cand}}_{y,\hat y}=\sum_{\tilde b} n_{y,\tilde b,\hat y}$ are both derived from these counts. A metric expressible in these counts is computed from the two tables, and the metric difference is defined as
\begin{equation}
\label{eq:deltametric}
\Delta \;=\; \mathcal{M}\!\left(\bigl\{n^{\mathrm{cand}}_{y,\hat y}\bigr\}\right) \;-\; \mathcal{M}\!\left(\bigl\{n^{\mathrm{base}}_{y,\hat y}\bigr\}\right).
\end{equation} Because the configuration evaluated on fold $k$ was selected without fold $k$, the counts are out-of-sample with respect to selection. The separation is made explicit in Table~\ref{tab:leakage} (Appendix~\ref{app:sens}): within the cross-fit, the per-fold candidate is assessed on the fold that was not used to select it.

\paragraph{Bootstrap gate.} The gate resamples the pooled counts $N_{\mathrm{boot}}=20{,}000$ times from the multinomial distribution defined by the observed proportions and the observed total. The resamples form a bootstrap distribution $\{\Delta^{*}_{(r)}\}_{r=1}^{N_{\mathrm{boot}}}$ \cite{efron1993}; Table~\ref{tab:gate} lists the quantities of the gate. The resampling does not repeat the candidate selection, so the lower bootstrap quantile $Q_{0.05}(\Delta^{*})$, at the quantile level $q=0.05$, is a decision statistic rather than a confidence bound. BRIDGE accepts the update exactly when that lower quantile exceeds a fixed margin,
\begin{equation}
\label{eq:accept}
\text{accept} \iff Q_{0.05}\!\left(\Delta^{*}\right) \;>\; \varepsilon, \qquad \varepsilon = 0.005,
\end{equation}
and otherwise retains the baseline.

The five gate parameters of \eqref{eq:counts}--\eqref{eq:accept} have been fixed since development and are the same for every dataset: $F=3$, $N_{\mathrm{boot}}=20{,}000$, quantile level $5\%$, margin $\varepsilon=0.005$, and bag count $32$ (Table~\ref{tab:hyper}); the basis for each value is given in Table~\ref{tab:hyperbasis} (Appendix~\ref{app:sens}). Together with the bag fraction and the SEARCH size, these values make up the seven frozen parameters (Section~\ref{sec:limitations}). The sensitivity of the criterion to these values is reported in Appendix~\ref{app:sens}. The margin $\varepsilon=0.005$ is the operational minimum improvement fixed during development ($0.5$ percentage points of weighted $F_1$, $0.005$ on the $0$--$1$ scale), and its basis is also given in Table~\ref{tab:hyperbasis}. For clinical decisions, benefit is assessed separately with net benefit in (C).

Once the gate accepts, the configuration to deploy is re-selected on the whole of POOL, so the deployed configuration may differ from the per-round candidates on which the decision was based. The bootstrap quantile on the cross-fit evidence is an empirical decision score, not an interval for the final configuration. Whether that score transfers to the final configuration is assessed empirically on the locked reference, through the false-improvement rate of the deployed configuration and Fig.~\ref{fig:crossfit}. In that figure, the cross-fit estimate of the final configuration's gain is slightly pessimistic, by $0.0034$ on average. Among the updates that BRIDGE accepts, the reference gain of the final configuration reaches the lower bound in $80\%$ to $96\%$ of cases. In the cases that fall short, the mean shortfall is $0.0026$ to $0.0089$, and most of these updates still improve on the reference (post hoc; Appendix~\ref{app:results}). The false-improvement rate and its bounds are then measured for that deployed configuration on the locked reference (Definition~\ref{def:risk}). The two layers of the design resemble nested cross-validation, and Remark~\ref{rem:nestedcv} states how the two differ.

\begin{table*}[!t]
\centering\footnotesize
\setlength{\tabcolsep}{4pt}\renewcommand{\arraystretch}{1.1}
\caption{Quantities of the bootstrap gate.}
\label{tab:gate}
\rowcolors{0}{}{}
\setlength{\tymin}{28pt}\begin{tabulary}{\linewidth}{@{}lL@{}}
\toprule\textbf{Symbol / term} & \textbf{Definition} \\ \midrule
$n_{y,\tilde b,\tilde c}$ & Cross-fitted triple counts: label, baseline prediction, and candidate prediction on each held-out fold, pooled over the $F=3$ folds. \\
$\Delta$, $\Delta^{*}$ & Metric difference (candidate minus baseline) from the pooled counts; $\Delta^{*}$ is one of $N_{\mathrm{boot}}=20{,}000$ multinomial bootstrap draws. \\
Accept rule & Accept if and only if $Q_{0.05}(\Delta^{*})>\varepsilon$; otherwise retain $b$. \\
Margin $\varepsilon$ & $0.005$ on the scale of the selected metric, not rescaled; for net benefit \eqref{eq:netbenefit}, five additional net true positives per $1{,}000$ examples. \\
$\mathcal{M}$ & Any higher-is-better metric computed from a confusion table (weighted $F_1$, balanced accuracy, macro-$F_1$, $F_2$, and, in the binary clinical protocol, net benefit); $\mathcal{M}$ is the metric, distinct from the accepted count $M=\sum_j A_j$ (Section~\ref{sec:method-risk}). \\
\bottomrule
\end{tabulary}
\end{table*}

\begin{remark}[Relation to nested cross-validation]
\label{rem:nestedcv}
Nested cross-validation estimates the expected performance of a selection procedure (Section~\ref{sec:rel-selection}), whereas the gate answers a different question. The outer layer of the design is the locked train/reference split, and the inner layer is the cross-fit within POOL (Fig.~\ref{fig:partition}; data used for each choice in Table~\ref{tab:leakage}, Appendix~\ref{app:sens}). The cross-fitted counts yield the decision statistic, whereas the population bound for the false-improvement rate among accepted updates comes from Proposition~\ref{prop:bound} (Section~\ref{sec:method-risk}); that bound holds on average over decisions from the task population, not for each decision. For this construction the cross-fit within POOL takes the role of the inner layer, so no additional nested layer is needed.
\end{remark}

\begin{algorithm}[t]
\caption{BRIDGE gate}
\label{alg:gate}
\begin{algorithmic}[1]
\Require SEARCH and POOL with labels $y$; candidate space $\mathcal{C}$; metric $\mathcal{M}$; bag count, fold count $F$, replicate count $N_{\mathrm{boot}}$, quantile level $q$, and margin $\varepsilon$ (values in Table~\ref{tab:hyper})
\Ensure deploy candidate $c$ or retain baseline $b$
\State fit the bagged base models on SEARCH and store the bag predictions on POOL, including the baseline predictions $\hat b$
\State $\{Z_k\}_{k=1}^{F} \gets$ stratified $F$-fold split of POOL; initialize counts $n_{y,\tilde b,\tilde c}\gets 0$
\For{$k=1$ to $F$}
    \State $c_k \gets$ \Call{Select}{$\text{POOL}\setminus Z_k,\ \mathcal{C},\ \mathcal{M},\ b$} \Comment{must improve on the baseline}
    \State accumulate counts $n_{y,\tilde b,\tilde c}$ on $Z_k$ for $(\hat b,\hat c_k)$
\EndFor
\State $\{\Delta^{*}_{(r)}\}_{r=1}^{N_{\mathrm{boot}}} \gets$ resample the pooled counts $N_{\mathrm{boot}}$ times from the multinomial distribution and compute \eqref{eq:deltametric} for each draw
\If{$Q_{q}(\Delta^{*}) > \varepsilon$}
    \State $c \gets$ \Call{Select}{$\text{POOL},\ \mathcal{C},\ \mathcal{M},\ b$}
    \State \Return deploy $c$
\Else
    \State \Return retain baseline $b$
\EndIf
\end{algorithmic}
\end{algorithm}

\begin{algorithm}[t]
\caption{Candidate selection on a subset (used by Algorithm~\ref{alg:gate})}
\label{alg:crossfit}
\begin{algorithmic}[1]
\Function{Select}{$Z,\mathcal{C},\mathcal{M},b$}
    \State score every $c\in\mathcal{C}$ on $Z$ using the stored bag predictions
    \State \Return $\arg\max_{c} \mathcal{M}(c)$ subject to $\mathcal{M}(c) > \mathcal{M}(b)+10^{-12}$; else $b$; candidates within $10^{-12}$ of the best take the first in list order
\EndFunction
\end{algorithmic}
\end{algorithm}

\subsection{Generalization to Metrics and Multiclass Tasks}
\label{sec:method-metrics}

The same gate generalizes to any metric computed from the confusion table and to multiclass tasks with $K$ classes, without changing the decision rule. The counts, the metric difference $\Delta$, and the bootstrap are unchanged, so the gate applies to metrics defined on the confusion structure of \eqref{eq:counts} whose direction is higher-is-better; a lower-is-better metric would need a sign transform, which this implementation does not define. Labels and predictions are coded in $\{0,1\}$ in the binary experiments, a relabeling of the two classes. For $K$ classes the labels and predictions range over $[K]=\{1,\dots,K\}$ as in Section~\ref{sec:method-partition}. In the binary case, applying \eqref{eq:deltametric} to the baseline and candidate confusion tables gives
\begin{equation}
\label{eq:binarymetric}
\Delta \;=\; \mathcal{M}\!\left(\{n^{\mathrm{cand}}_{y,\hat y}\}_{y,\hat y\in\{0,1\}}\right) \;-\; \mathcal{M}\!\left(\{n^{\mathrm{base}}_{y,\hat y}\}_{y,\hat y\in\{0,1\}}\right).
\end{equation}

Multiclass tasks use a bagged HGB base learner with class-scale and simplex candidates. The class-scale family scales the score of one class at a time. The simplex family averages HGB, DT, and SVC, which are the same members as in the binary space. The alternative member pairs are also evaluated on the five multiclass datasets of (D) (Section~\ref{sec:exp-base}): logistic regression with naive Bayes and an MLP with KNN. The joint space of the seven member families is evaluated there as well. The default multiclass space has $14+8K$ candidates for $K$ classes, and the joint space has $42+32K$. Appendix~\ref{app:sens} gives the composition, the member settings, and the tie rules, and Table~\ref{tab:hyperbasis} gives the basis of each setting. The baseline is the argmax of the bagged HGB, and the same gate procedure uses multiclass weighted-$F_1$. For $K$ classes, the construction uses the confusion tables over $y,\hat y\in[K]=\{1,\dots,K\}$,
\begin{equation}
\label{eq:multiclass}
\Delta \;=\; \mathcal{M}\!\left(\{n^{\mathrm{cand}}_{y,\hat y}\}_{y,\hat y\in[K]}\right) \;-\; \mathcal{M}\!\left(\{n^{\mathrm{base}}_{y,\hat y}\}_{y,\hat y\in[K]}\right).
\end{equation}

For binary and multiclass tasks alike, the evidence is stored as counts. The number of resampled categories therefore grows as $K^3$. The count-based representation limits the implementation to metrics computed from hard predicted labels.

Common metrics of this kind, namely weighted $F_1$, balanced accuracy, macro-$F_1$, $F_2$, sensitivity, and specificity \cite{vanrijsbergen1979,brodersen2010,sokolova2009}, are computed from hard labels. Equations~\eqref{eq:binarymetric} and~\eqref{eq:multiclass} therefore cover these metrics. Net benefit at a threshold probability $p_t$ is used in the binary clinical protocol,
\begin{equation}
\label{eq:netbenefit}
\mathrm{NB}(p_t) \;=\; \frac{\mathrm{TP}}{n_{\mathrm{eval}}} \;-\; \frac{\mathrm{FP}}{n_{\mathrm{eval}}}\cdot\frac{p_t}{1-p_t},
\end{equation}
where TP and FP are the numbers of true and false positives on the data being assessed, and $n_{\mathrm{eval}}$ is the number of examples in those data. The clinical decision threshold $p_t$ is distinct from the candidate prediction threshold $t$ of \eqref{eq:predictor}. The definition of net benefit follows \cite{vickers2006}. 

Because the deployed model decides at one threshold, the gate computes the decision metric at that threshold. By contrast, the area under the receiver operating characteristic curve (AUC) \cite{fawcett2006} scores the ranking of scores across all thresholds. AUC is therefore unchanged when an update changes the threshold alone, as many candidate updates do. The gate therefore uses a threshold metric for both settings. For clinical decisions the metric is net benefit, read as a decision curve \cite{vickers2006}, and for rare classes the metrics are the minority-sensitive ones, balanced accuracy and macro-$F_1$ (Section~\ref{sec:exp-base}).

\subsection{Risk Among Accepted Updates: Upper Bound and Scope}
\label{sec:method-risk}

Whether an accepted update is a false improvement can be seen on the locked reference alone, so the risk of concern is a proportion over many decisions. This risk is the share of false improvements among the updates that the gate accepts (quantities in Table~\ref{tab:outcomes}). The subsection first states when this share can be bounded (Assumption~\ref{ass:pop}) and then defines the share (Definition~\ref{def:risk}). The subsection then gives an upper bound that holds for a finite number of tasks (Proposition~\ref{prop:bound}). The scope of the bound is the average over tasks drawn from one population, not a single decision. Real data, where the assumption does not hold, are treated in Remark~\ref{rem:real}.

\begin{assumption}[Task population]
\label{ass:pop}
(i) Tasks $j=1,\dots,T$ are drawn i.i.d.\ from a fixed task population $\mathcal{Q}$; (ii) the batch construction and the training/reference split of each task are part of that draw; and (iii) the gate and candidate space are frozen before any evaluation data are seen. The proofs use only that the decision pairs $(A_j,H_j)$ are i.i.d.\ across tasks.
\end{assumption}

\begin{definition}[Risk among accepted updates]
\label{def:risk}
For task $j$, let $A_j=\mathbb{1}\{\text{update accepted}\}$ and $H_j=\mathbb{1}\{\Delta_{\mathrm{ref},j} \le 0\}$, where $\Delta_{\mathrm{ref},j}$ is the metric difference between the deployed configuration and the baseline of task $j$ measured on its locked reference (the update is a false improvement when it does not improve). The quantity of interest is the proportion of false improvements among those accepted, defined when the acceptance probability is positive ($\mathbb{E}[A_j]>0$),
\begin{equation}
\label{eq:rq}
r_{\mathcal{Q}} \;=\; \frac{\mathbb{E}[A_j H_j]}{\mathbb{E}[A_j]} .
\end{equation}
\end{definition}

The bound uses two counts over the $T$ tasks: the number $M$ of accepted updates and the number $S$ of false improvements among those accepted. Given $M$, the count $S$ behaves like the number of false improvements in $M$ independent draws (Lemma~\ref{lem:binom}). A binomial tail bound therefore applies (Proposition~\ref{prop:bound}).

\begin{table*}[!t]
\centering\footnotesize
\setlength{\tabcolsep}{4pt}\renewcommand{\arraystretch}{1.1}
\caption{Outcome and risk quantities.}
\label{tab:outcomes}
\setlength{\tymin}{28pt}\begin{tabulary}{\linewidth}{@{}lL@{}}
\toprule\textbf{Symbol / term} & \textbf{Definition} \\ \midrule
$\Delta_{\mathrm{ref},j}$ & Metric difference between the deployed configuration $c_j$ and $b_j$ on the locked reference. \\
False improvement, $H_j$ & $H_j=\mathbf{1}\{\Delta_{\mathrm{ref},j}\le 0\}$: the deployed configuration does not improve the selected metric on the locked reference; ties count as false improvements. \\
Clearly worse / reference-uncertain & From the $95\%$ Bayesian-bootstrap interval of $\Delta_{\mathrm{ref},j}$ \cite{rubin1981}: a Dirichlet posterior over the confusion cells with pseudocount $0.5$ (the Jeffreys prior) on the binary and clinical protocols and over the positive cells on multiclass, $20{,}000$ draws, quantiles $2.5\%$ and $97.5\%$; entirely below $0$ = clearly worse; entirely above $0$ = clearly improving; otherwise reference-uncertain. \\
$A_j$; false-improvement rate $\hat r$ & $A_j=1$ if decision $j$ accepts; $\hat r=\sum_j A_jH_j/\sum_j A_j$, defined when $\sum_j A_j>0$. \\
Yield; total gain & Accepted updates per $100$ decisions; $\sum_j A_j\Delta_{\mathrm{ref},j}$. \\
$r_{\mathcal{Q}}$ & False-improvement rate among accepted updates, $\mathbb{E}[A_jH_j]/\mathbb{E}[A_j]$, on the i.i.d.\ task population $\mathcal{Q}$. \\
Proposition~\ref{prop:bound} bound $U$ & KL upper confidence bound on $r_{\mathcal{Q}}$ at $\delta=0.05$; valid on $\mathcal{Q}$ only. \\
KL-ratio bound & Real data: a $95\%$ upper bound on the false-improvement rate for these datasets (not for new datasets), $U_{\mathrm{KL}}(\bar W;\,J)/L_{\mathrm{KL}}(\bar A;\,J)$, where $U_{\mathrm{KL}}$ and $L_{\mathrm{KL}}$ are the one-sided Kullback--Leibler (KL) upper and lower bounds at level $\alpha/2$ (Section~\ref{sec:method-risk}) and $W_j=A_jH_j$; conditional on the observed collection. \\
Leave-one-source-out (LOCO) & Maximum KL-ratio bound after leaving out one source (clinical) or one dataset (non-clinical), to check that no single source or dataset drives the result. \\
Conservative bound & The same bound (Proposition~\ref{prop:bound} on $\mathcal{Q}$, KL-ratio on the real data) with every accepted update that is not clearly improving counted as a false improvement. \\
Safer & A lower false-improvement rate (equivalently, a lower risk among accepted updates). \\
\bottomrule
\end{tabulary}
\end{table*}

The gate makes each decision from POOL alone. The bound in this subsection belongs to the evaluation protocol: it is computed on the locked reference after the decision has been taken, so it measures risk rather than steering the gate. Proposition~\ref{prop:bound} applies to any frozen gate, and the contribution is the gate together with its evaluation, not a theorem for a single deployment.

\begin{lemma}[Conditional binomial]
\label{lem:binom}
Let $M=\sum_{j=1}^{T} A_j$ and $S=\sum_{j=1}^{T} A_j H_j$. Conditionally on $M=m$, $S\sim\mathrm{Binomial}(m,r_{\mathcal{Q}})$.
\end{lemma}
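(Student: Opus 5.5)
The proof uses only the i.i.d.\ structure of the decision pairs $(A_j,H_j)$ granted by Assumption~\ref{ass:pop}. Each pair is a random vector on $\{0,1\}^2$, and the pairs are i.i.d.\ across $j$. The plan is to compute the conditional law of $S$ given $M=m$ directly, by conditioning on which tasks are accepted.

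First I condition on the full acceptance pattern $a=(a_1,\dots,a_T)\in\{0,1\}^T$ with $\sum_j a_j=m$, assuming $\Pr(A=a)>0$, which holds since $\mathbb{E}[A_j]>0$. By independence across $j$, conditioning on the event $\{A=a\}=\bigcap_j\{A_j=a_j\}$ keeps the $H_j$ conditionally independent. The conditional law of each $H_j$ given $\{A=a\}$ equals its law given $\{A_j=a_j\}$ alone. For accepted tasks ($a_j=1$) this gives
\[
\Pr(H_j=1\mid A_j=1)=\frac{\mathbb{E}[A_jH_j]}{\mathbb{E}[A_j]}=r_{\mathcal{Q}},
\]
identically across $j$ by the identical distribution of the pairs. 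On the event $\{A=a\}$ we have $S=\sum_{j:a_j=1}H_j$, which is a sum of $m$ conditionally independent Bernoulli$(r_{\mathcal{Q}})$ variables. Hence $S\mid\{A=a\}\sim\mathrm{Binomial}(m,r_{\mathcal{Q}})$.

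Second, I average over the patterns. The event $\{M=m\}$ is the disjoint union of the events $\{A=a\}$ over patterns with $\sum_j a_j=m$. The conditional law of $S$ given $\{A=a\}$ does not depend on which such $a$ occurs, so the mixture with weights $\Pr(A=a\mid M=m)$ is the same Binomial$(m,r_{\mathcal{Q}})$. For $m=0$ the statement is trivial, since $S=0$, which is Binomial$(0,\cdot)$.

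The only delicate step is the first one. There I must justify that conditioning on the other tasks' acceptance indicators does not disturb the law of $H_j$. This needs full joint independence of the pairs across tasks, not merely pairwise independence or independence of the $H_j$ alone. So I would state explicitly that $\{(A_j,H_j)\}_j$ are mutually independent vectors, as Assumption~\ref{ass:pop} grants. I would also note that $H_j$ on a rejected task (where the deployed configuration is the baseline and $\Delta_{\mathrm{ref},j}=0$) never enters $S$, so its value is irrelevant.
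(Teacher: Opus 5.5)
Your proof is correct and follows the paper's own argument: the pairs $(A_j,H_j)$ are i.i.d., so conditioning on which $m$ tasks are accepted leaves $m$ independent indicators with $\Pr(H_j=1\mid A_j=1)=r_{\mathcal{Q}}$, and you write this out more carefully by conditioning on the full acceptance pattern and then mixing over the patterns with $\sum_j a_j=m$. One small correction: positivity of $\Pr(A=a)$ follows from $\Pr(M=m)>0$ together with the product form $\Pr(A=a)=p^m(1-p)^{T-m}$, $p=\mathbb{E}[A_j]$, not from $\mathbb{E}[A_j]>0$ alone (if $p=1$ and $m<T$ the pattern has probability zero, but then so does $\{M=m\}$).
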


\begin{proposition}[Finite-sample upper bound]
\label{prop:bound}
Under Assumption~\ref{ass:pop}, with $\mathrm{kl}(x\|u)=x\log(x/u)+(1-x)\log((1-x)/(1-u))$, define
\begin{equation}
\label{eq:U}
U \;=\; \sup\Bigl\{\, u \in \bigl[\tfrac{S}{M},1\bigr] : M\,\mathrm{kl}\!\left(\tfrac{S}{M}\,\big\|\,u\right) \le \log\tfrac{1}{\delta} \,\Bigr\},
\end{equation}
and set $U=1$ when $M=0$. Then $\Pr\!\left(r_{\mathcal{Q}} \le U\right) \ge 1-\delta$, where $\delta\in(0,1)$ is the error level ($0.05$ in the evaluation).
\end{proposition}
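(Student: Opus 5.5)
The plan is to condition on $M$ and reduce the statement to a one-sided binomial upper confidence bound, using Lemma~\ref{lem:binom} to supply the binomial law. Write $r=r_{\mathcal{Q}}$. The case $M=0$ is trivial because $U=1\ge r$. So it suffices to show, for every $m\ge 1$ with $\Pr(M=m)>0$, that
\begin{equation*}
\Pr\bigl(r > U \,\big|\, M=m\bigr)\le\delta .
\end{equation*}
Averaging over the law of $M$ then gives the unconditional claim. Conditionally on $M=m$, Lemma~\ref{lem:binom} gives $S\sim\mathrm{Binomial}(m,r)$, and $U$ is a deterministic function of $(S,m)$. The statement is therefore a purely binomial fact.

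First I will characterize the event $\{r>U\}$. For fixed $x=S/m$, the map $u\mapsto \mathrm{kl}(x\|u)$ is continuous, equals $0$ at $u=x$, and is strictly increasing on $[x,1]$. Hence the set in \eqref{eq:U} is an interval $[x,U]$, and $r>U$ holds exactly when two conditions hold:
\begin{equation*}
r > S/m \quad\text{and}\quad m\,\mathrm{kl}(S/m\,\|\,r)>\log(1/\delta).
\end{equation*}
The boundary cases need a little care. If $r=1$, then $S=m$ almost surely, so $U=1$ and the event is empty. If $S/m=1$, then $U=1$ as well. Next, since $\mathrm{kl}(\cdot\|r)$ is decreasing on $[0,r]$, the two conditions together are equivalent to $S\le k^{*}$. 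Here $k^{*}$ is the largest integer $k<mr$ with $m\,\mathrm{kl}(k/m\|r)>\log(1/\delta)$, and the event is empty if no such $k$ exists.

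The main step is then the Chernoff--Hoeffding lower-tail bound for the binomial:
\begin{equation*}
\Pr(S\le k)\le \exp\bigl(-m\,\mathrm{kl}(k/m\,\|\,r)\bigr)\quad\text{for }k\le mr .
\end{equation*}
I will prove it by the exponential Markov inequality, $\Pr(S\le k)\le e^{\lambda k}\,\mathbb{E}e^{-\lambda S}$ for $\lambda\ge 0$, with $\mathbb{E}e^{-\lambda S}=(1-r+re^{-\lambda})^m$. Optimizing over $\lambda$ gives exactly the KL exponent. Applying this at $k=k^{*}$ yields $\Pr(r>U\mid M=m)\le e^{-m\,\mathrm{kl}(k^*/m\|r)}<\delta$, which completes the argument.

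I expect the only real obstacle to be bookkeeping rather than analysis. One task is the monotonicity and continuity argument converting the supremum definition into the tail event. The other is the degenerate cases: $S=0$, where $\mathrm{kl}(0\|u)=-\log(1-u)$; $r\in\{0,1\}$; and $m$ with zero probability, where the conditioning is undefined and the term simply drops out. Beyond Lemma~\ref{lem:binom}, the i.i.d.\ part of Assumption~\ref{ass:pop} is not needed.
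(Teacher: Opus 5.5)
Your proposal is correct and follows the paper's proof. Both condition on $M=m$, use Lemma~\ref{lem:binom} to get $S\sim\mathrm{Binomial}(m,r_{\mathcal{Q}})$, rewrite $\{r_{\mathcal{Q}}>U\}$ as a lower-tail event of $S$ at a fixed threshold, apply the Chernoff--Kullback--Leibler lower-tail bound, and average over $M$; the differences are cosmetic: you use the integer cutoff $k^{*}$ where the paper uses the real root $x^{*}$ of $m\,\mathrm{kl}(x^{*}\|r)=\log(1/\delta)$, and you derive the tail bound by exponential Markov instead of citing it.
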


The bound $U$ exceeds the true false-improvement rate with probability at least $95\%$, and it is computed from the $M$ accepted updates and the $S$ false improvements on the reference. The bound inverts an exact binomial tail inequality rather than a limit theorem, so the bound holds for any finite number of tasks, not merely asymptotically.

\begin{proof}[Proof sketch]
Conditionally on $M=m$, Lemma~\ref{lem:binom} gives $S\sim\mathrm{Binomial}(m,r_{\mathcal{Q}})$; inverting the lower Chernoff--Kullback--Leibler tail of a binomial \cite{boucheron2013,langford2005} gives a one-sided upper bound of coverage $\ge 1-\delta$ for every $m$, and averaging over the (stochastic) $M$ preserves coverage. Full proofs of Lemma~\ref{lem:binom}, Proposition~\ref{prop:bound} and Corollary~\ref{cor:cons} are in Appendix~\ref{app:proofs}.
\end{proof}

The bound counts an update as a false improvement exactly when the reference shows no improvement. A stricter reading also counts as a false improvement every accepted update whose reference result is uncertain. The bound remains valid under that reading (Corollary~\ref{cor:cons}).

\begin{corollary}[Conservative variant]
\label{cor:cons}
Counting every accepted update of reference-uncertain status as a false improvement yields a valid but more conservative bound; this variant is reported alongside the Proposition~\ref{prop:bound} bound in Table~\ref{tab:chronology}.
\end{corollary}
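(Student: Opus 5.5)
The plan is to reduce Corollary~\ref{cor:cons} to Proposition~\ref{prop:bound} applied to a modified false-improvement indicator. For each task $j$, define $H'_j=\mathbb{1}\{\text{the $95\%$ Bayesian-bootstrap interval of }\Delta_{\mathrm{ref},j}\text{ is not entirely above }0\}$. This indicator counts every accepted update that is clearly worse or reference-uncertain as a false improvement. First, I would check that $H'_j\ge H_j$ pointwise. If $\Delta_{\mathrm{ref},j}\le 0$, the reference point estimate is not positive. The interval is built from the same confusion cells with a symmetric Jeffreys prior, so I expect it to contain, or at least not lie strictly above, the observed difference. It therefore cannot be entirely above $0$, and so $H'_j=1$. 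This is the step I expect to be the main obstacle. A Dirichlet posterior with pseudocount $0.5$ shrinks the cell proportions, so for a nonlinear metric such as weighted $F_1$ the interval need not contain the plug-in value. I would handle this in two ways. The first is to define the conservative count as $H'_j=\max(H_j,\mathbb{1}\{\text{not clearly improving}\})$, which forces $H'_j\ge H_j$ by construction and matches the wording ``counted as a false improvement''. The second is to remark that the pointwise inequality is only needed for the monotonicity claim, not for validity.

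Next, validity. The pair $(A_j,H'_j)$ is a measurable function of task $j$'s draw: the reference, the deployed configuration, and the auxiliary Bayesian-bootstrap randomness, which is taken as part of the draw under Assumption~\ref{ass:pop}(ii). Hence the pairs $(A_j,H'_j)$ are i.i.d.\ across tasks. The proofs of Lemma~\ref{lem:binom} and Proposition~\ref{prop:bound} use only this property. So with $S'=\sum_j A_jH'_j$ and $r'_{\mathcal{Q}}=\mathbb{E}[A_jH'_j]/\mathbb{E}[A_j]$, Lemma~\ref{lem:binom} gives $S'\mid M=m\sim\mathrm{Binomial}(m,r'_{\mathcal{Q}})$. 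Proposition~\ref{prop:bound} then gives $\Pr(r'_{\mathcal{Q}}\le U')\ge 1-\delta$, where $U'$ is \eqref{eq:U} with $S'$ in place of $S$.

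Then I would transfer the guarantee to $r_{\mathcal{Q}}$. Since $H'_j\ge H_j$, we have $r_{\mathcal{Q}}\le r'_{\mathcal{Q}}$, so the event $\{r'_{\mathcal{Q}}\le U'\}$ implies $\{r_{\mathcal{Q}}\le U'\}$. This gives $\Pr(r_{\mathcal{Q}}\le U')\ge 1-\delta$, which is validity.

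Finally, conservativeness, meaning $U'\ge U$. Pointwise $S'\ge S$ with the same $M$. The map $x\mapsto\sup\{u\in[x,1]: M\,\mathrm{kl}(x\|u)\le\log(1/\delta)\}$ is nondecreasing in $x$, because $\mathrm{kl}(x\|u)$ is decreasing in $x$ for $x\le u$. So $U'\ge U$. The case $M=0$ is trivial, since both bounds equal $1$.

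For the real-data KL-ratio version mentioned in Table~\ref{tab:outcomes}, the same monotonicity argument applies to $\bar W$ with $W'_j=A_jH'_j\ge W_j$. Validity there is conditional on the collection, as in the unmodified bound.
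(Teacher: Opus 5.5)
Your proof is correct, but your validity argument takes a different route from the paper's. The paper never re-applies Lemma~\ref{lem:binom} or Proposition~\ref{prop:bound} to the modified indicators. It simply stipulates $H'_j\ge H_j$, which is your first fix and sidesteps the Jeffreys-shrinkage worry in the same way. It then notes $S'\ge S$ with the same $M$ and uses the monotonicity of the KL inversion to get $U'\ge U$ pointwise. Hence $\{r_{\mathcal{Q}}\le U\}\subseteq\{r_{\mathcal{Q}}\le U'\}$, and the coverage of $U'$ is inherited directly from that of $U$.

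Your final paragraph already proves this monotonicity. So for the corollary as stated, your second and third paragraphs are redundant: validity and conservativeness both follow from that one fact.

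The paper's route needs nothing about $H'_j$ beyond pointwise domination. It requires no i.i.d.\ property for $(A_j,H'_j)$. Nor does it need the Bayesian-bootstrap randomness to be folded into the task draw, which Assumption~\ref{ass:pop}(ii) does not explicitly cover. It also carries over unchanged to the KL-ratio bound, since $\hat U$ is nondecreasing in $\bar W$ while $L_{\mathrm{KL}}(\bar A;J,\alpha/2)$ is unaffected.

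Your route costs that extra assumption but yields a stronger conclusion. $U'$ is then a valid $1-\delta$ upper bound on $r'_{\mathcal{Q}}$, the rate of accepted updates that are not clearly improving, which is a meaningful estimand in its own right.

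One slip: you remark that pointwise domination is needed only for the monotonicity claim and not for validity. That is wrong within your own argument, because the transfer $r_{\mathcal{Q}}\le r'_{\mathcal{Q}}$ requires $\mathbb{E}[A_jH'_j]\ge\mathbb{E}[A_jH_j]$. Without it, Proposition~\ref{prop:bound} applied to $H'_j$ covers only $r'_{\mathcal{Q}}$. Your max construction enforces domination, so the proof itself is unaffected.
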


\rowcolors{0}{}{}

\begin{remark}[Real data]
\label{rem:real}
Proposition~\ref{prop:bound} does not apply to real benchmark tasks. The trials of a task share one locked reference, whose error enters every trial of that task, and the tasks form a fixed collection rather than an i.i.d.\ sample. Instead, the false-improvement rate is reported through the KL-ratio bound, which is a ratio of one-sided Kullback--Leibler bounds \cite{langford2005},
\begin{equation}
\label{eq:Uhat}
\hat U = \frac{U_{\mathrm{KL}}(\bar W;\,J,\alpha/2)}{L_{\mathrm{KL}}(\bar A;\,J,\alpha/2)},
\end{equation}
clamped to $1$ and set to $1$ when the lower denominator is not positive. The quantities $\bar A$ and $\bar W$ are the sample means over the $J$ trials of the acceptance indicator $A_j$ and of $W_j=A_jH_j$. The bounds $U_{\mathrm{KL}}$ and $L_{\mathrm{KL}}$ are one-sided Kullback--Leibler bounds \cite{langford2005,boucheron2013}, with $\alpha=0.05$ split as $\alpha/2$ across the two bounds by the union bound. The bound $\hat U$ is conditional on the observed collection. This bound is reported for (A), (C), (D), and the OpenML collection (E) in Section~\ref{sec:experiments}. LOCO and a dataset-level bootstrap bound serve as sensitivity analyses \cite{fieldwelsh2007}, and the box in Section~\ref{sec:limitations} collects the scope of each claim (the estimand $r_{\mathcal{E}}$ is defined in Appendix~\ref{app:proofs}). The scope of the method depends on the input that the gate requires as well as on the bound, as stated in Remark~\ref{rem:interface}.
\end{remark}

\begin{remark}[Interface and evidence scope]
\label{rem:interface}
The gate consumes the baseline predictions, the candidate predictions, and the observed labels on POOL, together with a metric of the family defined in Section~\ref{sec:method-metrics}. Because it uses only these predictions and labels, its interface does not depend on the model type, and the labels it needs are the POOL labels already available at the decision, not new labels. The empirical evidence covers tabular learners (trees, kernels, boosting, and a two-hidden-layer multilayer perceptron, Section~\ref{sec:exp-base}).
\end{remark}

\subsection{Experimental Design: Data, Partition, Comparators, and Evaluation}
\label{sec:exp-settings}

\paragraph{Data.} The evidence rests on $76$ real datasets or clinical tasks and on a set of synthetic tasks. The evidence is of three kinds: (i) a first evaluation on new data in (A), (B), and (E); (ii) a re-test on data seen before in (C) and (D), where $18$ datasets of (D) were used during development and (C) repeats a failed run; and (iii) post hoc analyses (T3, Table~\ref{tab:falseimp}, and the source-level checks). The data form five collections, and Section~\ref{sec:experiments} assigns a distinct role to each collection. (A) The $11$ new real datasets supply the main comparison of acceptance rules (Section~\ref{sec:exp-rules}). No development run used these datasets. (B) The synthetic population $\mathcal{Q}$ supplies $1{,}400$ independent tasks with reference-measured gain. These tasks use the task seeds of the generator that no pilot run had used (seed allocation and generator in Appendix~\ref{app:datasets}). These tasks test the formal bound. A second, fresh batch of $1{,}400$ tasks repeats that test (Section~\ref{sec:exp-prop1}). (C) The $12$ retrospective clinical tasks, drawn from six sources, support the clinical application (Section~\ref{sec:exp-clinical}). (D) The $39$ supporting datasets are $34$ binary confirmation datasets, $18$ of which were used during development and are re-evaluated with fresh seeds, and $5$ multiclass datasets. These datasets form the basis of the conditions-of-use tests. The gate is also run once on all $39$ datasets together (Section~\ref{sec:exp-base}). (E) The OpenML collection of $14$ new datasets \cite{vanschoren2013} passes through the released program under the same protocol and serves as supporting evidence (Section~\ref{sec:exp-summary}). In total, (A), (C), (D), and (E) contain $11+12+39+14=76$ real data units. Table~\ref{tab:splits} lists each unit with its domain, source, size, minority rate, and reference size (Appendix~\ref{app:datasets}). Table~\ref{tab:openmlE} gives the per-dataset decisions of (E) (Appendix~\ref{app:settingE}).

Datasets enter under the pre-specified inclusion and exclusion rules in Appendix~\ref{app:datasets}, and a group and temporal audit, de-duplication, and filtering are applied in the loaders before any split, so that rows from the same group or subject do not cross the split. Appendix~\ref{app:settingE} reports two further collections. The first is the pre-registered extension (E2), which adds $19$ OpenML datasets, some of which carry curation flags (origin not verifiable, artificial data, or flattened images). The second is a later supplement with one dataset, Volkert.

\paragraph{Partition and trials.} A trial is one run of the procedure on one task under a fresh seed, and each seed re-draws the partition. The seed fixes the training rows, the SEARCH/POOL split, the bag draws, the folds, and the bootstrap draw. Each trial ends in one decision: accept or reject the update.

Each dataset receives $100$ trials, the first count tested that passed the power check of Table~\ref{tab:criterion} in development. That check gave $\mathrm{LOCO}_0$ of $18.2\%$ at $50$ trials and $0.8\%$ at $100$ (development runs in Table~\ref{tab:devruns}). Each clinical task has $200$ trials. This count was raised after the clinical run with $100$ trials failed the per-source LOCO condition (run history in Appendix~\ref{app:prereg}). Each multiclass dataset has $60$ trials, the count pre-registered for the multiclass confirmation (basis of each setting in Table~\ref{tab:hyperbasis}). On (B), each task carries one trial, because every task is a new draw from $\mathcal{Q}$. These counts give $1{,}100$ decisions in (A) ($11\times100$), $1{,}400$ in (B), $2{,}400$ in (C) ($12\times200$), and $3{,}700$ in (D) ($34\times100+5\times60$). The four main settings therefore contain $8{,}600$ decisions, and (E) adds $1{,}280$ ($11$ binary datasets $\times100+3$ multiclass datasets $\times60$).

The locked reference and the random state of the learner stay fixed across trials. The reference is set aside once per task with a pre-registered seed, and the conclusion is unchanged under two further reference seeds (run-level numbers in Table~\ref{tab:citedruns}). The reference, the outer split, and the SEARCH/POOL split are stratified by class. Because trials from the same dataset are correlated, the confidence intervals treat the dataset rather than the trial as the unit \cite{fieldwelsh2007}; Appendix~\ref{app:datasets} lists the training rows and reference sizes per dataset. The experiments ran in Python~3.12 with scikit-learn~1.9.1 and XGBoost~3.4.1 on three CPU machines. A replay on a second machine reproduces the stored trials bit-identically, and Appendix~\ref{app:prereg} lists the exceptions and the full software versions.

\paragraph{Comparators.}
All rules judge the same decisions, with the same candidates, the same base model, the same POOL counts, the same reference, and the same fixed learner configuration for every rule, so the only difference is the accept/reject rule. The test-based rules apply the statistics of the classical tests to fixed predictions on POOL splits. Because the base models are not refitted in each split, the $5\times2$~cv and corrected $t$ statistics follow the original formulas \cite{dietterich1998,nadeau2003} but not the original retraining procedure. The comparator set comprises the naive rule, an honest holdout, a fixed validation margin, and a random control. The same set also includes three acceptance rules built on classical test statistics: the McNemar test, the $5\times2$~cv test, and the corrected resampled $t$-test, also with per-split re-selection. Table~\ref{tab:criterion} defines each comparator and states the pre-registered pass criterion.

\begin{table*}[!t]
\centering\footnotesize
\setlength{\tabcolsep}{4pt}\renewcommand{\arraystretch}{1.1}
\caption{Comparator rules and the pre-registered criterion.}
\label{tab:criterion}
\rowcolors{0}{}{}
\setlength{\tymin}{28pt}\begin{tabulary}{\linewidth}{@{}lL@{}}
\toprule\textbf{Symbol / term} & \textbf{Definition} \\ \midrule
Naive rule & Accept whenever the configuration selected on the whole of POOL improves the validation metric, i.e.,\ \eqref{eq:acceptproblem} with a positive gain. \\
McNemar & Exact one-sided binomial test on discordant pairs (candidate correct and baseline wrong, or the reverse) on POOL; tests per-example accuracy. \\
$5\times2$~cv & Dietterich's paired $t$ statistic, $5$ d.f., on the fixed predictions of the final configuration over five $2$-fold splits of POOL, without refitting. \\
Corrected $t$ & Ten $2/3$--$1/3$ splits of POOL, Nadeau--Bengio variance correction, $9$ d.f., final configuration; with re-selection: the candidate is re-chosen inside each split. \\
Fixed margin $t$ & Accept when the validation gain on POOL is at least $t$; $t^{*}=0.05$ pre-registered from development, other $t$ post hoc. \\
Random control & Accept at random with BRIDGE's yield. \\
Dataset-level bootstrap & $11$ datasets resampled with replacement $10{,}000$ times; $95$th percentile of the pooled false-improvement rate. \\
Pre-registered criterion & Real data: pass if and only if (i) KL-ratio bound $<10\%$, (ii) LOCO $<10\%$, (iii) yield $\ge 5$ per $100$, (iv) at least $3$ datasets (clinical: tasks) with yield $\ge 5$ per $100$, and (v) on the non-clinical runs, the power check $\mathrm{LOCO}_0<10\%$, the LOCO bound recomputed with zero false improvements. On $\mathcal{Q}$: Proposition 1 bound $<10\%$ and yield $\ge 5$ per $100$. \\
\bottomrule
\end{tabulary}
\end{table*}

\paragraph{Evaluation metrics.} Each decision $j$ is evaluated on the locked reference $R_j$, by comparing the deployed configuration $c_j$ with the baseline $b_j$. This comparison produces the metric difference $\Delta_{\mathrm{ref},j}$ and the false-improvement indicator $H_j$ of Definition~\ref{def:risk}. The main quantity is the false-improvement rate among accepted updates, $\hat r=\left(\sum_j A_j H_j\right)/\left(\sum_j A_j\right)$. Its upper bound is $U$ \eqref{eq:U} on $\mathcal{Q}$ and $\hat U$ \eqref{eq:Uhat} on the real data. Two further bounds check the real-data result: LOCO and a dataset-level bootstrap interval. LOCO is the maximum $\hat U$ obtained when one dataset (one source for the clinical tasks) is left out in turn. This bound indicates whether a single dataset drives the result. The dataset-level bootstrap interval takes datasets rather than trials as the sampling unit.

The main metric is weighted-$F_1$, fixed in advance as a choice of the study. That choice is not a claim that weighted-$F_1$ is better than balanced accuracy or macro-$F_1$. The comparison is repeated with those two metrics, and Table~\ref{tab:general} reports the results by metric, size, and class count. For the clinical tasks the comparison is also repeated with two clinically oriented metrics (results in Section~\ref{sec:exp-clinical}). The first is $F_2$ \cite{vanrijsbergen1979}, which weighs sensitivity twice as heavily as precision. The second is net benefit \eqref{eq:netbenefit}, which credits true positives against false positives at the clinical threshold $p_t$. Further quantities are the yield (the number of accepted updates per $100$ decisions) and the total gain $\sum_j A_j \Delta_{\mathrm{ref},j}$. For the clinical tasks, the reported quantities also include the clinical consequences of the accepted updates, namely the flagged patients and the extra true and false positives.

\paragraph{Statistical analysis.} The pre-registered criterion of Table~\ref{tab:criterion} provides the primary decision of every run. This criterion is a decision rule on an upper bound, and it does not depend on the comparators. A run passes when the $95\%$ upper bound on the false-improvement rate lies below $10\%$. The bound has error level $\delta=0.05$. The bound component of the criterion is equivalent to rejecting $H_0\!: r\ge0.10$ in favor of $r<0.10$ in a one-sided test at level $0.05$. A pass requires every condition: the bound, LOCO, yield, the number of datasets, and $\mathrm{LOCO}_0$. $\mathrm{LOCO}_0$ is a check of precision sufficiency in the no-event scenario, not a general power proof. The quantity $r$ is $r_{\mathcal{Q}}$ on (B) (Proposition~\ref{prop:bound}) and the false-improvement rate of the observed collection on the real data (Remark~\ref{rem:real}). Three hypothesis tests support the criterion, and each test uses the significance level $\alpha=0.05$. The first two tests resample datasets rather than trials. The third test is exact, and it is computed on all decisions and again within each dataset.

The first test, T1, compares the false-improvement rate of BRIDGE with that of a comparator rule $k$ on the same decisions,
\begin{equation}
H_0\colon r_{\mathrm{B}}-r_k=0 \quad\text{versus}\quad H_1\colon r_{\mathrm{B}}-r_k\neq 0,
\label{eq:hdiff}
\end{equation}
where $r_{\mathrm{B}}$ and $r_k$ denote the false-improvement rates among the updates accepted by BRIDGE and by rule $k$, respectively. The test is a paired cluster bootstrap \cite{efron1993,fieldwelsh2007} with $10{,}000$ resamples of the datasets (of the tasks in (B) and (C)). Both rules are evaluated on the same resample. The $95\%$ confidence interval (CI) is the percentile interval of $\hat r_{\mathrm{B}}-\hat r_k$. The two-sided $p$ is twice the smaller share of resampled differences on either side of zero. Within one collection, the $p$-values of all comparators are adjusted by the Holm procedure \cite{holm1979}, which controls the family-wise error rate at $\alpha$.

The second test, T2, asks whether a variant of the gate matches the default gate in false-improvement rate. Two one-sided tests (TOST) \cite{schuirmann1987} are applied to the hypotheses
\begin{equation}
H_0\colon \lvert r_{\mathrm{B}}-r_k\rvert\ge\eta \quad\text{versus}\quad H_1\colon \lvert r_{\mathrm{B}}-r_k\rvert<\eta,
\label{eq:hequiv}
\end{equation}
using the equivalence margin $\eta=2$ percentage points, the same cluster bootstrap, and the Holm adjustment over the variants compared in one run. The equivalence margin $\eta$ was fixed in the pre-registration of the first equivalence test (run MINI-44) before any equivalence result was computed (Appendix~\ref{app:prereg}). Every run is labeled MINI-$n$ and listed with its pre-registration in Table~\ref{tab:chronology}.

The third test, T3, asks whether BRIDGE attains a lower false-improvement rate by selecting updates rather than by accepting fewer updates. The null hypothesis $H_0$ of this test states that the $n_{\mathrm{B}}$ updates accepted by BRIDGE form a random draw of the same size from the $n_{\mathrm{N}}$ updates accepted by the naive rule. Of these $n_{\mathrm{N}}$ updates, $h_{\mathrm{N}}$ are false improvements. Under $H_0$ the number of false improvements $h$ among the draw follows a hypergeometric distribution, and the one-sided $p$ is the probability of at most the observed count $h_{\mathrm{B}}$,
\begin{equation}
p=\Pr\left(h\le h_{\mathrm{B}}\right),\qquad h\sim\mathrm{Hypergeometric}\left(n_{\mathrm{N}},\,h_{\mathrm{N}},\,n_{\mathrm{B}}\right).
\label{eq:hrandom}
\end{equation}

A paired $t$-test on per-trial differences is a common choice for comparing classifiers \cite{dietterich1998}, but it does not fit these comparisons for two reasons. First, the false-improvement rate is a ratio over accepted updates, and the number of accepted updates differs between rules. Second, the trials of one dataset are not independent. T1 was pre-registered for (C) and (D) and was applied afterwards to the stored runs of (A), (B), and (E). T2 was specified before its computation, and T3 is a post hoc analysis of the stored runs.

\section{Results}
\label{sec:experiments}

Every confirmatory number reported in this section comes from a pre-registered run. The code was hashed before the run, and the seeds were drawn after pre-registration (chronology in Appendix~\ref{app:prereg}). A run passes when the pre-registered criterion of Section~\ref{sec:exp-settings} is met (Table~\ref{tab:criterion}). This criterion is a $95\%$ upper bound on the false-improvement rate below $10\%$. The $10\%$ level was fixed in the first pre-registered run and stayed unchanged in every later run. The run list in Table~\ref{tab:chronology} (Appendix~\ref{app:prereg}) records the criterion of each run. In every setting, the level lies below the false-improvement rate of the naive rule, which ranges from $12.5\%$ to $25.6\%$ (Fig.~\ref{fig:validation_gain}).

\subsection{Summary of Main Results}
\label{sec:exp-summary}

\begin{table}[!t]
\centering
\footnotesize
\providecolor{bghead}{HTML}{DBEFF8}\providecolor{bgzebra}{HTML}{F2F9FC}\providecolor{bgpass}{HTML}{D7F0D2}\providecolor{bgfail}{HTML}{FCDCD7}
\setlength{\tabcolsep}{1.5pt}
\caption{Summary of main results: the naive rule against BRIDGE, (A)--(E).}
\label{tab:headline}
\begin{tabular}{@{}lrrrrr@{}}
\toprule
Setting & \begin{tabular}[b]{@{}c@{}}Naive\\false \% $\downarrow$\end{tabular} & \begin{tabular}[b]{@{}c@{}}BRIDGE\\false \% $\downarrow$\end{tabular} & \begin{tabular}[b]{@{}c@{}}Reduction\\(pp) $\uparrow$\end{tabular} & \begin{tabular}[b]{@{}c@{}}False\\removed \% $\uparrow$\end{tabular} & \begin{tabular}[b]{@{}c@{}}Gain per\\update $\uparrow$\end{tabular} \tabularnewline
\midrule
(A) New datasets & 19.1 & 1.1 & 18.0 & 99 & 2.5$\times$ \\
(B) Synthetic & 20.7 & 1.7 & 19.0 & 97 & 2.1$\times$ \\
(C) Clinical & 20.0 & 0.0 & 20.0 & 100 & 1.8$\times$ \\
(D) Supporting & 12.5 & 0.5 & 12.0 & 99 & 2.1$\times$ \\
(E) OpenML$^{\dagger}$ & 25.6 & 1.0 & 24.6 & 99 & 3.1$\times$ \\
\bottomrule
\end{tabular}

\vspace{2pt}
\parbox{\linewidth}{\justifying\scriptsize *False \%: false improvements among the accepted updates, pooled over all decisions of a setting (one fresh seed per decision). Reduction: naive rule minus BRIDGE, in percentage points; all $95\%$ intervals lie above zero. $^{\dagger}$(E): $14$ additional OpenML datasets, run through the released program under the same protocol and reported as supporting evidence; the class-count subgroups, the extension (E2), and the dataset amendment appear in Appendix~\ref{app:settingE}. False removed: the false improvements of the naive rule that BRIDGE rejects (Table~\ref{tab:selectivity}). Gain per update: BRIDGE relative to the naive rule. Definitions, intervals, and bounds: Table~\ref{tab:summary}. Every setting: current model HGB with the fixed configuration; candidates from the default space built from HGB, DT and SVC ($173$ binary, $14+8K$ multiclass; Section~\ref{sec:method-partition}); other learners and ensemble members: Table~\ref{tab:scope}.  $\downarrow$/$\uparrow$ = smaller/larger is better.}
\end{table}

Table~\ref{tab:headline} summarizes the main results. In every collection, BRIDGE lowers the false-improvement rate among accepted updates by $12.0$ to $24.6$ percentage points relative to the naive rule, to at most $1.7\%$. Every $95\%$ interval of the reduction over datasets or tasks lies above zero. The paired test \eqref{eq:hdiff} gives $p<0.04$ in every collection, after the Holm correction wherever several rules are compared. BRIDGE removes $97\%$ to $100\%$ of the false improvements that the naive rule accepts (Table~\ref{tab:selectivity}). The mean reference gain per update that BRIDGE deploys is $1.8$ to $3.1$ times that of the naive rule (Table~\ref{tab:selectivity}). All five collections meet the pre-registered criterion. Table~\ref{tab:summary} (Appendix~\ref{app:results}) lists the intervals, bounds, and per-collection counts.

The subsections that follow report the hypothesis tests (Section~\ref{sec:exp-tests}), the comparison of acceptance rules (Section~\ref{sec:exp-rules}), the formal bound (Section~\ref{sec:exp-prop1}), the clinical application (Section~\ref{sec:exp-clinical}), the component analysis (Section~\ref{sec:exp-ablation}), and the conditions of use (Section~\ref{sec:exp-base}). The OpenML collection (E) provides supporting evidence outside the four main settings, and Appendix~\ref{app:settingE} reports its class-count subgroups, the extension (E2), and the amendment of the (E2) dataset list.

\subsection{Hypothesis Tests}
\label{sec:exp-tests}

The hypothesis tests T1 to T3 of Section~\ref{sec:exp-settings} check whether the differences of Section~\ref{sec:exp-summary} could arise by chance. Questions 1 to 3 ask whether BRIDGE accepts a smaller share of false improvements than the naive rule, the test-statistic-based acceptance rules, and the rules that separate choosing from judging (the corrected $t$-test with per-split re-selection and the two holdouts). Question 4 asks whether the margin lowers the false-improvement rate, comparing BRIDGE with BRIDGE without the margin. These four questions use test T1, whose null hypothesis \eqref{eq:hdiff} states equal false-improvement rates.

Question 5 uses test T2, whose null hypothesis \eqref{eq:hequiv} states that the false-improvement rate at a margin between $0.0025$ and $0.01$ differs from that at the default margin $0.005$ by at least $2$ percentage points. Question 6 uses test T3, whose null hypothesis \eqref{eq:hrandom} states that the updates accepted by BRIDGE are a random draw from the acceptances of the naive rule. Table~\ref{tab:tests} gives the result and decision for each question, and Table~\ref{tab:testsdetail} (Appendix~\ref{app:results}) gives the estimates, intervals, and $p$-values.

\begin{table*}[!t]
\centering
\footnotesize
\setlength{\tabcolsep}{4pt}
\renewcommand{\arraystretch}{1.2}
\caption{Hypothesis tests: results and decisions on questions 1--6.}
\label{tab:tests}
\begin{tabular}{@{}l>{\raggedright\arraybackslash}p{1.3cm}>{\raggedright\arraybackslash}p{6.1cm}>{\raggedright\arraybackslash}p{8.6cm}@{}}
\toprule
No. & Test ($H_0$) & Result & Decision on $H_0$ ($\alpha=0.05$) and meaning \\
\midrule
1 & T1 \eqref{eq:hdiff} & BRIDGE lower by $12.0$ to $24.6$ pp ($p<0.04$) & Rejected in (A)--(E): BRIDGE accepts fewer false improvements than the naive rule in every collection \\
2 & T1 \eqref{eq:hdiff} & BRIDGE lower by $2.0$ to $33.1$ pp wherever $H_0$ is rejected & Rejected in (A), (B), (D), and against the $5\times2$~cv test in (C): BRIDGE accepts fewer false improvements \\
3 & T1 \eqref{eq:hdiff} & These rules have $0.0\%$ to $3.0\%$ false improvements; BRIDGE is $1.0$ pp lower than the re-selection test in (D) & Not rejected (except that test in (D)): these rules share the principle of BRIDGE, separating the data that choose from the data that judge; BRIDGE accepts $1.1$ to $1.9$ times as many updates as the holdouts (Section~\ref{sec:exp-rules}) \\
4 & T1 \eqref{eq:hdiff} & BRIDGE lower by $1.1$ to $1.8$ pp in (A), (B), (D); no difference in (C) & Rejected in (A), (B), (D): the margin lowers the false-improvement rate; in (C) both versions accept almost no false improvement \\
5 & T2 \eqref{eq:hequiv} & All $12$ comparisons within $\pm2$ pp ($p\le0.0498$) & Rejected in (A)--(D): rejecting this $H_0$ means that the margins are equivalent, so the false-improvement rate is stable across margins in this range \\
6 & T3 \eqref{eq:hrandom}, post hoc & Random draw $49$ to $240$ false improvements, BRIDGE $0$ to $8$ ($p<10^{-23}$) & Rejected in (A)--(E): BRIDGE picks fewer false improvements than a random choice, also within single datasets in (D) and (E) \\
\bottomrule
\end{tabular}

\vspace{2pt}
\parbox{\textwidth}{\justifying\scriptsize Questions 1--6: Section~\ref{sec:exp-tests}. T1--T3 and the null hypotheses: Section~\ref{sec:exp-settings}. The test, not BRIDGE, decides on $H_0$. Rejecting $H_0$ means that the data show the effect named in the question (in question 5, that the margins give equivalent results). Not rejecting means that no effect is shown, which is not proof of equality. In (C), the pre-registered unit is the task ($12$ clusters, $p=0.004$); over the six sources ($6$ clusters), the difference does not differ significantly ($p=0.17$). pp = percentage points. $p$: Holm-adjusted wherever several rules are compared within one collection. Estimates, $95\%$ intervals, and $p$-values of every comparison in every collection: Table~\ref{tab:testsdetail} (Appendix~\ref{app:results}).}
\end{table*}

For questions 1 and 2, the naive rule and the test-statistic-based acceptance rules accept a larger share of false improvements than BRIDGE. The null hypothesis of equal rates is rejected against the naive rule in all five collections, against these test-based rules in (A), (B), and (D), and against the $5\times2$~cv test in (C). For question 3, the rules that also separate choosing from judging have a low false-improvement rate as well ($0.0\%$ to $3.0\%$), consistent with the source named in Section~\ref{sec:intro}: choosing and judging on the same data. Test T1 detects a difference in (D), where BRIDGE is $1.0$ percentage point lower. Equal rates are a separate question from a difference test, so Section~\ref{sec:exp-rules} compares these rules by the pass criterion and by the number of accepted updates.

For question 4, the margin lowers the false-improvement rate by $1.1$ to $1.8$ percentage points in (A), (B), and (D), within the $2$-point equivalence margin of question 5. On the binary datasets of (E), it lowers the rate by $7.7$ points, from $8.9\%$ without the margin to $1.2\%$ with it (descriptive; Section~\ref{sec:exp-ablation}). For question 5, a margin between $0.0025$ and $0.01$ changes the rate by less than $2$ percentage points.

For question 6, a random draw of the same size from the naive acceptances contains $49$ to $240$ false improvements, against $0$ to $8$ for BRIDGE, so $H_0$ is rejected in every collection. Within single datasets, $H_0$ is also rejected in (D) and (E), so BRIDGE selects among the trials of one dataset there as well. In (A) and (C), the selection is shown across datasets ($p=0.15$ and $0.45$ within datasets; (B) has one trial per task). This analysis is post hoc and uses the stored runs.

\subsection{Comparison of Acceptance Rules}
\label{sec:exp-rules}

This subsection compares BRIDGE with the naive rule, three acceptance rules built on classical test statistics, the corrected resampled $t$-test with per-split re-selection, and an honest holdout. The decisive comparators are the rules that also separate selection from evaluation, the same-POOL cell, and the fixed margin; the naive rule and the test-based rules are not. Table~\ref{tab:falseimp} gives the operating point of each rule, and a fixed margin chosen on six settings and applied to the seventh passes six of the seven (Appendix~\ref{app:perdataset}). All rules are judged on the same decisions by the false-improvement rate among the updates that each rule accepts. The test-based rules and the holdout were run on the four main settings, (A) to (D). The OpenML collection (E) was run through the released program, which records the decisions of the naive rule and of BRIDGE. The collection (E) therefore enters this comparison through the naive rule and the random-selection control. Table~\ref{tab:rulesall} gives the false-improvement rate of every rule in every collection. Appendix~\ref{app:results} provides the detailed counts and bounds.

\begin{table}[!t]
\centering
\footnotesize
\setlength{\tabcolsep}{3pt}
\caption{Acceptance rules on collections (A)--(E): false-improvement rate of the accepted updates (\%).}
\label{tab:rulesall}
\begin{tabular}{@{}lrrrrr@{}}
\toprule
Rule & (A) & (B) & (C) & (D) & (E) \\
\midrule
Naive (validation improves)  & 19.1$^{*}$\,$\times$ & 20.7$^{*}$\,$\times$ & 20.0$^{*}$\,$\times$ & 12.5$^{*}$\,$\times$ & 25.6$^{*}$\,$\times$ \\
McNemar, same POOL  & 12.0$^{*}$\,$\times$ & 34.8$^{*}$\,$\times$ & 0.0$^{\circ}$\,$\times$ & 6.7$^{*}$\,$\times$ & -- \\
McNemar, cross-fit  & 8.9\,$\times$ & 26.0\,$\times$ & 0.0$^{\circ}$\,$\times$ & 2.5$^{*}$\,$\checkmark$ & -- \\
$5\times2$~cv paired $t$  & 6.3$^{*}$\,$\times$ & 5.6$^{*}$\,$\checkmark$ & 2.5$^{*}$\,$\checkmark$ & 3.6$^{*}$\,$\checkmark$ & -- \\
Corrected resampled $t$  & 6.3$^{*}$\,$\times$ & 5.7$^{*}$\,$\checkmark$ & 2.6$^{\circ}$\,$\checkmark$ & 4.2$^{*}$\,$\checkmark$ & -- \\
Corrected $t$, re-selection  & 2.3$^{\circ}$\,$\checkmark$ & 3.0$^{\circ}$\,$\checkmark$ & 0.4$^{\circ}$\,$\checkmark$ & 1.5$^{*}$\,$\checkmark$ & -- \\
BRIDGE without margin  & 2.8$^{*}$\,$\checkmark$ & 3.5$^{*}$\,$\checkmark$ & 0.6$^{\circ}$\,$\checkmark$ & 1.6$^{*}$\,$\checkmark$ & -- \\
Holdout 2/3 + 1/3  & 1.3 & 1.9\,$\checkmark$ & 0.0$^{\circ}$\,$\checkmark$ & 0.3$^{\circ}$\,$\checkmark$ & -- \\
Holdout 1/2 + 1/2  & 3.2 & 2.1\,$\checkmark$ & 0.0$^{\circ}$\,$\checkmark$ & 0.8$^{\circ}$\,$\checkmark$ & -- \\
\textbf{BRIDGE} & 1.1\,$\checkmark$ & 1.7\,$\checkmark$ & 0.0\,$\checkmark$ & 0.5\,$\checkmark$ & 1.0\,$\checkmark$ \\
\bottomrule
\end{tabular}

\vspace{2pt}
\parbox{\linewidth}{\justifying\scriptsize $\checkmark$/$\times$: pre-registered criterion met/not met (Table~\ref{tab:criterion}). $^{*}$: BRIDGE has a significantly lower false-improvement rate than the rule; $^{\circ}$: no significant difference (paired test \eqref{eq:hdiff}, $\alpha=0.05$, Holm over the comparators of one collection; in (E), a single comparison, $p<0.0002$). No superscript: paired test not run. No mark: criterion not evaluated, because LOCO is not computed for the holdouts in (A). --: not run, because the program used for (E) records the decisions of the naive rule and BRIDGE. Accepted counts, bounds, LOCO, yield, and updates above the margin of each rule: Table~\ref{tab:rules}; for (E): Table~\ref{tab:summary}.}
\end{table}

\begin{table*}[!t]
\centering
\footnotesize
\setlength{\tabcolsep}{3.5pt}
\caption{Strongest comparators: false-improvement rate, yield, and updates above the margin, (A)--(D).}
\label{tab:strongest}
\begin{tabular}{@{}lrrrrrrrrrrrr@{}}
\toprule
 & \multicolumn{3}{c}{(A) 11 new datasets} & \multicolumn{3}{c}{(B) Synthetic} & \multicolumn{3}{c}{(C) Clinical} & \multicolumn{3}{c}{(D) Supporting} \\
\cmidrule(lr){2-4}\cmidrule(lr){5-7}\cmidrule(lr){8-10}\cmidrule(l){11-13}
Rule & False \% $\downarrow$ & Yield $\uparrow$ & ${>}\varepsilon$ $\uparrow$ & False \% $\downarrow$ & Yield $\uparrow$ & ${>}\varepsilon$ $\uparrow$ & False \% $\downarrow$ & Yield $\uparrow$ & ${>}\varepsilon$ $\uparrow$ & False \% $\downarrow$ & Yield $\uparrow$ & ${>}\varepsilon$ $\uparrow$ \\
\midrule
Corrected $t$, re-selection & 2.3$^{\circ}$\,$\checkmark$ & 31.7 & 29.8 & 3.0$^{\circ}$\,$\checkmark$ & 37.9 & 35.9 & 0.4$^{\circ}$\,$\checkmark$ & 58.4 & 56.4 & 1.5$^{*}$\,$\checkmark$ & 39.6 & 35.2 \\
Holdout 2/3 + 1/3 & 1.3 & 13.9 & 13.4 & 1.9\,$\checkmark$ & 18.6 & 18.0 & 0.0$^{\circ}$\,$\checkmark$ & 41.1 & 41.0 & 0.3$^{\circ}$\,$\checkmark$ & 19.5 & 18.8 \\
Holdout 1/2 + 1/2 & 3.2 & 17.3 & 16.2 & 2.1\,$\checkmark$ & 24.2 & 23.3 & 0.0$^{\circ}$\,$\checkmark$ & 46.0 & 46.0 & 0.8$^{\circ}$\,$\checkmark$ & 22.3 & 21.2 \\
Fixed margin, dataset held out$^{\dagger}$ & 5.8\,$\times$ & 31.5 & 28.7 & 4.2\,$\checkmark$ & 49.4 & 45.4 & -- & -- & -- & -- & -- & -- \\
\textbf{BRIDGE} & 1.1\,$\checkmark$ & 25.8 & 24.9 & 1.7\,$\checkmark$ & 33.9 & 32.8 & 0.0\,$\checkmark$ & 50.0 & 49.9 & 0.5\,$\checkmark$ & 30.1 & 29.1 \\
\bottomrule
\end{tabular}

\vspace{2pt}
\parbox{\textwidth}{\justifying\scriptsize False \%: false improvements among the accepted updates. Yield: accepted updates per $100$ decisions. ${>}\varepsilon$: accepted updates whose reference gain exceeds the margin $\varepsilon=0.005$, per $100$ decisions (post hoc; tests in Table~\ref{tab:falseimp}). $\checkmark$/$\times$: pre-registered criterion met/not met; no mark: LOCO not computed for the holdouts in (A). $^{*}$: BRIDGE has a significantly lower false-improvement rate; $^{\circ}$: no significant difference (paired cluster bootstrap, Holm over the comparators of a collection, Table~\ref{tab:rulesall}). $^{\dagger}$Post hoc: for each dataset (in (B), each block of $140$ tasks), the margin is chosen from the pre-registered grid $0.005$--$0.050$ on all other data, by the rule of the leave-one-setting-out analysis, and applied to the held-out dataset; the margin $0.020$ is chosen for $43$ of $48$ held-out units. No test is applied: the comparison is descriptive, because each held-out dataset's margin is chosen on data that include the other evaluated datasets, so the decisions are not independent of that choice. Not available for (C) and (D). Cost: BRIDGE re-scores stored predictions (about $0.03$~s per run; Table~\ref{tab:complexity}).}
\end{table*}

\begin{figure}[t]
\centering
\xfigwdcol
\includegraphics[width=0.95\linewidth]{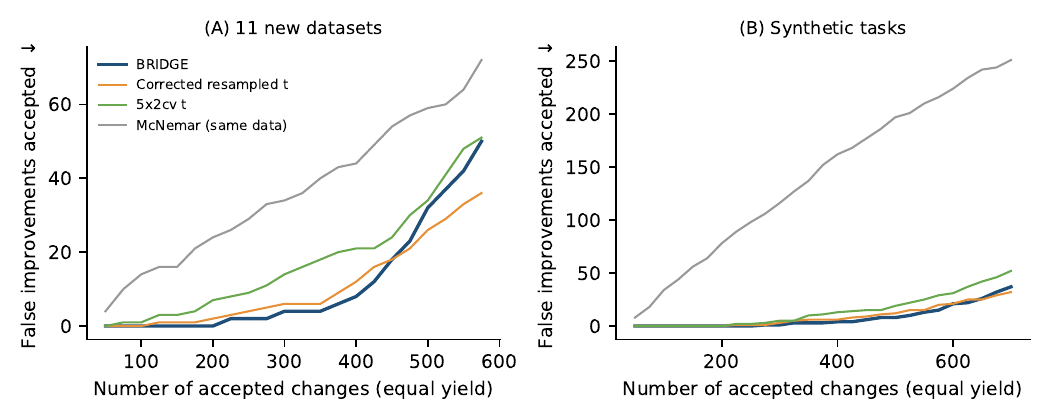}
\caption{Acceptance rules compared at matched acceptance counts on (A) and (B).}
\label{fig:yield}
\vspace{2pt}
\parbox{\linewidth}{\justifying\scriptsize \emph{Note.} The curves rank the decisions by each rule's own statistic ($-p$ for the tests, the bootstrap lower bound for BRIDGE), keep the top $N$ (ties broken by stored decision order), and count the false improvements among the selected trials; the metric is weighted-$F_1$. Runs on (A) and (B) store the decision statistic of every rule, so the curves use those collections. Runs on (C) and (D) store the accept/reject decision of each rule, and the program used for (E) records the decisions of the naive rule and BRIDGE. Current model: HGB with the default candidate space (HGB, DT, SVC). Other learners, other candidate members, and a joint seven-family space are tested separately (Tables~\ref{tab:members} and~\ref{tab:scope}).}
\end{figure}

On the $11$ new datasets of (A), the test-based rules fail the criterion, whereas BRIDGE meets it. The counts and bounds are in Table~\ref{tab:rules}, and Table~\ref{tab:rulesall} gives the false-improvement rates. These tests assume that the compared pair is fixed before the data are seen. In update acceptance, however, the candidate is chosen among many configurations on the same data that then judge it (Section~\ref{sec:rel-tests}). The failure is specific to (A). In (C) and (D), the $5\times2$~cv and corrected $t$-tests also meet the criterion. BRIDGE differs from the naive rule mainly on small validation gains. Of the $459$ changes of (A) with a validation gain of at most $0.01$, BRIDGE accepts none, whereas the naive rule accepts all $459$ and is wrong $164$ times (Table~\ref{tab:rules}).

The corrected $t$-test with per-split re-selection is the closest comparator, because it also separates choosing from judging. On (A), this test meets the criterion with a false-improvement rate of $2.3\%$, against $1.1\%$ for BRIDGE. In (A) and (B), the difference is not significant; in (A) it is $-1.2$ pp ($95\%$ CI $[-6.3;\,1.3]$, Holm-adjusted $p=0.32$; paired differences in Table~\ref{tab:ci}). In (A), the same-POOL cell is equivalent at matched yield by TOST (post hoc), and the fixed margin chosen post hoc does not differ significantly either.

The rules accept different numbers of updates, so the comparison is also made at matched acceptance counts on (A) and (B) (Fig.~\ref{fig:yield}). BRIDGE has an equal or lower false-improvement rate than the corrected $t$-test up to about $450$ accepted updates, and a slightly higher proportion beyond that point. The matched-count claim therefore covers that range (matched-yield details in Appendix~\ref{app:sens}).

A last comparator is an honest holdout on the same labeled data. This rule reserves part of POOL to test the chosen update, whereas BRIDGE reuses all of POOL through cross-fitting. On (A) to (D), BRIDGE accepts $1.1$ to $1.9$ times as many updates as the two holdout splits. BRIDGE deploys $1.1$ to $1.9$ times as many updates that gain more than the margin (post hoc, Table~\ref{tab:falseimp}, Holm $p\le0.01$), and the share of accepted updates that gain at most the margin does not differ significantly (accepted counts in Table~\ref{tab:holdout}). The false-improvement rate of BRIDGE is close to that of the holdouts. In (A), the false-improvement rate is $1.1\%$ for BRIDGE against $1.3\%$ and $3.2\%$ for the holdouts. In (B), it is $1.7\%$ against $1.9\%$ and $2.1\%$ (descriptive, without a paired test). In (C), neither BRIDGE nor the holdouts accept a false improvement. In (D), the difference between BRIDGE and either holdout is not significant (Table~\ref{tab:rulesall}).

\paragraph{Updates below the margin.} Two complementary measures use the margin $\varepsilon=0.005$, which was fixed during development (Table~\ref{tab:hyper}), and both are post hoc (Table~\ref{tab:falseimp}). The first measure is the share of accepted updates whose gain on the reference is at most $\varepsilon$, and the second is the number that gain more than $\varepsilon$ per $100$ decisions. Among the updates that the naive rule accepts, $33.9\%$ to $57.4\%$ gain at most $\varepsilon$, against $0.2\%$ to $6.3\%$ for BRIDGE, and the difference is significant in every collection (Holm $p\le0.005$). The two holdouts reach $13.4$ to $23.3$ in (A), (B), and (D) and $41.0$ to $46.0$ in (C), against $24.9$, $32.8$, $29.1$, and $49.9$ for BRIDGE, so BRIDGE deploys significantly more updates above the margin. No rule has a significantly smaller share below the margin than BRIDGE. In (A) and (B), the corrected $t$-test with per-split re-selection does not differ significantly from BRIDGE ($6.0\%$ and $5.3\%$ against $3.5\%$ and $3.4\%$), and neither do the two holdouts. BRIDGE keeps $34\%$ to $80\%$ of the above-margin updates that the naive rule accepts, from $24.9$ against $59.5$ per $100$ decisions in (A) to $49.9$ against $62.4$ in (C). The rest are rejected because the POOL evidence is not yet sufficient.

\paragraph{Strongest comparators.} Table~\ref{tab:strongest} lists the rules that also guard against selection. In (A) to (D), BRIDGE and the corrected $t$-test with re-selection meet the criterion, and the two holdouts deploy fewer updates above the margin than BRIDGE. A fixed margin selected without the evaluated dataset misses the criterion in (A), and its false-improvement rate in (B) is higher ($4.2\%$ against $1.7\%$); the comparison is post hoc and descriptive.

\subsection{Empirical Consistency with the Finite-Sample Bound (Synthetic Population $\mathcal{Q}$)}
\label{sec:exp-prop1}

\begin{table}[!t]
\centering
\scriptsize
\providecolor{bghead}{HTML}{DBEFF8}\providecolor{bgzebra}{HTML}{F2F9FC}\providecolor{bgpass}{HTML}{D7F0D2}\providecolor{bgfail}{HTML}{FCDCD7}
\setlength{\tabcolsep}{2.5pt}
\caption{Finite-sample bound on the synthetic tasks (B) for each acceptance rule.}
\label{tab:synthetic}
\setlength{\tymin}{28pt}\begin{tabulary}{\columnwidth}{@{}lccLcc@{}}
\toprule
Rule & Accepted & False & Bound \% $\downarrow$ & Yield/100 $\uparrow$ & Criterion \\
\midrule
Naive (validation improves) & 1,323 & 274 & 23.52 & 94.50 & $\times$ \\
McNemar on same POOL & 135 & 47 & 45.19 & 9.64 & $\times$ \\
McNemar on cross-fit & 50 & 13 & 42.65 & 3.57 & $\times$ \\
$5\times2$~cv paired $t$ & 610 & 34 & 8.14 & 43.57 & $\checkmark$ \\
Corrected resampled $t$ & 774 & 44 & 7.95 & 55.29 & $\checkmark$ \\
Corrected $t$, re-selection & 531 & 16 & 5.19 & 37.93 & $\checkmark$ \\
BRIDGE without margin & 598 & 21 & 5.67 & 42.71 & $\checkmark$ \\
\textbf{BRIDGE} & 475 & 8 & \textbf{3.56} & 33.93 & $\checkmark$ \\
\bottomrule
\end{tabulary}

\vspace{2pt}
\parbox{\linewidth}{\justifying\scriptsize *Setting (B): 1,400 synthetic tasks drawn independently from one population, one decision per task. Bound: Proposition 1. Conservative bounds: Table~\ref{tab:chronology}.  $\downarrow$/$\uparrow$ = smaller/larger is better; \textbf{bold} = best value per column, naive acceptance excluded; bound: one-sided 95\% upper bound on the false-improvement rate, the uncertainty measure of each rule; criterion: Table~\ref{tab:criterion}.}
\end{table}

The finite-sample bound of Proposition~\ref{prop:bound} is checked empirically on (B). This collection is a synthetic population of $1{,}400$ binary tasks drawn independently from one generative population, with reference-measured gain. Each task has one trial and a $100{,}000$-row reference, which is large enough that the error of the reference truth is about a quarter of the margin. The generator and the basis of these sizes are in Appendix~\ref{app:datasets}. The population is designed so that the naive rule is wrong about as often as on the real data: $20.7\%$ of its accepted updates are false improvements, against $19.1\%$ in (A) (Table~\ref{tab:synthetic}). BRIDGE accepts $475$ updates, of which $8$ are false improvements, so $\hat r=1.7\%$. The Proposition~\ref{prop:bound} bound on $r_{\mathcal{Q}}$ \eqref{eq:rq} is $U=3.56\%$ (Table~\ref{tab:synthetic}).

The bound is also computed on a fresh batch of $1{,}400$ tasks that no earlier run used. With the same frozen gate, BRIDGE accepts $462$ updates with $7$ false improvements and gives $U=3.35\%$ (base-learner results on $\mathcal{Q}$ in Table~\ref{tab:basefull}, row HGB).

A consistency check of the bound needs two batches, because within one batch $U\ge\hat r=S/M$ holds by construction \eqref{eq:U}. The bound of a batch therefore lies above the estimate from the same batch, and that comparison carries no evidence about the bound. The check is therefore made across the two independent batches. Proposition~\ref{prop:bound} gives $\Pr(r_{\mathcal{Q}}\le U^{(b)})\ge 1-\delta$ for each batch $b$, so the bound of one batch should lie above the false-improvement rate estimated on the other, $\hat r^{(2)}\le U^{(1)}$ and $\hat r^{(1)}\le U^{(2)}$. Both inequalities hold. The bound of the primary batch ($3.56\%$) lies above the estimate of the fresh batch ($1.5\%$). The bound of the fresh batch ($3.35\%$) lies above the estimate of the primary batch ($1.7\%$; Tables~\ref{tab:synthetic} and~\ref{tab:basefull}). Both bounds also lie above the upper end of the $95\%$ Clopper--Pearson interval \cite{clopper1934} for $r_{\mathcal{Q}}$ from the two batches together. That interval is $0.9\%$ to $2.6\%$ around an estimate of $1.6\%$ ($15$ of $937$). The $95\%$ coverage itself rests on Proposition~\ref{prop:bound}; the cross-batch comparison is a consistency check, not a validation of coverage.

\subsection{Clinical Application (Retrospective)}
\label{sec:exp-clinical}

The clinical application is retrospective: the data come from existing records, not from a prospective trial, so the results show what would happen if the gate were used on that data (prospective validation: Section~\ref{sec:limitations}). It is an illustration with one hypothetical action and one cost ratio at $p_t=0.2$ for all tasks, not evidence of practical clinical benefit. Table~\ref{tab:tasks} describes the $12$ clinical tasks, which come from $6$ independent sources. The sources are screening mammography from OpenML \cite{vanschoren2013}, an HIV-1 trial cohort from the UCI repository \cite{dua2019}, and self-reported diagnoses from four NHANES \cite{cdcnhanes2018} and four BRFSS \cite{cdcbrfss2015} surveys. Two further sources are a national COVID-19 cohort from the open data of the Mexican Ministry of Health \cite{covidmx2020} and an intensive care unit (ICU) admissions cohort from the WiDS Datathon 2020 \cite{wids2020}.

Survey tasks are health-related cross-sectional classification tasks rather than prognosis tasks, and the predictors are restricted to information available at decision time. One hypothetical clinical action, such as a referral, is paired with each task, and the net-benefit calculation uses this action. The patient-identity audit of the sources (de-duplication and per-person filtering before any split) is reported in Table~\ref{tab:identity}.

\begin{table}[!t]
\centering
\scriptsize
\providecolor{bghead}{HTML}{DBEFF8}\providecolor{bgzebra}{HTML}{F2F9FC}\providecolor{bgpass}{HTML}{D7F0D2}\providecolor{bgfail}{HTML}{FCDCD7}
\setlength{\tabcolsep}{1.5pt}
\caption{Clinical application (C): results by metric.}
\label{tab:clinical}
\setlength{\tymin}{28pt}\begin{tabulary}{\columnwidth}{@{}lccccc@{}}
\toprule
Metric & \begin{tabular}[b]{@{}c@{}}Naive\\false impr. \% $\downarrow$\end{tabular} & \begin{tabular}[b]{@{}c@{}}BRIDGE\\accepted\end{tabular} & \begin{tabular}[b]{@{}c@{}}BRIDGE\\false impr. \% $\downarrow$\end{tabular} & Bound \% $\downarrow$ & \begin{tabular}[b]{@{}c@{}}LOCO-\\source \% $\downarrow$\end{tabular} \\
\midrule
\multicolumn{6}{@{}l}{\emph{Co-primary (pre-registered)}} \\
Net benefit, $p_t=0.2$ & 20.0 & 1,199 & 0.0 & 0.33 & 0.50 \\
Balanced accuracy & 0.3 & 2,273 & 0.4 & 0.82 & 1.25 \\
\multicolumn{6}{@{}l}{\emph{Secondary}} \\
Weighted $F_1$ & 23.6 & 435 & 2.1 & 5.12 & 7.84 \\
$F_2$ & 0.0 & 2,379 & 0.0 & 0.16 & 0.24 \\
Net benefit, $p_t=0.1$ & 3.9 & 1,660 & 0.0 & 0.23 & 0.37 \\
\bottomrule
\end{tabulary}

\vspace{2pt}
\parbox{\linewidth}{\justifying\scriptsize *Setting: $12$ tasks from $6$ independent sources, $200$ trials per task (2,400 decisions per metric). Co-primary metrics, fixed in the pre-registration: net benefit at the clinical threshold $p_t=0.2$ and balanced accuracy. Net benefit at the lower threshold $p_t=0.1$ is a secondary sensitivity check. Bound and LOCO-source of BRIDGE. Clinical consequences, in extra true and false positives: Table~\ref{tab:clinconseq}. Criterion: Table~\ref{tab:criterion}.  $\downarrow$/$\uparrow$ = smaller/larger is better; bound: one-sided 95\% upper bound on the false-improvement rate, the uncertainty measure of each rule; criterion: Table~\ref{tab:criterion}.}
\end{table}

The confirmatory clinical run uses $200$ trials per task and two co-primary metrics fixed in the pre-registration. The first is net benefit at $p_t=0.2$ \eqref{eq:netbenefit}, under which one true positive is worth four false positives because the odds $0.2/0.8$ equal $1/4$ \cite{vickers2006}, and the second is balanced accuracy. Weighted-$F_1$, $F_2$, and net benefit at the lower threshold $p_t=0.1$ are secondary metrics. Table~\ref{tab:clinical} reports these metrics, and the pre-registration chronology is in Appendix~\ref{app:prereg}. In the clinical application and the binary collections, the baseline is the pre-registered HGB at threshold $0.5$, and the candidate space contains thresholds from $0.05$ to $0.95$. A shift toward the clinical threshold is therefore a legitimate update, and part of the gain comes from correcting a baseline that was not set for the clinical objective.

Under net benefit at $p_t=0.2$, $20.0\%$ of the updates accepted by the naive rule are false improvements, whereas none of the $1{,}199$ updates accepted by BRIDGE is. The reduction is $20.0$ percentage points, with a $95\%$ interval of $4.5$ to $40.3$ over tasks and $0.0$ to $31.9$ over the six sources. The task is the pre-registered unit of analysis; the four BRFSS tasks and the four NHANES tasks each share respondents, so the per-source analysis is reported as a sensitivity and is the conservative reading. Over the six sources, the reduction does not differ significantly ($p=0.17$, interval up to $0.0$). The false improvements of the naive rule lie only in BRFSS ($282$) and NHANES ($171$), whereas BRIDGE accepts none of its $1{,}199$ updates as false improvements. The share of updates below the margin still differs significantly over sources ($p=0.03$, post hoc, one unadjusted test). Three quantities answer different questions: the KL-ratio bound ($0.33\%$) is conditional on the collection with the trial as unit, LOCO-source ($0.50\%$) leaves one source out at a time, and the source-level test ($p=0.17$) is the inference across sources. Under balanced accuracy, $F_2$, and net benefit at $p_t=0.1$, the naive rule rarely accepts a false improvement ($0.0\%$ to $3.9\%$), so the gate has little effect on the result. Weighted-$F_1$ again separates the rules ($23.6\%$ against $2.1\%$; Table~\ref{tab:clinical}). On BRFSS kidney disease, all $133$ updates accepted by the naive rule are false improvements, whereas BRIDGE accepts none, and the same pattern holds on BRFSS stroke ($144$ of $151$) and NHANES coronary heart disease ($97$ of $182$; Appendix~\ref{app:identity}, Table~\ref{tab:clinsource}).

Per $1{,}000$ patients, the updates that BRIDGE accepts add $90.9$ true positives and $197.0$ false positives, so sensitivity rises and specificity falls, and the workload increases from $98$ to $386$ patients flagged per $1{,}000$ (Table~\ref{tab:clinconseq}). The net-benefit run was pre-registered after an earlier weighted-$F_1$ run on the same tasks failed the LOCO criterion, so it is a re-test with fresh seeds, not an independent clinical confirmation (Appendix~\ref{app:prereg}).

\subsection{Component Analysis: Ablation and Candidate-Space Size}
\label{sec:exp-ablation}

\begin{table*}[!t]
\centering
\scriptsize
\providecolor{bghead}{HTML}{DBEFF8}\providecolor{bgzebra}{HTML}{F2F9FC}\providecolor{bgpass}{HTML}{D7F0D2}\providecolor{bgfail}{HTML}{FCDCD7}
\setlength{\tabcolsep}{3pt}
\caption{Component analysis on (A)--(E): ablation of the three components of the gate.}
\label{tab:ablation}
\begin{tabular}{@{}llllrrrrrrrrrr@{}}
\toprule
\multirow{2}{*}{Known as} & \multirow{2}{*}{Cross-fit} & \multirow{2}{*}{\begin{tabular}{@{}c@{}}Bootstrap\\bound\end{tabular}} & \multirow{2}{*}{Margin} & \multicolumn{2}{c}{(A)} & \multicolumn{2}{c}{(B)} & \multicolumn{2}{c}{(C)} & \multicolumn{2}{c}{(D)} & \multicolumn{2}{c}{(E)} \\
\cmidrule(lr){5-6} \cmidrule(lr){7-8} \cmidrule(lr){9-10} \cmidrule(lr){11-12} \cmidrule(lr){13-14}
 & & & & \begin{tabular}[b]{@{}c@{}}False\\\% $\downarrow$\end{tabular} & \begin{tabular}[b]{@{}c@{}}Bound\\\% $\downarrow$\end{tabular} & \begin{tabular}[b]{@{}c@{}}False\\\% $\downarrow$\end{tabular} & \begin{tabular}[b]{@{}c@{}}Bound\\\% $\downarrow$\end{tabular} & \begin{tabular}[b]{@{}c@{}}False\\\% $\downarrow$\end{tabular} & \begin{tabular}[b]{@{}c@{}}Bound\\\% $\downarrow$\end{tabular} & \begin{tabular}[b]{@{}c@{}}False\\\% $\downarrow$\end{tabular} & \begin{tabular}[b]{@{}c@{}}Bound\\\% $\downarrow$\end{tabular} & \begin{tabular}[b]{@{}c@{}}False\\\% $\downarrow$\end{tabular} & \begin{tabular}[b]{@{}c@{}}Bound\\\% $\downarrow$\end{tabular} \\
\midrule
Naive & -- & -- & -- & 19.08 & 22.90\,$\times$ & 20.71 & 23.52\,$\times$ & 19.99 & 22.68\,$\times$ & 12.49 & 14.19\,$\times$ & 27.65 & 32.13\,$\times$ \\
Fixed margin & -- & -- & $\checkmark$ & 10.20 & 14.17\,$\times$ & 16.42 & 19.19\,$\times$ & 2.65 & 4.05\,$\checkmark$ & 6.18 & 7.68\,$\checkmark$ & 16.74 & 22.48\,$\times$ \\
Same-POOL bootstrap & -- & $\checkmark$ & -- & 6.96 & 10.70\,$\times$ & 6.61 & 8.95\,$\checkmark$ & 2.67 & 4.06\,$\checkmark$ & 4.87 & 6.33\,$\checkmark$ & 13.73 & 19.16\,$\times$ \\
Same-POOL cell & -- & $\checkmark$ & $\checkmark$ & 2.65 & 5.85\,$\checkmark$ & 3.18 & 5.14\,$\checkmark$ & 0.00 & 0.31\,$\checkmark$ & 1.08 & 2.06\,$\checkmark$ & 3.37 & 8.17\,$\times^{\ddagger}$ \\
Cross-fit, point & $\checkmark$ & -- & -- & 10.81 & 15.00\,$\times$ & 12.07 & 14.72\,$\times$ & 7.69 & 9.84\,$\times$ & 7.28 & 8.89\,$\checkmark$ & 18.98 & 24.82\,$\times$ \\
Cross-fit, point, margin & $\checkmark$ & -- & $\checkmark$ & 5.23 & 9.00\,$\times$ & 9.49 & 12.06\,$\times$ & 0.91 & 1.86\,$\checkmark$ & 3.76 & 5.19\,$\checkmark$ & 9.11 & 14.86\,$\times$ \\
BRIDGE without margin & $\checkmark$ & $\checkmark$ & -- & 2.78 & 6.16\,$\checkmark$ & 3.51 & 5.67\,$\checkmark$ & 0.56 & 1.34\,$\checkmark$ & 1.64 & 2.75\,$\checkmark$ & 8.86 & 15.63\,$\times$ \\
\textbf{BRIDGE} & $\checkmark$ & $\checkmark$ & $\checkmark$ & 1.06 & 4.23\,$\checkmark$ & 1.68 & 3.56\,$\checkmark$ & 0.00 & 0.33\,$\checkmark$ & 0.54 & 1.47\,$\checkmark$ & 1.17 & 6.13\,$\times^{\ddagger}$ \\
\bottomrule
\end{tabular}

\vspace{2pt}
\parbox{\linewidth}{\justifying\scriptsize *$\checkmark$/$\times$ after the bound: pre-registered criterion met or not met. (E): the $11$ binary datasets of the OpenML collection, with all eight cells from MINI-52 (run list in Table~\ref{tab:chronology2}). $^{\ddagger}$: bound below $10\%$ but LOCO above it (same-POOL cell $11.95\%$, BRIDGE $10.30\%$). On the full $14$ datasets of (E), the four cells computable from the released program give $1.04\%$ for BRIDGE (bound $5.40\%$, met) and $7.89\%$ for BRIDGE without the margin (bound $13.84\%$, not met). Each component switched on ($\checkmark$) or off (--): cross-fitted evidence instead of the same POOL; the 5\% bootstrap lower bound instead of the point estimate; the margin 0.005 instead of 0. Same-POOL cell: the rule that chooses and measures on the same POOL with the bootstrap bound and the margin. Every cell judges the same deployed configuration; current model HGB with the default candidate space. Bound: KL-ratio bound in (A), (C), and (D), and Proposition 1 bound in (B). Paired intervals for the same-POOL cell against BRIDGE: Table~\ref{tab:ci}. Counts, LOCO, and yield of each cell are recorded in the code repository (runs MINI-32 and MINI-48, Table~\ref{tab:chronology}). Criterion: Table~\ref{tab:criterion}. An arrow $\downarrow$ or $\uparrow$ marks a column where a smaller or a larger value is better.}
\end{table*}

\paragraph{Ablation.} The ablation switches each of the three gate components on and off while the deployed configuration stays fixed. The components are the evidence (cross-fitted counts or the same POOL), the decision statistic (bootstrap lower bound or point estimate), and the margin ($\varepsilon=0.005$ or $0$). Table~\ref{tab:ablation} lists the eight resulting cells for (A) to (E). For (E), the cells cover the $11$ binary datasets, which were rerun with every cell recorded (run MINI-52 in Table~\ref{tab:chronology2}).

The table first shows which component lowers the false-improvement rate most when it is added to the naive rule alone, with the other two components switched off. The bootstrap lower bound alone gives the largest reduction in (A), (B), (D), and (E), at $7.6$ to $14.1$ percentage points below the naive rule. In (C), it ties with the margin alone ($17.3$ points each). The margin alone comes second in (A), (D), and (E), whereas cross-fitting alone comes second in (B). The bootstrap lower bound and the margin each lower the false-improvement rate at a cost in yield. Each of the two rejects updates that the naive rule would accept.

The binary datasets of (E), where every cell was recorded, follow the same ordering at a higher false-improvement rate. There, $27.7\%$ of the updates accepted by the naive rule are false improvements. Of the eight cells, BRIDGE has the lowest false-improvement rate ($1.2\%$, bound $6.1\%$) and comes closest to the criterion. Its LOCO bound is $10.3\%$, slightly above the $10\%$ limit. BRIDGE without the margin reaches $8.9\%$ (bound $15.6\%$), so no cell meets the criterion on these datasets (Table~\ref{tab:ablation}). On all $14$ datasets of (E), BRIDGE meets the criterion, with a false-improvement rate of $1.0\%$ and a bound of $5.4\%$ (Table~\ref{tab:summary}).

Cross-fitting is added on top of the bootstrap lower bound and the margin, and it has a small effect on the false-improvement rate in (A) and (B). In (A), BRIDGE and the same-POOL cell with the bootstrap lower bound and the margin do not differ significantly. The difference is $-1.6$ pp ($95\%$ CI $[-5.1;\,0.2]$). At matched yield, the two rules are equivalent by two one-sided tests (TOST) \cite{schuirmann1987} of \eqref{eq:hequiv}: $-0.4$ pp, $95\%$ CI $[-1.9;\,1.0]$, inside $\pm2$ pp (post hoc). In (B), BRIDGE is $1.5$ pp lower than the same-POOL cell at the number of updates that each rule accepts ($95\%$ CI $[-2.7;\,-0.5]$). At matched yield, BRIDGE is $1.1$ pp higher ($95\%$ CI $[0.2;\,2.1]$; Table~\ref{tab:ci}). The two comparisons disagree because the same-POOL cell accepts $660$ updates and BRIDGE accepts $475$ (paired differences in Table~\ref{tab:ci}). The updates accepted by the same-POOL cell beyond that count carry most of its extra false improvements. In (E), by contrast, the same-POOL cell reaches a false-improvement rate of $3.4\%$ against $1.2\%$ for BRIDGE. The same-POOL cell also fails on LOCO there ($12.0\%$; Table~\ref{tab:kspace}).

BRIDGE keeps cross-fitting because cross-fitting separates the data that select an update from the data that judge it. Through this separation, the decision statistic is out-of-sample with respect to selection (Section~\ref{sec:method-gate}). The naive estimate of the gain on the full POOL is optimistic by $+0.0034$ on average over the $1{,}064$ decisions of (A) with a selected candidate. By contrast, the cross-fitted estimate is slightly pessimistic ($-0.0034$). Both errors amount to about two-thirds of the margin (Fig.~\ref{fig:crossfit}; descriptive run MINI-27 in Table~\ref{tab:citedruns}). Replacing cross-fitting with a cell found from these results would be selecting on the outcome, the practice this paper targets. The bound of Proposition~\ref{prop:bound} holds for any frozen gate, whether or not that gate uses cross-fitting. Two pre-registered predictions differ from the results. The bootstrap lower bound lowers the false-improvement rate more than cross-fitting does. On (A) to (D), the same-POOL cell also meets the criterion at the large candidate space (predictions in Appendix~\ref{app:prereg}).

\begin{figure}[t]
\centering
\xfigwdcol
\includegraphics[width=0.95\linewidth]{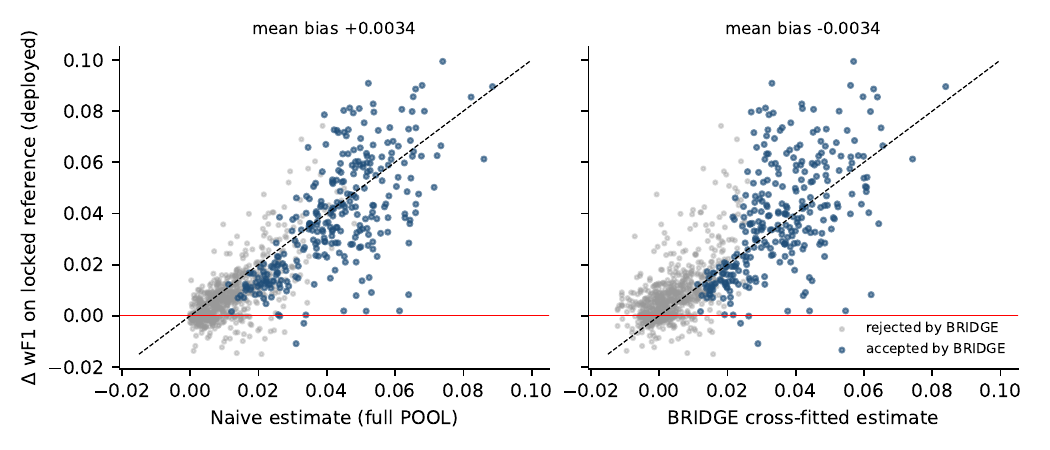}
\caption{Component analysis: naive and cross-fitted gain estimates against the reference truth on (A).}
\label{fig:crossfit}
\vspace{2pt}
\parbox{\linewidth}{\justifying\scriptsize \emph{Note.} Each point is one of the $1{,}064$ decisions of (A) with a selected candidate. The dashed line marks equality of estimate and reference truth. The naive estimate is measured on the same POOL that chose the update, and it is optimistic on average ($+0.0034$). The cross-fitted estimate, measured on folds not used to choose the update, is slightly pessimistic ($-0.0034$). Both errors amount to about two-thirds of the margin $\varepsilon=0.005$. Blue marks updates accepted by BRIDGE, and gray marks rejected updates.}
\end{figure}

\begin{figure*}[t]
\centering
\xfigwdtext
\includegraphics[width=0.95\textwidth]{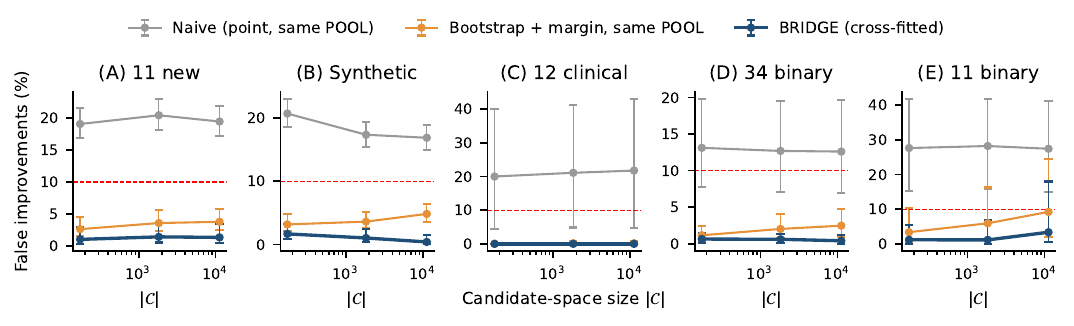}
\caption{Component analysis: false-improvement rate against candidate-space size for (A)--(E).}
\label{fig:kspace}
\vspace{2pt}
\parbox{\textwidth}{\justifying\scriptsize \emph{Note.} Error bars show $95\%$ Wilson intervals \cite{wilson1927}; the dashed line marks the $10\%$ criterion. (E): the $11$ binary datasets. Table~\ref{tab:kspace} gives the grids, the counts, and the datasets of each panel (Appendix~\ref{app:results}).}
\end{figure*}

\paragraph{Candidate-space size.} The second part of the analysis tests whether a larger candidate space increases the false-improvement rate, as the selection bias of \eqref{eq:maxbias} predicts. On (A) to (E), the same grids are refined from $173$ to $1{,}825$ and $11{,}249$ binary configurations. HGB serves as the current model, with HGB, DT, and SVC as the members (results by candidate-space size in Table~\ref{tab:kspace}, Fig.~\ref{fig:kspace}). The multiclass datasets of (D) and (E) are left out of this analysis, because no equally dense multiclass grid is pre-registered. As the candidate space grows, the naive rule fails the criterion in every cell. Its false-improvement rate lies between $12.6\%$ and $28.2\%$ (Table~\ref{tab:kspace}). On (A) to (D), BRIDGE meets the criterion at every candidate-space size. Its false-improvement rate stays at or below $1.7\%$ (Fig.~\ref{fig:kspace}). The estimates for BRIDGE do not fall steadily with size (in (A), $1.06\%$, $1.45\%$, and $1.37\%$; Table~\ref{tab:kspace}). On (A) to (D), therefore, a finer grid brings no material increase in false improvements, and the results do not show that a larger space lowers the false-improvement rate.

The binary datasets of (E) mark the range within which this finding holds. On those datasets, BRIDGE meets the criterion at $1{,}825$ configurations. As reported for the ablation, BRIDGE narrowly misses the LOCO limit at $173$. At $11{,}249$ configurations the false-improvement rate rises to $3.4\%$ ($6$ of $179$ accepted updates; bound $10.3\%$). This value lies $2.2$ percentage points above that of the default space ($95\%$ CI $0.04$ to $12.8$; Table~\ref{tab:kspace}). On the reference, each of the six false improvements loses at most $0.006$ in weighted-$F_1$. The same-POOL cell passes at every size on (A) to (D). Its false-improvement rate nevertheless grows in (D) from $1.2\%$ to $2.5\%$ and in (E) from $3.4\%$ to $9.2\%$. This growth is consistent with the selection bias of \eqref{eq:maxbias}, which grows with the number of candidates. The default space of $173$ configurations therefore remains the tested condition of use, confirmed on (A) to (D) and on all $14$ datasets of (E). A denser grid needs its own confirmation before use (Section~\ref{sec:limitations}).

\subsection{Scope and Conditions of Use}
\label{sec:exp-base}

\paragraph{Conditions of use.} A condition of use is a condition, such as the metric, the training size, the number of classes, the learner, the tuning, or the imbalance, that has been tested and meets the criterion. The claim holds for each condition tested, one at a time, as listed in Table~\ref{tab:scope}; combinations of conditions that were not tested together lie outside the claim.

\paragraph{Metrics, training sizes, multiclass, and feature selection.} These four conditions are tested on a separate candidate space of thresholds for KNN and SVC. Each condition is tested with and without a mutual-information feature-selection filter \cite{kraskov2004} fitted on SEARCH alone (protocol in Appendix~\ref{app:prereg}). The gate meets the criterion for every metric tested (weighted-$F_1$, balanced accuracy, macro-$F_1$) and every training size ($1{,}000$, $2{,}430$, and $6{,}000$ rows). The gate also meets the criterion for the multiclass tasks (results by metric, size, and class count in Table~\ref{tab:general}). The gate also meets the criterion on the $39$ supporting datasets of (D) taken together (detailed results in Table~\ref{tab:summary}). Feature selection with SVC is the one exception, and Table~\ref{tab:scope} marks it. The OpenML collection (E) meets the criterion jointly with its extension (E2) (subgroup results in Appendix~\ref{app:settingE}).

\paragraph{Ensemble members.} Besides DT and SVC, two further pairs of ensemble members were tested. The first pairs logistic regression with naive Bayes, and the second pairs a multilayer perceptron (MLP) with KNN. With both pairs the gate meets the criterion in (A) to (D). A joint space of seven families, built with the same grids, meets the criterion in all five collections ($711$ binary configurations; composition in Appendix~\ref{app:sens}; Table~\ref{tab:members}). On the OpenML collection (E), both pairs miss the criterion. Logistic regression with naive Bayes gives a bound of $7.35\%$ and a LOCO of $10.16\%$, and the MLP with KNN gives a bound of $12.94\%$ and a LOCO of $20.77\%$ (Tables~\ref{tab:members} and~\ref{tab:citedruns}). A post hoc check identifies the cause. Each pair accepts $6$ false improvements, against $2$ for DT and SVC, and the MLP with KNN accepts fewer updates in total ($147$ against $192$). Of the $12$ false improvements, $11$ lie on datasets with references of $1{,}125$ to $2{,}112$ rows, where the reference interval of the gain contains zero (Table~\ref{tab:citedruns}). On (D) the joint space is practically equivalent to the default space, while on (A) it is not. The gate meets the criterion with every member set tested in (A) to (D), so the result does not rest on the HGB--DT--SVC members alone.

The default space is retained for three reasons. First, it was frozen during development, before the confirmatory runs; adopting the joint space after these results would be selecting on the outcome, the practice this paper targets. Second, a larger space increases the selection bias of \eqref{eq:maxbias}: in (A), the false-improvement rate of the naive rule rises from $19.1\%$ to $22.4\%$, while BRIDGE meets the criterion with the joint space ($1.6\%$ in (A), $0.7\%$ in (D); Tables~\ref{tab:members} and~\ref{tab:citedruns}). Third, the joint space is a candidate for the next development cycle, to be developed first and confirmed on data not used before (Section~\ref{sec:limitations}). DT and SVC were fixed during development, before this study.

\paragraph{Base learners.} The base learner was varied one factor at a time on fresh synthetic tasks (base-learner results on $\mathcal{Q}$ in Table~\ref{tab:basefull}) and on the real data (base-learner results on real data in Table~\ref{tab:basereal}). XGBoost \cite{chen2016} and an MLP under early stopping \cite{prechelt1998} meet the criterion in all five collections (Table~\ref{tab:basereal}). The condition of use is therefore a base learner with iteration control, and Table~\ref{tab:scope} marks the variant without iteration control. The gate criterion and the ranking of absolute final quality are different questions: an MLP with early stopping has a lower final weighted-$F_1$ than HGB in (D) and yet meets the criterion (final model quality in Table~\ref{tab:absolutequality}).

The base-learner results can also be read through the skill of a base learner. Skill is $1$ minus the ratio of the log-loss of the learner to the log-loss of a forecast based on the class prevalence \cite{gneitingraftery2007}. Skill is computed on POOL, without reference data, so it is known before the gate is used. Across all $11$ base-learner settings tested on $\mathcal{Q}$, higher skill is associated with a lower gate bound. The Spearman rank correlation \cite{spearman1904}, which runs from $-1$ to $+1$, is $\rho=-0.74$ ($p=0.010$) over the full set and $\rho=-0.82$ for the nine HGB and XGBoost settings (run-level numbers in Table~\ref{tab:citedruns}). The association has an exception and is not a decision rule: the MLP trained for $300$ epochs without early stopping has a skill of $-0.67$, below every other setting, yet its bound of $4.1\%$ meets the criterion (Tables~\ref{tab:basefull} and~\ref{tab:audit}). The association is descriptive rather than causal, because the settings differ in more than skill.

\paragraph{Tuning as an update.} Tuning is treated as an update proposed to the gate rather than as a way to set the baseline. During development, HGB was tuned with Optuna \cite{akiba2019} using its tree-structured Parzen estimator sampler \cite{bergstra2011}. In the confirmatory runs, four learners (HGB, DT, SVC, KNN) were tuned in (A), in (B), and on three multiclass datasets of (D) (tuning results in Table~\ref{tab:tuning}). BRIDGE lowers the false-improvement rate in all $12$ cells tested, and it meets the formal criterion in two of the $12$ cells (tuning results in Table~\ref{tab:tuning}; status in Table~\ref{tab:scope}). Tuning can therefore be proposed to the gate as an update. The default procedure nevertheless retains the fixed configuration.

Tuning the baseline in SEARCH raises its weighted-$F_1$ by about $0.3$ points (Table~\ref{tab:tuningtests}), so keeping the fixed configuration has a small cost, and the final model quality after the gate is practically unchanged. The naive rule accepts false improvements more often ($23.2\%$ in (A), $15.3\%$ in (D), against $19.1\%$ and $12.5\%$ without tuning), and BRIDGE again lowers the false-improvement rate by $14$--$21$ pp (Table~\ref{tab:tuningtests}).

\paragraph{Imbalanced classes.} Five large binary datasets were re-sampled to minority prevalences of $20\%$, $10\%$, and $5\%$ before the outer split. POOL then holds about $360$, $180$, and $90$ minority cases, which are $20\%$, $10\%$, and $5\%$ of the $1{,}800$ POOL rows (imbalance results in Table~\ref{tab:imbalance}). Under balanced accuracy and macro-$F_1$ the gate meets the criterion at every level down to $5\%$ (imbalance results in Table~\ref{tab:imbalance}). On imbalanced data, the condition of use is therefore a minority-sensitive metric, as Table~\ref{tab:scope} marks.

\paragraph{Parameter sensitivity and cost.} In the parameter sensitivity analysis, the criterion is met on $\mathcal{Q}$ when each of the fold count, the bootstrap count, and the bag count is varied in turn. The combined $q\times\varepsilon$ grid meets the criterion in all $24$ real-data cells ($3$ quantile levels $\times$ $4$ margins in (A) and in (D); quantile-margin grid in Table~\ref{tab:qepsreal}). Other margins give a false-improvement rate equivalent to the default margin $0.005$ in (A) to (D) (margin comparison in Table~\ref{tab:marginpaired}). Variants of the bag fraction and of the SEARCH size pass in (D) and partly in (A), as Table~\ref{tab:scope} marks (details in Appendix~\ref{app:sens}). The gate's own cost is about $0.03$~s; the total cost is dominated by fitting the bagged base models, and the $16$-bag setting cuts this fitting cost by about half (cost details in Appendix~\ref{app:sens}).

\FloatBarrier
\section{Conclusion}
\label{sec:conclusion}

This paper asked how often a model update that raises the validation score is a false improvement on data that did not choose it. In response, the paper proposed BRIDGE, an acceptance gate that decides each update from the labels already available, with settings fixed once. BRIDGE lets one part of the labeled data choose the update and another part judge it (cross-fitting). The gate accepts the update when the lower bootstrap bound of its gain exceeds a margin.

Across every collection tested, BRIDGE lowers the false-improvement rate among the updates it accepts. In the four main settings, the false-improvement rate falls from $19.1\%$ to $1.1\%$ in (A), from $20.7\%$ to $1.7\%$ in (B), from $20.0\%$ to $0\%$ in (C), and from $12.5\%$ to $0.5\%$ in (D) (Table~\ref{tab:headline}). The OpenML collection (E) shows the same pattern, with the false-improvement rate falling from $25.6\%$ to $1.0\%$ (Table~\ref{tab:headline}). The hypothesis tests reject equal false-improvement rates against the naive rule in all five collections, with (C) significant at the task level and not at the coarser source level. A random draw of the same size from the naive acceptances also contains more false improvements than the updates BRIDGE accepts (Section~\ref{sec:exp-tests}). On the synthetic population, the finite-sample bound gives a $95\%$ upper bound of $3.56\%$ on the false-improvement rate of BRIDGE. A second batch of $1{,}400$ tasks gives $3.35\%$ (Section~\ref{sec:exp-prop1}).

BRIDGE removes $97\%$ to $100\%$ of the false improvements that the naive rule accepts, and its mean reference gain per deployed update is $1.8$ to $3.1$ times that of the naive rule (Table~\ref{tab:selectivity}). Against an honest holdout on the same labeled data, BRIDGE deploys $1.1$ to $1.9$ times as many updates that gain more than the margin, and the share that gains at most the margin does not differ significantly (post hoc, Table~\ref{tab:falseimp}; Table~\ref{tab:holdout}).

The finding also holds beyond the default learner, metric, and ensemble members. With XGBoost, with a multilayer perceptron under early stopping, and with the joint seven-family candidate space, the pre-registered criterion is met in all five collections. The criterion is also met in (A) to (D) with both alternative member pairs, and on imbalanced data with balanced accuracy and macro-$F_1$ down to a minority prevalence of $5\%$ (Table~\ref{tab:scope}). Among the three components of the gate, the bootstrap lower bound reduces the false-improvement rate more than the other two components, and it ties with the margin in (C) (Section~\ref{sec:exp-ablation}). For practice, these results support BRIDGE as an acceptance rule for tabular model updates within the tested conditions. In the retrospective clinical tasks, the gate kept updates that merely appeared better from reaching the deployed model. The open-source package \texttt{bridge-gate} (contribution C4) runs the procedure with the same settings and reproduces the stored results.

\subsection{Limitations and Future Work} \label{sec:limitations} The claims hold within the tested conditions listed in Table~\ref{tab:scope}. Every collection as a whole meets the pre-registered criterion. The box states the population to which each bound applies. The paragraphs that follow give each limit with its reason and the next step.

\noindent\fbox{\parbox{0.96\columnwidth}{\small \textbf{What is bounded, and for which population?}\\[2pt]
(a) The formal bound (Proposition~\ref{prop:bound}) on $r_{\mathcal{Q}}$ \eqref{eq:rq} applies to the specified i.i.d.\ synthetic task population $\mathcal{Q}$ and refers to that population, not to a single deployment.
(b) On real data the primary bound is the KL-ratio bound, conditional on the observed collection and the frozen outer partition (the inner SEARCH/POOL split and the bag draw are re-drawn per trial).
(c) The dataset-level bootstrap bound and LOCO are sensitivity analyses across datasets and sources, not the primary bound.
(d) LOCO measures sensitivity across sources, not external validation in a new institution.
(e) ``Ground truth'' means the locked reference, which is itself finite and imperfect; $H_j$ is the false-improvement indicator measured on that finite reference of task $j$.
(f) The Proposition~\ref{prop:bound} bound, like the empirical risk $\hat r$, concerns the false-improvement rate among the \emph{deployed} configurations measured on the locked reference (Definition~\ref{def:risk}). The bootstrap quantile inside the gate is a decision rule, not a confidence interval, so no $95\%$ coverage is claimed for it.}}\label{box:scope}

\paragraph{Comparators.} The best fixed validation margin was chosen after the results were seen. Its false-improvement rate does not differ significantly from that of BRIDGE, except at the smallest training size. At matched acceptance counts, BRIDGE retains its advantage between about $125$ and $450$ accepted updates (Appendix~\ref{app:sens}). BRIDGE selects among updates across datasets in all five collections, and also within single datasets in (D) and (E) (Table~\ref{tab:selectivity}). Learn then Test is compared with BRIDGE at the conceptual level (Section~\ref{sec:rel-gates}). An empirical comparison at an equal label budget is the next step. In (A) and (B), the corrected $t$-test with per-split re-selection deploys more updates above the margin, with a share below the margin that does not differ significantly from that of BRIDGE (post hoc, Table~\ref{tab:falseimp}).

\paragraph{Settings of the gate.} During development, the seven parameters of the gate and the partition were frozen (Table~\ref{tab:hyper}). In (D) every tested variant of these parameters meets the criterion. In (A) two variants fail, namely a bag drawn from all of SEARCH and a SEARCH of $1{,}000$ rows, so the frozen values are the tested condition of use (Appendix~\ref{app:sens}). Likewise, the default candidate space of $173$ configurations is the tested condition. A denser grid meets the criterion at every tested size in (A) to (D). On the binary datasets of (E) the bound exceeds $10\%$ at $11{,}249$ configurations (Table~\ref{tab:kspace}). The next step is to develop a larger space, such as the joint seven-family space, through a new development cycle and a fresh confirmation.

\paragraph{Learners, metrics, and tuning.} The gate is confirmed for base learners that have iteration control. XGBoost with $300$ boosting rounds and no early stopping fails in (A), (B), and (E). The failure lies in the gate statistic rather than in the final model, whose weighted-$F_1$ difference from HGB has an interval containing zero in (A) and (D) (Tables~\ref{tab:basefull} and~\ref{tab:absolutequality}). On rare classes, the metrics of use are balanced accuracy and macro-$F_1$, because weighted-$F_1$ exceeds the $10\%$ LOCO limit (Table~\ref{tab:imbalance}). Two further conditions meet the criterion narrowly or in part. Feature selection with SVC misses the LOCO limit ($11.4\%$), and tuning as an update meets the criterion in two of $12$ cells because tuned updates are accepted rarely (Tables~\ref{tab:general} and~\ref{tab:tuning}). The two alternative member pairs miss the criterion on the OpenML collection (E). Few updates are accepted there, and $11$ of the $12$ false improvements fall on small references (Tables~\ref{tab:members} and~\ref{tab:citedruns}). The next steps are a more stable feature-selection rule, a tuning path through development and confirmation, and larger references or more datasets for the member pairs in (E). The evidence covers tabular learners and label-based metrics. Deep and foundation models, raw images, and probability-level or ranking metrics are therefore directions for future work.

\paragraph{Data and clinical scope.} The clinical evaluation is retrospective and covers six sources, two of which are unweighted surveys. Net benefit therefore describes the sample rather than the population, and prospective validation in further institutions is the next step. The locked reference is finite and drawn once per task. On small datasets, near-zero changes are noisy in both directions, and two further reference seeds leave the conclusion unchanged (Table~\ref{tab:citedruns}). The ACTG 175 task classifies the recorded event status (CD4 decline of at least $50\%$, AIDS, or death) without handling censoring, so it does not estimate a risk at a fixed horizon. Without ACTG 175, BRIDGE accepts $1{,}001$ updates with no false improvement (bound $0.39\%$; Table~\ref{tab:clinsource}). The mammography source carries no patient identifier, and the main result is unchanged when that source is removed (Table~\ref{tab:clinsource}). Two kinds of subgroup miss the criterion: the binary datasets of (E), narrowly on LOCO ($10.3\%$), and multiclass subgroups of the OpenML collections, whose result varies with the number of classes (Appendix~\ref{app:settingE}); more multiclass datasets are the next step. The clinical baseline is the pre-registered threshold $0.5$, not $p_t$; a baseline at $t=p_t$ and a full decision curve across thresholds are a further step \cite{vickers2006}. Dataset shift and calibration drift after deployment lie outside the scope of the gate \cite{finlayson2021,davis2017}.

\paragraph{Future work.} The evidence would be extended in five directions: (i) prospective validation, (ii) settings with explicit distribution shift, (iii) deep and foundation models, (iv) new datasets from the OpenML suite, and (v) an empirical comparison with Learn then Test at an equal label budget. Three further directions would extend the method itself rather than the evidence. The first direction is a series of updates on the same model, with a bound on the false-improvement rate that covers the whole series rather than each decision separately. The second direction is a count-based bootstrap for probability-level and ranking metrics. The third is the same-POOL cell as a separately pre-registered rule, together with per-task threshold selection. The quantile and the margin set the operating point between false improvements and yield, and choosing an operating point for a target yield, through a new development cycle on (A) and (B) and confirmation on new data, is a further step; such a result would be development, not a post hoc claim.

\appendices
\counterwithin{table}{section}\renewcommand{\thetable}{\thesection\arabic{table}}
\counterwithin{figure}{section}\renewcommand{\thefigure}{\thesection\arabic{figure}}
\renewcommand{\topfraction}{0.95}\renewcommand{\dbltopfraction}{0.95}\renewcommand{\bottomfraction}{0.6}\renewcommand{\textfraction}{0.05}
\renewcommand{\floatpagefraction}{0.9}\renewcommand{\dblfloatpagefraction}{0.9}
\setcounter{topnumber}{4}\setcounter{dbltopnumber}{6}\setcounter{bottomnumber}{2}\setcounter{totalnumber}{6}

\noindent\textit{Appendices A to H follow: proofs, data, and pre-registration (A to C); data separation, frozen parameters, sensitivity, and cost (D); per-dataset results (E); the patient-identity audit (F); the OpenML collection (E) (G); and the detailed result tables (H).}

\FloatBarrier
\section{Proofs}
\label{app:proofs}

The proofs below cover Lemma~\ref{lem:binom}, Proposition~\ref{prop:bound}, and Corollary~\ref{cor:cons} of Section~\ref{sec:method-risk}. A final paragraph states the estimand used on the real data. The proofs rest on Assumption~\ref{ass:pop} and on a standard binomial tail bound alone.

\begin{proof}[Proof of Lemma~\ref{lem:binom}]
Under Assumption~\ref{ass:pop}(i)--(iii), the tasks are i.i.d.\ and the gate is frozen before any data are seen (condition (ii) makes the batch construction and the training/reference split part of the task draw), so the pairs $(A_j,H_j)$ are i.i.d.\ across tasks. Conditioning on $M=\sum_{j=1}^{T} A_j=m$ selects $m$ accepted tasks, each with false-improvement indicator $H_j\sim\mathrm{Bernoulli}(r_{\mathcal{Q}})$ independently; hence $S=\sum_{j=1}^{T} A_j H_j\sim\mathrm{Binomial}(m,r_{\mathcal{Q}})$.
\end{proof}

\begin{proof}[Proof of Proposition~\ref{prop:bound}]
Fix $M=m\ge1$ and write $r=r_{\mathcal{Q}}$. For $X\sim\mathrm{Binomial}(m,r)$ the lower Chernoff--Kullback--Leibler bound gives, for a fixed threshold $x\le r$, $\Pr\!\left(X \le m x\right)\le \exp\!\left(-m\,\mathrm{kl}(x\|r)\right)$. Let $x^{*}$ be the fixed threshold with $m\,\mathrm{kl}(x^{*}\|r)=\log(1/\delta)$ (it exists whenever $m\,\mathrm{kl}(0\|r)\ge\log(1/\delta)$, and otherwise the event $r>U_m$ is empty so the bound holds trivially); since $\mathrm{kl}(S/m\|r)$ decreases as $S/m$ grows for $S/m\le r$, the event $r>U_m$ is exactly $S/m<x^{*}$. Applying the bound with the fixed threshold $x^{*}$ gives $\Pr(r>U_m)\le\exp(-m\,\mathrm{kl}(x^{*}\|r))=\delta$, so the endpoint $U_m=\sup\{u\in[S/m,1] : m\,\mathrm{kl}(S/m\|u)\le \log(1/\delta)\}$ (with $U_m=1$ when $S=m$ or $m=0$) is a one-sided upper bound on $r$ with coverage $\ge 1-\delta$ conditionally on $M=m$. Averaging over the distribution of $M$ preserves coverage:
\[
\begin{aligned}
\Pr\!\left(r_{\mathcal{Q}}\le U\right)
&= \sum_{m\ge0}\Pr\!\left(M=m\right)\Pr\!\left(r_{\mathcal{Q}}\le U_m\mid M=m\right) \\
&\ge \sum_{m\ge0}\Pr\!\left(M=m\right)\left(1-\delta\right) \;=\; 1-\delta,
\end{aligned}
\qedhere
\]
\end{proof}

\begin{proof}[Proof of Corollary~\ref{cor:cons}]
Replace $H_j$ by $H'_j\ge H_j$ for every task whose status is reference-uncertain, so that every accepted update is counted as a false improvement unless the reference indicates improvement, and let $S'=\sum_{j=1}^{T} A_j H'_j\ge S$. Bounds computed from $(M,S')$ exceed the corresponding bounds computed from $(M,S)$; since the bounds from $(M,S)$ have coverage $\ge1-\delta$ by Proposition~\ref{prop:bound}, so do the bounds from $(M,S')$.
\end{proof}

On the real data, let $\mathcal{E}$ denote the observed collection of tasks. The estimand is $r_{\mathcal{E}}=\mathbb{E}\!\left[\sum_j W_j\,\middle|\,\mathcal{E}\right]\big/\mathbb{E}\!\left[\sum_j A_j\,\middle|\,\mathcal{E}\right]$, the ratio of the expected sums of $W_j=A_jH_j$ and of $A_j$ conditional on $\mathcal{E}$. The collection $\mathcal{E}$ includes the locked reference and the frozen outer partition, while the inner SEARCH/POOL split and the bag draw are re-drawn in every trial. The trials are independent given the fixed collection (independent seeds), with possibly different means across tasks, and the one-sided Chernoff--Kullback--Leibler bound applies to sums of independent Bernoulli variables with different means.

\FloatBarrier
\section{Data, Audit, and Generator}
\label{app:datasets}
\apptables{tab:splits}{tab:tasks}

Table~\ref{tab:splits} covers every dataset and task of (A) and (C)--(E), grouped by collection, with the domain, source, sample size, minority rate, and size of the locked reference. The table footnote defines the SEARCH/POOL split, the eligible training pool, and the number of trials.

\begin{table*}[!t]
\centering
\scriptsize
\setlength{\tabcolsep}{2.2pt}
\caption{Data partition for each dataset or task in (A) and (C)--(E).}
\label{tab:splits}
\begin{minipage}[t]{0.495\textwidth}\centering
\begin{tabular}{@{}lllrrr@{}}
\toprule
Dataset & Domain & Source & N & \begin{tabular}[b]{@{}c@{}}Min.\\\%\end{tabular} & Ref. \\
\midrule
\multicolumn{6}{@{}l}{\textit{(A) 11 new datasets}} \\
pc3 & Software & OpenML 1050 & 1,563$^{\dagger}$ & 10.24 & 468 \\
MIC & Medical & OpenML 45648 & 1,649$^{\dagger}$ & 15.59 & 494 \\
Marketing\_Campaign & Marketing & OpenML 46940 & 2,240$^{\dagger}$ & 14.91 & 672 \\
Pumpkin\_Seeds & Agriculture & OpenML 46951 & 2,500$^{\dagger}$ & 48.00 & 750 \\
seismic-bumps & Mining & OpenML 46956 & 2,584$^{\dagger}$ & 6.58 & 775 \\
Rice & Agriculture & UCI 545 & 3,810 & 42.78 & 1,143 \\
Census-Income-KDD & Social & OpenML 42750 & 199,523 & 6.21 & 100,000 \\
Santander2016 & Banking & OpenML 46859 & 76,020 & 3.96 & 56,020 \\
KDD98 & Social & OpenML 42343 & 82,318 & 11.77 & 62,318 \\
LICD & Employment & OpenML 46792 & 63,634 & 25.72 & 43,634 \\
Homesite & Insurance & OpenML 46857 & 260,753 & 18.75 & 100,000 \\
\multicolumn{6}{@{}l}{\textit{(C) 12 clinical tasks}} \\
mammography & Radiology & OpenML 310 & 7,847 & 3.22 & 2,354 \\
actg175 & Clinical & UCI 890 & 2,139$^{\dagger}$ & 24.36 & 641 \\
nhanes\_hypertension & Survey & NHANES 2017--18 & 5,559 & 38.12 & 1,667 \\
nhanes\_diabetes & Survey & NHANES 2017--18 & 5,393 & 16.24 & 1,617 \\
nhanes\_asthma & Survey & NHANES 2017--18 & 5,564 & 15.03 & 1,669 \\
nhanes\_coronary & Survey & NHANES 2017--18 & 5,553 & 4.77 & 1,665 \\
brfss\_stroke & Survey & BRFSS 2015 & 440,166 & 4.15 & 100,000 \\
brfss\_asthma & Survey & BRFSS 2015 & 439,963 & 13.50 & 100,000 \\
brfss\_kidney & Survey & BRFSS 2015 & 439,879 & 3.56 & 100,000 \\
brfss\_depression & Survey & BRFSS 2015 & 439,259 & 19.07 & 100,000 \\
covid\_mx & Public Health & Mexico COVID-19 & 218,784 & 12.29 & 100,000 \\
wids\_icu & ICU & WiDS 2020 & 91,713 & 8.63 & 71,713 \\
\multicolumn{6}{@{}l}{\textit{(D) 34 binary supporting datasets}} \\
miniboone & Physics & OpenML 41150 & 130,064 & 28.06 & 100,000 \\
compas (dev) & Justice & OpenML 42192 & 5,278 & 47.04 & 1,583 \\
credit-default-tw (dev) & Finance & OpenML 42477 & 30,000 & 22.12 & 8,000 \\
noshow & Healthcare & OpenML 43439 & 110,527 & 20.19 & 90,527 \\
cardio & Medical & OpenML 45547 & 70,000 & 49.97 & 50,000 \\
heloc (dev) & Finance & OpenML 45553 & 9,871 & 47.97 & 2,961 \\
lt-vehicle-loan & Finance & OpenML 46430 & 233,154 & 21.71 & 100,000 \\
codexglue (dev) & Software & OpenML 46682 & 27,318 & 45.61 & 8,000 \\
kickstarter & Crowdfund & OpenML 46810 & 86,502 & 31.96 & 66,502 \\
bank-churn (dev) & Banking & OpenML 46911 & 10,000 & 20.37 & 3,000 \\
ecom-shipping (dev) & E-commerce & OpenML 46924 & 10,999 & 40.33 & 3,299 \\
employee (dev) & HR & OpenML 46925 & 4,653 & 34.39 & 1,395 \\
amazon-access (dev) & HR & OpenML 4135 & 32,769 & 5.79 & 8,000 \\
kick & Automotive & OpenML 41162 & 72,983 & 12.30 & 52,983 \\
porto-seguro & Insurance & OpenML 42206 & 595,212 & 3.64 & 100,000 \\
\bottomrule
\end{tabular}
\end{minipage}\hfill
\begin{minipage}[t]{0.495\textwidth}\centering
\begin{tabular}{@{}lllrrr@{}}
\toprule
Dataset & Domain & Source & N & \begin{tabular}[b]{@{}c@{}}Min.\\\%\end{tabular} & Ref. \\
\midrule
\multicolumn{6}{@{}l}{\textit{(D) 34 binary supporting datasets (cont.)}} \\
relevant-images & Vision & OpenML 42680 & 134,731 & 4.14 & 100,000 \\
newspaper-churn (dev) & Media & OpenML 44226 & 15,855 & 19.15 & 4,756 \\
online-shoppers (dev) & E-commerce & OpenML 45060 & 12,330 & 15.47 & 3,699 \\
give-some-credit & Finance & OpenML 45577 & 150,000 & 6.68 & 100,000 \\
hmeq (dev) & Finance & OpenML 46431 & 5,960 & 19.95 & 1,788 \\
avida-hil6 & Biology & OpenML 46799 & 573,891 & 3.66 & 100,000 \\
avida-cov2 (dev) & Biology & OpenML 46800 & 3,733 & 19.85 & 1,119 \\
acs-drem & Social & OpenML 46802 & 1,138,289 & 7.90 & 100,000 \\
fake-job (dev) & Employment & OpenML 46809 & 12,725 & 4.33 & 3,817 \\
sepsis & Medical & OpenML 46853 & 110,341 & 7.37 & 90,341 \\
dota2 & Gaming & OpenML 45563 & 102,944 & 47.27 & 82,944 \\
santander & Banking & OpenML 45566 & 200,000 & 10.05 & 100,000 \\
home-credit & Finance & OpenML 45567 & 307,511 & 8.07 & 100,000 \\
coil2000 (dev) & Insurance & OpenML 46281 & 9,822 & 5.97 & 2,946 \\
bnp-paribas & Insurance & OpenML 46856 & 114,321 & 23.88 & 94,321 \\
polish-bankruptcy (dev) & Finance & OpenML 46950 & 5,910 & 6.94 & 1,773 \\
taiwan-bankruptcy (dev) & Finance & OpenML 46962 & 6,819 & 3.23 & 2,045 \\
naticusdroid (dev) & Cyber & OpenML 46969 & 7,491 & 35.03 & 2,247 \\
hr-analytics (dev) & HR & OpenML 46935 & 19,158 & 24.93 & 5,747 \\
\multicolumn{6}{@{}l}{\textit{(D) 5 multiclass datasets}} \\
Fashion-MNIST & Vision & OpenML 40996 & 70,000 & 10.00 & 50,000 \\
DryBean & Agriculture & UCI 602 & 13,611 & 3.84 & 4,083 \\
USPS & Vision & OpenML 41082 & 9,298 & 7.61 & 2,789 \\
mnist\_784 & Vision & OpenML 554 & 70,000 & 9.02 & 50,000 \\
Kuzushiji-MNIST & Vision & OpenML 41982 & 70,000 & 10.00 & 50,000 \\
\multicolumn{6}{@{}l}{\textit{(E) 11 binary OpenML datasets}} \\
airline-satisfaction & Marketing & OpenML 46920 & 129,880 & 45.27 & 100,000 \\
KDDCup09-churn & Marketing & OpenML 1112 & 50,000 & 7.34 & 30,000 \\
bank-marketing & Banking & OpenML 1461 & 45,211 & 11.70 & 25,211 \\
pulsar-HTRU2 & Physics & OpenML 45558 & 17,898 & 9.16 & 5,369 \\
jm1 & Software & OpenML 1053 & 10,885 & 19.35 & 3,265 \\
support2 & Medical & OpenML 46860 & 9,105 & 31.89 & 2,731 \\
telco-customer-churn & Marketing & OpenML 42178 & 7,043 & 26.54 & 2,112 \\
phoneme & Speech & OpenML 1489 & 5,404 & 29.35 & 1,621 \\
spambase & Cyber & OpenML 44 & 4,601 & 39.40 & 1,380 \\
sick & Medical & OpenML 38 & 3,772 & 6.12 & 1,131 \\
bioresponse & Biology & OpenML 4134 & 3,751 & 45.77 & 1,125 \\
\multicolumn{6}{@{}l}{\textit{(E) 3 multiclass OpenML datasets}} \\
ipums-la-97-small & Survey & OpenML 382 & 7,019 & 3.68 & 2,105 \\
otto-group & E-commerce & OpenML 45548 & 61,878 & 3.12 & 41,878 \\
corporate-credit-ratings & Finance & OpenML 46372 & 5,000 & 10.00 & 1,500 \\
\bottomrule
\end{tabular}
\end{minipage}

\vspace{2pt}
\parbox{\textwidth}{\justifying\scriptsize *Locked reference: size (Ref.) and source of each real dataset or task. The rows that remain once the reference is set aside form the eligible training pool, from which every trial draws its training rows. Groups. (A): the $11$ new binary datasets. (C): the $12$ clinical tasks. (D): the $39$ supporting datasets, namely the $34$ binary confirmation datasets (including the $18$ development datasets, marked (dev), on which the gate and the candidate space were frozen) and the $5$ multiclass datasets. (E): the $14$ datasets of the OpenML collection, with names shortened and identified by the OpenML ID. Setting (B), the synthetic population, is generated rather than drawn from a dataset and is described in this appendix. Min.: minority-class share; for a multiclass dataset, the smallest class. Split: $2{,}430$ training rows by default (SEARCH $630$, POOL $1{,}800$); $^{\dagger}$: $1{,}000$ training rows (SEARCH $259$, POOL $741$). Trials per task: $100$ in (A), (D), and binary (E); $200$ in (C); $60$ in multiclass (D) and (E). Domains briefly characterize the public sources, and exact definitions live in the code-repository loaders. The $28$ domain labels map to seven groups, all represented among the $64$ real datasets: Health (Healthcare, Medical, Radiology, Clinical, Survey, Public Health, ICU); Finance, banking and insurance (Finance, Banking, Insurance, Crowdfund); Agriculture and mining (Agriculture, Mining); Cyber and software (Software, Cyber); Commerce, services and HR (E-commerce, Marketing, HR, Employment, Automotive); Social, legal and media (Social, Justice, Media, Gaming); Science and vision (Physics, Biology, Vision, Speech).}
\end{table*}

\begin{table*}[!t]
\centering
\scriptsize
\providecolor{bghead}{HTML}{DBEFF8}\providecolor{bgzebra}{HTML}{F2F9FC}\providecolor{bgpass}{HTML}{D7F0D2}\providecolor{bgfail}{HTML}{FCDCD7}
\setlength{\tabcolsep}{3pt}
\caption{Clinical tasks.}
\label{tab:tasks}
\begin{tabular}{@{}lllll@{}}
\toprule
Task & Population & Outcome & Predictors / timing & Hypothetical action \\
\midrule
mammography & Screening mammograms (OpenML 310) & Malignant calcification & Image-derived features at reading & Recall for biopsy \\
actg175 & HIV-1 trial participants (UCI 890) & Recorded event: CD4 decline $\geq50\%$, AIDS, or death & Baseline covariates only & Intensify monitoring \\
nhanes\_* & US adults $\geq$20 y, NHANES 2017--18 & Self-reported: hypertension, diabetes, asthma, CHD & Exam \& questionnaire, same visit & Refer for testing \\
brfss\_* & US adults, BRFSS 2015 & Self-reported: stroke, asthma, kidney disease, depression & Survey responses, same interview & Refer for screening \\
covid\_mx & Confirmed COVID-19 patients, Mexico & In-hospital/registered death & Admission data (comorbidities, triage) & Escalate care \\
wids\_icu & ICU admissions, GOSSIS (WiDS 2020) & Hospital death & First-24 h ICU measurements & Escalate care \\
\bottomrule
\end{tabular}

\vspace{2pt}
\parbox{\linewidth}{\justifying\scriptsize *Clinical tasks: population, outcome, timing of predictors and hypothetical action. The COVID-19 cohort is the open data released by the Mexican Secretaría de Salud. NHANES outcomes are physician diagnoses as reported by the respondent; the NHANES and BRFSS survey tasks are cross-sectional health-related classification (predictors from the same visit or interview), not prognosis.}
\end{table*}

\paragraph{Inclusion and exclusion rules.} A dataset qualified for the benchmark on three conditions: it was publicly available, it was tabular or image-derived in tabular form, and its labels were recorded. A dataset was excluded for any of the following reasons:
\begin{itemize}
\item The data were synthetic or simulated, or formed a time series.
\item The binary target came from thresholding a continuous variable, or its provenance could not be verified.
\item The class ratio was exactly $50\%$, which marks artificially balanced data, or more than $25\%$ of the rows were duplicates.
\item A single feature predicted the target almost without error (feature leakage, as for PhiUSIIL, AUC $0.996$).
\item The dataset family had already informed the design, or the task was a regression.
\item The rows were grouped by subject or session, or identifiers indicated leakage that de-duplication could not remove.
\end{itemize}

The $25\%$ duplicate threshold and the other rules were fixed in the pre-registration of run MINI-21 before any results were seen (Table~\ref{tab:chronology}). The benchmark expanded over successive selection rounds, so the rules above are the union of the rules of all rounds. The versioned list, with the reason for each dataset, is maintained in the code repository.

\paragraph{Synthetic population $\mathcal{Q}$: tasks and seeds.} The synthetic analysis draws $1{,}400$ tasks i.i.d.\ from a fixed generative population $\mathcal{Q}$ and runs one trial per task; the population was pre-specified in run MINI-25 (Table~\ref{tab:chronology}). Each task carries the seed of the generator that creates it (generator in the code repository). The primary batch uses seeds $0$ to $1{,}399$, which no pilot run touched; the replication batch uses $1{,}500$ to $2{,}899$; and later runs take new seeds, such as $2{,}900$ to $4{,}299$ for the tuning runs.

\paragraph{Synthetic population $\mathcal{Q}$: label mechanism.} Each task is generated in its own batch of $103{,}000$ rows, with $3{,}000$ development rows and $100{,}000$ reference rows. The development rows allow the same $2{,}430$ training rows as on the real data, and the reference measures the true gain with a small error.

The label mechanism is drawn with equal probability from a tree mechanism and a linear mechanism. Under the tree mechanism, each task draws $p\in[5,40]$ standard normal features, $4$ to $12$ pairwise-threshold interactions, weights $\sim\mathcal{N}(0,2^{2})$, thresholds $\sim\mathcal{N}(0,0.7^{2})$, and two feature indices uniformly. Under the linear mechanism, each task draws $p\in[5,40]$ standard normal features with coefficients $\beta\sim\mathcal{N}(0,1)$; a fraction of the coefficients is set to zero, and the rest are divided by the square root of the number of non-zero coefficients. Three parameters are drawn per task: the signal strength $s\in[0.05,1]$, a class-balance parameter $m\in[0.05,0.5]$, and the linear sparsity $\lambda\in[0,0.7]$.

Three pilot rounds preceded the pre-registration of MINI-25 (Table~\ref{tab:chronology}), where the ranges and constants were fixed. With a strong signal, the naive rule accepted a false improvement on $1$ of $59$ pilot tasks; after the signal range was extended down to $s=0.05$, it did so on $6$ of $58$.

Within each batch, the logit is standardized and scaled by $s$, and the labels are Bernoulli draws whose probability is the logistic transform of the scaled logit minus an intercept, the batch quantile of order $1-m$ of the scaled logit. A stratified split sets the development rows apart from the reference rows. The parameter $m$ shifts the intercept and does not fix the class proportion: the realized minority proportion averages $42\%$ and lies between $29\%$ and $50\%$ for $90\%$ of the tasks.

\paragraph{Synthetic population $\mathcal{Q}$: design and truth.} A pilot phase designed the population so that the naive false-improvement rate is realistic, at least $15\%$ and close to the $19\%$ seen on real data; the pilot tasks enter no reported result. A Gaussian mechanism was dropped because the naive rule was almost never wrong on it, and the results of BRIDGE were not used in these design choices. The true gain $\Delta$ is computed on each task's own reference of $100{,}000$ rows, and the decision pairs are i.i.d.\ across tasks (Assumption~\ref{ass:pop}). The standard error of the reference has a median of about $0.0014$, roughly a quarter of $\varepsilon=0.005$. The comparison uses the estimated rate $\hat r$, the bound $U$ of Proposition~\ref{prop:bound} \eqref{eq:U}, and a conservative variant that counts updates of uncertain reference status as false improvements (Table~\ref{tab:synthetic}).

\paragraph{Training rows and reference size.} A task has $2{,}430$ training rows when at least $2{,}480$ labeled examples remain after the reference is locked, and $1{,}000$ training rows otherwise (Table~\ref{tab:splits}). SEARCH takes $25.9\%$ of the training rows and POOL the rest \eqref{eq:partition}. The default tasks therefore have $630$ SEARCH rows and $1{,}800$ POOL rows, and Table~\ref{tab:hyperbasis} gives the basis of these sizes. When $N<40{,}000$, the reference has $\lfloor 0.3N\rfloor$ rows, capped at $8{,}000$. When $N\ge40{,}000$, the reference has $\min(N-20{,}000,\,100{,}000)$ rows (Table~\ref{tab:hyperbasis}).

\paragraph{Checks on the sizes.} Two checks examine the sizes chosen above (Table~\ref{tab:citedruns}). First, the median standard error of the gain on the locked reference is $0.0003$ to $0.0020$ on the five new datasets whose reference holds at least $40{,}000$ rows, and $0.0043$ to $0.0152$ on the six datasets with a shorter reference. Second, a SEARCH of $360$, $630$, or $1{,}000$ rows on $\mathcal{Q}$ gives bounds of $1.65\%$, $3.56\%$, and $3.11\%$, at yields of $32.1$ to $33.9$ per $100$, and all three sizes meet the criterion.

\FloatBarrier
\section{Pre-Registration Chronology}
\label{app:prereg}
\apptables{tab:citedruns}{tab:chronology2}

Nineteen development runs (MINI-01 to MINI-19) fixed the seven parameters and the candidate space, and the gate was then frozen (Table~\ref{tab:chronology}). Every later run tests that frozen gate on new data, in sensitivity analyses, and in the conditions of use; variants are reported beside the default, and no run changes the default.

The numbers quoted in the text from development, confirmatory, and sensitivity runs that appear in no other table are recomputed from the stored outputs and listed in Table~\ref{tab:citedruns}, sorted by run number.

\begin{table*}[!t]
\centering
\scriptsize\renewcommand{\arraystretch}{0.92}
\setlength{\tabcolsep}{2.2pt}
\caption{Runs cited in the text, by run number.}
\label{tab:citedruns}\label{tab:devruns}
\begin{minipage}[t]{0.494\textwidth}\centering
\begin{tabular}[t]{@{}lrrrr@{}}
\toprule
Check & \begin{tabular}[b]{@{}c@{}}Acc./\\false\end{tabular} & \begin{tabular}[b]{@{}c@{}}Bound\\\% $\downarrow$\end{tabular} & \begin{tabular}[b]{@{}c@{}}LOCO\\\% $\downarrow$\end{tabular} & Yield \\
\midrule
\multicolumn{5}{@{}l}{\textit{MINI-02 $\cdot$ 34 (D) $\times$ 100}} \\
\quad frozen gate, threshold-only HGB & 586 / 1 & 1.24 & 1.45 & 17.2$^{a}$ \\
\addlinespace[1pt]
\multicolumn{5}{@{}l}{\textit{MINI-04 $\cdot$ 34 (D) $\times$ 100}} \\
\quad threshold-only KNN, cross-fit gate & 150 / 7 & 14.04 & 19.95 & 4.4 \\
\addlinespace[1pt]
\multicolumn{5}{@{}l}{\textit{MINI-09 $\cdot$ 18 dev. $\times$ 100}} \\
\quad third member KNN & 218 / 3 & 5.71 & 8.70 & 12.1 \\
\quad third member SVC & 269 / 4 & 5.27 & 8.41 & 14.9 \\
\addlinespace[1pt]
\multicolumn{5}{@{}l}{\textit{MINI-12 $\cdot$ 18 dev. $\times$ 100}} \\
\quad tuned HGB as the update & 193 / 5 & 8.61 & 12.32 & 10.7 \\
\quad tuned HGB, baseline and member & 356 / 13 & 8.12 & 9.63 & 19.8 \\
\quad default HGB, baseline and member & 455 / 10 & 5.20 & 6.41 & 25.3 \\
\addlinespace[1pt]
\multicolumn{5}{@{}l}{\textit{MINI-14 $\cdot$ 27 (D) $\times$ 40 per size}} \\
\quad tuning on 630 rows & 87 / 1 & 10.12 & 14.27 & 8.1 \\
\quad tuning on 2,000 rows & 40 / 0 & 14.52 & 20.08 & 3.7 \\
\quad tuning on 5,000 rows & 30 / 0 & 20.95 & 59.82 & 2.8 \\
\quad all three sizes (3,240 dec.) & 157 / 1 & 5.21 & 6.67 & 4.8 \\
\addlinespace[1pt]
\multicolumn{5}{@{}l}{\textit{MINI-16 $\cdot$ 18 dev. $\times$ 100}} \\
\quad accuracy & 234 / 6 & 7.75 & 11.94 & 13.0 \\
\quad balanced accuracy & 1,352 / 4 & 0.93 & 1.01 & 75.1 \\
\addlinespace[1pt]
\multicolumn{5}{@{}l}{\textit{MINI-17 $\cdot$ size study, 3,240 trials}} \\
\multicolumn{5}{@{}l}{\quad skipped at 1,000 training rows: avida-hil6 2, porto-seguro 40} \\
\addlinespace[1pt]
\multicolumn{5}{@{}l}{\textit{MINI-19 $\cdot$ 8 wide dev.\ datasets}} \\
\quad union policy, first 50 trials (400 dec.) & 61 / 1 & 15.10 & 32.37 & 15.2$^{b}$ \\
\quad extended to 150 (MINI-19b; 1,200 dec.) & 179 / 5 & 9.30 & 20.58 & 14.9$^{c}$ \\
\addlinespace[1pt]
\multicolumn{5}{@{}l}{\textit{MINI-20 $\cdot$ size study, 8,100 trials}} \\
\multicolumn{5}{@{}l}{\quad skipped at 1,000 training rows: avida-hil6 3, porto-seguro 100} \\
\addlinespace[1pt]
\multicolumn{5}{@{}l}{\textit{MINI-24 $\cdot$ 11 new, dataset-level bootstrap bound$^{d}$}} \\
\quad naive &  & 29.4 &  &  \\
\quad $5\times2$ cv &  & 13.0 &  &  \\
\quad corrected $t$ &  & 12.5 &  &  \\
\quad BRIDGE &  & 3.1 &  &  \\
\bottomrule
\end{tabular}
\end{minipage}\hfill
\begin{minipage}[t]{0.494\textwidth}\centering
\begin{tabular}[t]{@{}lrrrr@{}}
\toprule
Check & \begin{tabular}[b]{@{}c@{}}Acc./\\false\end{tabular} & \begin{tabular}[b]{@{}c@{}}Bound\\\% $\downarrow$\end{tabular} & \begin{tabular}[b]{@{}c@{}}LOCO\\\% $\downarrow$\end{tabular} & Yield \\
\midrule
\multicolumn{5}{@{}l}{\textit{MINI-27 $\cdot$ 11 new, 1,064 dec.}} \\
\multicolumn{5}{@{}l}{\quad error of the gain estimate$^{e}$: naive $+0.0034$; cross-fitted $-0.0034$} \\
\addlinespace[1pt]
\multicolumn{5}{@{}l}{\textit{MINI-28 $\cdot$ 12 clin. $\times$ 100, weighted-$F_1$ stage}} \\
\quad BRIDGE & 222 / 6 (2.7) & 8.15 & 12.80$^{f}$ &  \\
\quad naive & 1,145 / 254 (22.2) & 26.16 & 28.66$^{f}$ &  \\
\addlinespace[1pt]
\multicolumn{5}{@{}l}{\textit{MINI-31 $\cdot$ $\mathcal{Q}$, 1,400}} \\
\quad XGBoost project default as base & 138 / 16 (11.6) & 19.33$^{g}$ &  & 9.9 \\
\addlinespace[1pt]
\multicolumn{5}{@{}l}{\textit{MINI-33b $\cdot$ $\mathcal{Q}$, 1,400}} \\
\multicolumn{5}{@{}l}{\quad regularized XGBoost below HGB in base weighted-$F_1$: 94.1\% of tasks} \\
\addlinespace[1pt]
\multicolumn{5}{@{}l}{\textit{MINI-34 $\cdot$ $\mathcal{Q}$, 1,400, 9 configurations}} \\
\quad accepted, skill-positive base & 2,161 / 1 (0.05) &  &  &  \\
\quad accepted, skill-negative base & 1,142 / 77 (6.74) &  &  &  \\
\multicolumn{5}{@{}l}{\quad Spearman $\rho$, skill vs bound, 9 configurations: $-0.82$ ($p=0.007$)} \\
\multicolumn{5}{@{}l}{\quad Spearman $\rho$, 11 configurations incl.\ 2 MLPs: $-0.74$ ($p=0.010$)} \\
\addlinespace[1pt]
\multicolumn{5}{@{}l}{\textit{MINI-37 $\cdot$ $\mathcal{Q}$, 1,400}} \\
\quad 16 bags, SEARCH 630 & 459 / 9 & 3.98$^{g}$ &  & 32.8 \\
\quad 32 bags, SEARCH 630 & 475 / 8 & 3.56$^{g}$ &  & 33.9 \\
\quad 64 bags, SEARCH 630 & 465 / 5 & 2.69$^{g}$ &  & 33.2 \\
\quad 32 bags, SEARCH 360 & 464 / 2 & 1.65$^{g}$ &  & 33.1 \\
\quad 32 bags, SEARCH 1,000 & 450 / 6 & 3.11$^{g}$ &  & 32.1 \\
\multicolumn{5}{@{}l}{\quad Monte Carlo error$^{h}$, $\mathcal{Q}$: SD 0.00014; 0.3\%; 1.0\% (14 of 1,400)} \\
\multicolumn{5}{@{}l}{\quad Monte Carlo error$^{h}$, 11 new $\times$ 10: SD 0.00009; 0.7\%; 1.8\% (2 of 110)} \\
\multicolumn{5}{@{}l}{\quad reference SE$^{i}$, 5 new datasets, reference $\ge$ 40,000: 0.0003--0.0020} \\
\multicolumn{5}{@{}l}{\quad reference SE$^{i}$, 6 new datasets, reference $<$ 40,000: 0.0043--0.0152} \\
\addlinespace[1pt]
\multicolumn{5}{@{}l}{\textit{MINI-38 $\cdot$ 11 new $\times$ 100, three reference seeds}} \\
\quad original seed: naive & 1,064 / 203 (19.1) & 22.90 & 25.27 & 96.7 \\
\quad original seed: BRIDGE & 284 / 3 (1.1) & 4.23 & 5.95 & 25.8 \\
\quad seed 1: naive & 1,078 / 244 (22.6) & 26.60 & 29.29 & 98.0 \\
\quad seed 1: BRIDGE & 340 / 8 (2.4) & 6.09 & 8.12 & 30.9 \\
\quad seed 2: naive & 1,067 / 236 (22.1) & 26.15 & 28.83 & 97.0 \\
\quad seed 2: BRIDGE & 289 / 2 (0.7) & 3.42 & 5.05 & 26.3 \\
\addlinespace[1pt]
\multicolumn{5}{@{}l}{\textit{MINI-50 $\cdot$ joint seven-family space}} \\
\multicolumn{5}{@{}l}{\quad (A): naive false impr. 22.4\%; BRIDGE 1.6\%} \\
\multicolumn{5}{@{}l}{\quad (D): naive false impr. 13.2\%; BRIDGE 0.7\%} \\
\addlinespace[1pt]
\multicolumn{5}{@{}l}{\textit{MINI-53 $\cdot$ (E), 1,280 dec., post hoc check$^{j}$}} \\
\quad default (DT and SVC) & 192 / 2 (1.0) & 5.40 & 8.43 & 15.0 \\
\quad members LR and NB & 244 / 6 (2.5) & 7.35 & 10.16 & 19.1 \\
\quad members MLP and KNN & 147 / 6 (4.1) & 12.94 & 20.77 & 11.5 \\
\multicolumn{5}{@{}l}{\quad false improvements of both pairs: 12; reference-uncertain 11, on references of 1,125--2,112 rows} \\
\multicolumn{5}{@{}l}{\quad clearly worse: 1 (airline, reference 100,000 rows, gain $-0.0017$)} \\
\bottomrule
\end{tabular}
\end{minipage}

\vspace{2pt}
\parbox{\textwidth}{\justifying\scriptsize *Runs MINI-02 to MINI-19: development runs (Table~\ref{tab:chronology}). The later runs are confirmatory, sensitivity, or descriptive. Data: datasets $\times$ trials per dataset, or tasks. (D): the $34$ confirmation datasets of the supporting group, of which $27$ are in MINI-14; dev.: the $18$ development datasets. new: the $11$ new datasets of (A). clin.: the $12$ clinical tasks. dec.: decisions. Each run header gives the run number and the data of the run; a row that spans the columns states a result in words. Acc./false: accepted / false improvements, with the false-improvement share in parentheses (\%). Bound: KL-ratio bound ($b=1$), \%. LOCO: leave-one-dataset-out bound, \%. Yield: accepted updates per $100$ decisions. Superscripts. Power check LOCO$_0$, the LOCO bound recomputed with zero false improvements: a, $0.81\%$; b, $18.22\%$; c, $5.54\%$. With $50$ trials per dataset the bound exceeds $10\%$ even when no decision is a false improvement, and $100$ or more trials bring it below $10\%$. d: $95\%$ upper bound of the false-improvement rate, from a bootstrap over the $11$ datasets with 10,000 resamples, \%. e: estimate minus reference truth for the deployed configuration, against the margin $0.005$. f: LOCO by source. g: Proposition~\ref{prop:bound} bound on $\mathcal{Q}$. h: three Monte Carlo checks, namely the standard deviation of the 5\% bootstrap quantile over 20 bootstrap seeds per trial, the share of re-runs whose decision one alternative seed changes, and the share of trials in which at least one of the 20 seeds changes the decision. i: range of the per-dataset median bootstrap standard error of the reference gain, separately for references of 43,634 to 100,000 rows ($\ge$ 40,000) and 468 to 1,143 rows ($<$ 40,000). MINI-12 and MINI-14 tune HGB with Optuna (tree-structured Parzen estimator, 25 search iterations) on the 630-row SEARCH split; MINI-14 repeats this at 630, 2,000 and 5,000 tuning rows. Mean reference weighted-$F_1$ in MINI-12: $0.8180$ for the default HGB and $0.8206$ for the tuned learner, a mean gain of $+0.0026$. MINI-14, the mean reference gain of the tuned learner at $630$, $2{,}000$, and $5{,}000$ rows and over all sizes is $+0.0027$, $+0.0021$, $+0.0020$, and $+0.0023$. Share of trials whose $95\%$ reference interval lies below zero: $12.3\%$, $9.0\%$, $6.7\%$, and $9.3\%$. MINI-17/MINI-20: trials skipped because a bag had fewer than 10 examples of a class. MINI-28: the clinical stage with weighted-$F_1$ as the primary endpoint, before the net-benefit stage (MINI-30). MINI-31: XGBoost with the project defaults (300 rounds, no early stopping) during development. MINI-33b: share of tasks on which the regularized XGBoost has a lower base weighted-$F_1$ than HGB on the locked reference. MINI-34: accepted updates pooled over the nine pre-registered HGB and XGBoost configurations, split by the sign of the base learner's log-loss skill on POOL; Spearman $\rho$ between median skill and bound (Table~\ref{tab:basefull}). MINI-37: the bag count and the SEARCH size, varied one at a time. MINI-38: the outer split is repeated with two fresh reference seeds (original 20261125; seed 1: 1038000001; seed 2: 1038000002). j: post hoc, descriptive. reference-uncertain: the 95\% Bayesian-bootstrap interval of the reference gain contains zero. clearly worse: the interval lies below zero.}
\end{table*}

Table~\ref{tab:chronology} summarizes the path from development to confirmation. For each confirmatory run, the table gives the primary endpoint and its outcome, including predictions that were \emph{not} confirmed. The pre-registered predictions of the component analysis (Section~\ref{sec:exp-ablation}) appear as P1, P2, and P3 in Table~\ref{tab:chronology}. The pre-registration documents record the labels H1, H2, and H35 for the same predictions. Run identifiers of the form MINI-$n$ label the pre-registered runs \cite{nosek2018} and match the scripts and CSV archives in the code repository.

\begin{table*}[!t]
\centering
\scriptsize\renewcommand{\arraystretch}{0.93}
\setlength{\tabcolsep}{2.6pt}
\caption{Chronology of the pre-registered runs, ordered by run number.}
\label{tab:chronology}\label{tab:chronology2}
\begin{tabular}{@{}lllp{262pt}@{}}
\toprule
Run & Fixed beforehand & End. & Outcome \\
\midrule
01--19 & mechanism studies; thresholded HGB, HGB--DT ensembles, the $173$ space & Beh & validation--test independence hypothesis \emph{not} supported \\
04 & threshold-only KNN with the cross-fit gate & Crt & fails (bound $14.0\%$) \\
09 & third ensemble member: KNN or SVC & Yld & SVC chosen: higher yield at a comparable false-improvement rate \\
12, 14 & Optuna-tuned HGB on SEARCH & Dsc & small gains; clearly worse in up to $12\%$ of trials, so the fixed configuration is frozen \\
17, 20 & $1{,}000$/$2{,}430$/$6{,}000$ training rows (17 dev.); fresh seeds & Crt & pass (bound $3.04$/$1.05$/$1.37\%$) \\
21 & $11$ new binary datasets, weighted-$F_1$ & Crt & pass (bound $4.23\%$, LOCO $5.95\%$) \\
22 & $5$ new multiclass datasets of (D) & Crt & pass (bound $2.11\%$, LOCO $3.16\%$) \\
23, 24 & comparators on identical decisions & Bnd & test-statistic-based rules fail; BRIDGE passes (Table~\ref{tab:rules}) \\
25 & $1{,}400$ i.i.d.\ tasks of $\mathcal{Q}$, one trial each & Prp & pass ($3.56\%$; conservative variant $4.70\%$) \\
26 & honest holdout on the same labeled data & Yld & BRIDGE $1.40$--$1.86\times$ as many updates; false improvements $1.1\%$ against $1.3$--$3.2\%$ \\
27 & seeds and models of MINI-21; $173$ candidates & Dsc & naive estimate $+0.0034$, cross-fitted $-0.0034$ (Table~\ref{tab:citedruns}) \\
28 & $12$ clinical tasks, weighted-$F_1$ primary; LOCO by source & Crt & LOCO fails at $100$ trials per task \\
29 & fixed margin $t^*$ from development, at $6{,}000$ rows and on new data & Prd & failure confirmed: near $t^*{=}0.05$ yields $1.4$, $0.1$; bounds $16.1\%$, $100\%$ \\
29 & fixed margins chosen post hoc in sample & Dsc & $t\in\{0.020,\dots,0.035\}$ pass $7/7$ settings; leave-one-out: BRIDGE $7/7$, margin $6/7$ \\
30 & the same $12$ tasks; net benefit $p_t{=}0.2$, balanced accuracy & Crt & pass (bound $0.33\%$) \\
31--36 & XGBoost and MLP as base learners & Crt & XGBoost $\ge100$ rounds without early stopping fails; early-stopped learners pass \\
33--36 & skill diagnostic and two MLP predictions & Prd & skill rule and both MLP predictions \emph{not} confirmed \\
32, 35 & ablation $2\times2\times2$; $|\mathcal{C}|$ up to $11{,}249$ & Mec & same-POOL cell also meets the criterion; predictions \textbf{P1 and P3 not confirmed} \\
37 & bag count, SEARCH size; bootstrap and reference precision & Crt & all variants pass (bags $2.7$--$4.0\%$, SEARCH $1.6$--$3.6\%$; Table~\ref{tab:citedruns}) \\
38 & $11$ new datasets; two fresh reference seeds & Crt & pass at both seeds (false improvements $0.7$--$2.4\%$, bounds $3.4$--$6.1\%$, worst LOCO $8.1\%$) \\
39, 40 & HGB, DT, SVC, KNN tuned in SEARCH; (A), (B), multiclass (D) & Crt & false improvements lower in all $12$ cells; criterion met in $2$ (Table~\ref{tab:tuning}) \\
41 & minority prevalence $20$/$10$/$5\%$; three metrics & Crt & balanced accuracy, macro-$F_1$ pass; weighted-$F_1$ exceeds LOCO (Table~\ref{tab:imbalance}) \\
43 & $39$ datasets of (D); fresh seeds & Crt & pass (bound $1.47\%$, LOCO $1.62\%$, LOCO$_0$ $0.39\%$, yield $30.1$) \\
44 & stored seeds of (A), (D); tuned baseline; $q$--$\varepsilon$ grid; $\eta=2$ pp & Tst & tuning raises the baseline by about $0.3$ points (Tables~\ref{tab:tuningtests}, \ref{tab:qepsreal}) \\
45, 50 & LR--NB and MLP--KNN pairs; joint seven-family space & Crt & met in (A) and (D) with every pair and the joint space (Table~\ref{tab:members}) \\
46, 49 & XGBoost ($300$ rounds; early stopping), MLP; final quality & Crt & XGBoost with $300$ rounds fails in (A); early-stopped learners pass (Table~\ref{tab:basereal}) \\
47 & model seed, bag fraction, bag count, SEARCH size & Crt & (D) passes every variant; (A) two variants fail (Appendix~\ref{app:sens}) \\
48 & comparators, holdouts, ablation cells on (C), (D) & Tst & BRIDGE lower than or not different from every comparator (Table~\ref{tab:rulesCD}) \\
51, 52 & grids of $1{,}825$ and $11{,}249$ on (C), (D), binary (E) & Crt & (C), (D) pass at every size; binary (E) at $1{,}825$ (Table~\ref{tab:kspace}) \\
53 & members, base learners, joint space on (B), (C) & Crt & every design passes except XGBoost with $300$ rounds on (B) (Table~\ref{tab:members}) \\
53 & the same designs on (E) & Crt & early-stopped learners and joint space pass; member pairs and $300$-round XGBoost miss \\
(E) & $14$ new OpenML datasets; frozen gate & Crt & pass jointly ($192/2$; bound $5.40\%$, LOCO $8.43\%$); subgroups do not meet every criterion \\
(E2) & $19$ new OpenML datasets; frozen gate & Crt & pass jointly ($452/7$; bound $4.16\%$, LOCO $5.44\%$) \\
(E2) & subgroups by class count & Crt & binary passes only with the flagged \texttt{churn}; multiclass fails LOCO \\
(E2) & amendment: \texttt{pendigits} replaces \texttt{shuttle} ($60$ trials stopped) & Dsc & replaced before any metric was read; deviates from the grouped-data rule (Appendix~\ref{app:settingE}) \\
post hoc & stored per-trial outputs, reanalyzed & Dsc & random-selection control and gain--loss; \emph{not} pre-registered (Table~\ref{tab:selectivity}) \\
\bottomrule
\end{tabular}

\vspace{2pt}
\parbox{\linewidth}{\justifying\scriptsize *Run: MINI-$n$ run number. Rows are ordered by the first run number, runs that answer the same question share a row, and a run with two findings has two rows. MINI-01 to MINI-19 are development runs (dev.). Endpoint codes: Crt, the pre-registered criterion (bound $<10\%$, LOCO $<10\%$, LOCO$_0$ $<10\%$, yield $\ge5$; Table~\ref{tab:criterion}); Bnd, a bound below $10\%$; Prp, the bound of Proposition~\ref{prop:bound}; Tst, paired differences or TOST; Prd, a pre-registered prediction; Beh, the behavior of the gate by metric, space, and size; Yld, the yield and the false-improvement rate; Mec, the mechanism and the stability in $|\mathcal{C}|$; Dsc, descriptive, with no endpoint. The OpenML collections (E) and (E2) and the post hoc reanalysis close the table, each with its own pre-registration or with none. The run number MINI-42 labels the one-pass analysis of the OpenML datasets (Appendix~\ref{app:settingE}). Code manifests (SHA-256) recorded before each confirmatory seed; the runs are distributed across three machines, per dataset or per block of tasks on the synthetic population, and the cross-machine checks are described in the reproducibility paragraph of this appendix; $0$ trials stopped with an error on confirmatory runs. The pre-registration is internal: the times are recorded by the author. The pre-registration index in the code repository lists, per run, the pre-registration file, its recorded time, and the SHA-256 of the file and of the manifest, and the release tag at submission fixes the package contents; the hashes and the cross-machine checks verify the contents, not the times.}
\end{table*}

\paragraph{Execution and documented bugs.} Every trial of the confirmatory runs MINI-20 to MINI-38 completed without an error (Table~\ref{tab:chronology}).

Four documented bugs appear in the chronology with the corrections (Table~\ref{tab:chronology}). A single-class bag error in the early-stopping split (MINI-18) was fixed by requiring at least $2$ examples of each occurring class per bag. The first launch of MINI-29 stopped before producing data, because a helper function could not be passed to the parallel workers and the queue script reported completion despite the failure. The function was moved to module level, and the script now checks the exit status, with trial logic, data, and seeds unchanged. The first cross-machine verification of MINI-20 reported $83$ differences, caused by a seed argument in the wrong position; after the fix, the second verification found none. A concatenation bug misreported the naive count, which was corrected in MINI-21.

\paragraph{Tuning search spaces.} The tuning runs (MINI-12, MINI-14, MINI-39, MINI-40) used the tree-structured Parzen estimator of Optuna with $25$ iterations and three-fold cross-validation in SEARCH, scoring weighted-$F_1$ (Table~\ref{tab:chronology}). The budget was fixed in MINI-12 and used unchanged, and the search ranges were fixed in the pre-registration of each run.

MINI-12 and MINI-14 tuned HGB with the same sampler and space: learning rate $0.01$ to $0.3$ (log scale), $50$ to $300$ boosting iterations, $3$ to $63$ leaf nodes, $1$ to $50$ minimum samples per leaf, and L2 regularization $10^{-8}$ to $10$ (log scale). MINI-39 and MINI-40 tuned four learners, with spaces in the scikit-learn parameter names. For DT: \texttt{max\_depth} $2$ to $30$, \texttt{min\_samples\_split} $2$ to $20$, \texttt{min\_samples\_leaf} $1$ to $20$, and Gini or entropy. For SVC: $C$ from $10^{-3}$ to $10^{3}$ (log scale), a radial basis function (RBF), linear, or polynomial kernel, $\gamma$ set to \texttt{scale} or \texttt{auto}, degree $2$ to $4$, and \texttt{coef0} $0$ to $1$. For KNN: \texttt{n\_neighbors} $1$ to $31$, uniform or distance weights, and the Minkowski exponent $p\in\{1,2\}$.

\paragraph{Development check.} On the development data, the gate met its bound on accuracy ($7.75\%$) and exceeded the LOCO limit ($11.9\%$). The default threshold is already close to optimal for accuracy, and candidates are seldom better; the yield is $13.0$ per $100$ decisions on accuracy, against $75.1$ on balanced accuracy (MINI-16, Table~\ref{tab:devruns}). Rare acceptance widens the KL-ratio and LOCO bounds, as in the tuning limit of Section~\ref{sec:limitations}.

\paragraph{Clinical runs.} The clinical evidence comes from two pre-registered runs on the same $12$ tasks and $6$ sources. The first fixed weighted-$F_1$ as its primary endpoint and failed the LOCO criterion at $100$ trials per task (KL bound $8.15\%$, per-source LOCO $12.8\%$; MINI-28 in Table~\ref{tab:citedruns}). The second was pre-registered after that failure, with net benefit at $p_t=0.2$ and balanced accuracy as co-primary on the same tasks with fresh seeds.

\paragraph{Reproducibility.} Experiments used Python 3.12, scikit-learn 1.9.1, NumPy 2.5.3, pandas 3.0.6, SciPy 1.18.1, and XGBoost 3.4.1. On the tested Intel and AMD machines, all trials reproduce bit-identically, except the $17$ trials that use Optuna tuning or mutual-information feature selection, which are identical on the same CPU family (Intel or AMD), and one MLP--KNN trial of MINI-53, whose reference gain differs between CPU families without changing the decision. The released software package (C4, Section~\ref{sec:intro}) runs the paper's procedure as its default behavior and reproduces every stored result row of the $15$ development and confirmatory runs, including all $3{,}300$ binary decisions of MINI-21 and the $300$ multiclass decisions of MINI-22 (Table~\ref{tab:chronology}). The identity checks of every run are stored with its manifest in the code repository.

\setcounter{topnumber}{4}\setcounter{dbltopnumber}{6}\setcounter{bottomnumber}{2}\setcounter{totalnumber}{6}

\section{Data Separation, Frozen Parameters, Candidate Spaces, Sensitivity, Fixed Margin, and Cost}
\label{app:sens}
\apptables{tab:leakage}{tab:complexity}

\paragraph{Data used for each choice.} For each choice in the procedure, Table~\ref{tab:leakage} records the data with which the choice is made and the place where the choice is assessed. The table states the separation of Section~\ref{sec:method-gate} explicitly: within the cross-fit, each per-fold candidate is assessed on the fold that did not select it.

\begin{table}[!htb]
\centering
\scriptsize
\setlength{\tabcolsep}{2.5pt}
\caption{What each choice is made with, and where it is assessed.}
\label{tab:leakage}
\begin{tabular}{@{}>{\raggedright\arraybackslash}p{2.75cm}>{\raggedright\arraybackslash}p{2.35cm}>{\raggedright\arraybackslash}p{2.95cm}@{}}
\toprule
Choice & Made with & Assessed on \\
\midrule
Model type and hyperparameters & fixed \emph{a priori}, no tuning (rationale: Section~\ref{sec:method-partition}) & --- \\
Candidate space ($173$ binary default) and gate parameters & $18$ development datasets, frozen & families of new data never seen in development (new binary, new multiclass, clinical, fresh synthetic); the $34$ confirmation datasets of (D) with fresh seeds \\
Preprocessing (imputation, scaling, encoding) & fitted per bag, SEARCH only; WiDS category codes are listed from all rows before the split (covariates only, no labels) & POOL and reference \\
Base models (up to $32$ bag draws, kept bags averaged); early stopping & SEARCH (internal bag validation) & POOL and reference \\
Candidate selection inside the gate & $2/3$ of POOL & the held-out $1/3$ of POOL \\
Accept / reject & lower bootstrap quantile of the cross-fit counts & --- \\
Ground truth (false improvement or not) & --- & locked reference, never used for any decision \\
\bottomrule
\end{tabular}

\vspace{2pt}
\parbox{\linewidth}{\justifying\scriptsize *Within the cross-fit, every per-fold candidate is assessed on the fold that was not used to select it; the final configuration is re-selected on all of POOL (Section~\ref{sec:method-gate}).}
\end{table}

\rowcolors{0}{}{}

\begin{table*}[!t]
\centering
\scriptsize\renewcommand{\arraystretch}{0.95}
\setlength{\tabcolsep}{2.15pt}
\caption{Frozen parameters of BRIDGE and of the evaluation protocol: value and basis of each.}
\label{tab:hyperbasis}
\begin{tabularx}{\textwidth}{@{}llc>{\raggedright\arraybackslash}X@{}}
\toprule
Parameter & Value & Set & Tested in \\
\midrule
Cross-fit folds $F$ & $3$ & Dev & $F\in\{2,3,5\}$ on $\mathcal{Q}$ (Table~\ref{tab:sens31}) \\
Replicates $N_{\mathrm{boot}}$ & $20{,}000$, multinomial over the triple counts & Dev & MC error, $20$ seeds; $2{,}000$ (Table~\ref{tab:citedruns}) \\
Lower quantile $q$ & $5\%$, decision statistic (not the error level $\delta=0.05$) & Dev & $2.5$/$5$/$10\%$ (Tables~\ref{tab:sens31}, \ref{tab:qepsreal}) \\
Margin $\varepsilon$ & $0.005$ & Dev & $0.0025$--$0.01$ (Tables~\ref{tab:marginsweep}, \ref{tab:marginpaired}) \\
HGB base learner & lr $0.1$, $\le300$ iter., $\le31$ leaves, $\ge20$/leaf, no L2; early stop: $10$ iter., $10\%$ of bag & Lib & tuned HGB (Tables~\ref{tab:tuning}, \ref{tab:tuningtests}) \\
Bags & $\le32$ draws of $80\%$ of SEARCH; kept if every class $\ge10$ (multiclass $\ge2$); averaged & Dev & $16$, $32$, $64$ bags on $\mathcal{Q}$ (Table~\ref{tab:citedruns}) \\
DT and SVC members & DT: library defaults; SVC: $C=1$, RBF, $\gamma=$ scale, Platt probabilities & Dev & SVC over KNN (Table~\ref{tab:citedruns}) \\
Preprocessing & median imputation, standardization; mode imputation, ordinal codes ($-1$ unseen) & Std & fitted per bag on SEARCH only, with the WiDS exception in Table~\ref{tab:leakage} \\
Candidate space & $173$ binary configurations; $14+8K$ for $K$ classes (composition below) & Dev & $1{,}825$ and $11{,}249$ configurations (Table~\ref{tab:kspace}) \\
Training rows & $2{,}430$ = SEARCH $630$ + POOL $1{,}800$; $1{,}000$ below $2{,}480$ rows & Dev & $1{,}000$/$2{,}430$/$6{,}000$ (Table~\ref{tab:general}); SEARCH $360$--$1{,}000$ \\
Locked reference & $0.3N\le8{,}000$ if $N<40{,}000$, else $\min(N{-}20{,}000,\,100{,}000)$; $\mathcal{Q}$: $10^5$ & Dev & reference SE; two more seeds (Table~\ref{tab:citedruns}) \\
Trials & $100$ per dataset; $200$ per clinical task; $60$ per multiclass dataset; $1$ per $\mathcal{Q}$ task & Pow, Pre & LOCO$_0$ $18.2\%$/$0.8\%$ at $50$/$100$ (Table~\ref{tab:citedruns})$^{a}$ \\
Seeds & fresh per trial (rows, split, bags, folds, bootstrap); learners $42$; reference fixed & Pre & $\mathcal{Q}$ seeds unused by pilots (Appendix~\ref{app:datasets}) \\
\bottomrule
\end{tabularx}

\vspace{2pt}
\parbox{\textwidth}{\justifying\scriptsize *Set. Dev: set in development, on the $18$ development datasets and the development runs (Table~\ref{tab:devruns}), and frozen before any confirmatory run (Table~\ref{tab:chronology}). Lib: library defaults, with the changes stated in the row, fixed before any evaluation. Std: standard practice. Pow: power check. Pre: pre-registration. Tested in: the run or table in which each value was varied or checked. The margin criterion is met at every tested margin wherever it is met at $0.005$, and SEARCH sizes $360$, $630$, and $1{,}000$ were tested on $\mathcal{Q}$. a: $200$ trials per clinical task, adopted after the LOCO-source condition failed at $100$ (Table~\ref{tab:chronology}); $60$ per multiclass dataset as pre-registered, against $40$ in development. MC: Monte Carlo. SE: standard error. iter.: boosting iterations. lr: learning rate.}
\end{table*}

\paragraph{Default binary space.} The default binary space combines three families of candidates on coarse grids. The thresholds run from $0.05$ to $0.95$ in steps of $0.05$, and the mixture weights move in steps of $0.25$. The three-model family uses the narrower threshold range $0.30$ to $0.60$. The grids are coarse by design, to keep the selection bias small (Section~\ref{sec:method-partition}). For HGB, the grids give $18$ threshold variants, which are the $19$ thresholds without the baseline value $0.5$. The grids also give $57$ HGB--DT mixtures, which are $3$ weights $\times$ $19$ thresholds. The grids also give $98$ HGB--DT--SVC mixtures, which are $14$ weight triples $\times$ $7$ thresholds. The $14$ triples are the $15$ weight combinations in steps of $0.25$ without pure HGB. The three families of the default space together give $18+57+98=173$ configurations. Run MINI-10, which introduced the three-model thresholds, preceded the pre-registration of MINI-11 that froze the space before the confirmatory runs (Table~\ref{tab:chronology}).

\paragraph{Refined and joint spaces.} The refined spaces of Table~\ref{tab:kspace} keep the same three families with finer steps. The refined spaces also widen the three-model threshold range to $0.30$ to $0.70$. The space of $1{,}825$ configurations has $46$ HGB thresholds ($0.05$ to $0.95$, step $0.02$) and $414$ HGB--DT pairs (weights $0.1$ to $0.9$, step $0.1$, $\times$ $46$ thresholds). The space adds $1{,}365$ three-model points ($65$ weight vectors in steps of $0.1$ without pure HGB, $\times$ $21$ thresholds, step $0.02$). The space of $11{,}249$ configurations has $90$ HGB thresholds (step $0.01$, without $0.50$) and $1{,}729$ pairs (weights $0.05$ to $0.95$, step $0.05$, $\times$ $91$ thresholds). The space adds $9{,}430$ three-model points ($230$ weight vectors in steps of $0.05$, $\times$ $41$ thresholds, step $0.01$). The joint space of the member study applies the same grids to every tested member. The joint space holds $18$ HGB thresholds, $57$ two-model mixtures for each of the seven members, and $98$ three-model mixtures for each of the three tested pairs, for $18+7\times57+3\times98=711$ configurations (Table~\ref{tab:members}).

The members of the alternative pairs keep fixed settings and are not tuned. Logistic regression uses $C=1$ and at most $1{,}000$ iterations. Gaussian naive Bayes uses the library defaults. The MLP has hidden layers of $64$ and $32$ units and at most $300$ iterations. The early-stopping rule is that of the HGB configuration, with a $10\%$ validation split and $10$ iterations without improvement. These settings were fixed in the pre-registration of MINI-36 (Table~\ref{tab:chronology}) and not tuned. KNN uses $k=5$. XGBoost in the joint space uses early stopping, as in the base-learner study (Table~\ref{tab:basefull}).

\paragraph{Multiclass spaces.} For $K$ classes, the multiclass space has $14+8K$ candidates. The candidates are $14$ simplex weights over HGB, DT, and SVC, plus $4$ class scales applied to each of the $K$ classes for two base scores, HGB and the HGB--DT average. The four scales, $0.5$, $0.7$, $1.4$, and $2$, are two down-weights and the approximate reciprocals of the two. Development run MINI-18 fixed these scales (Table~\ref{tab:chronology}). In the joint space, the multiclass candidates are the simplex weights of the three tested pairs ($3\times14$) and the HGB class scales ($4K$). The space adds the equal average of HGB with each of the seven members under the class scales ($7\times4K$), for $42+32K$ candidates.

\begin{table*}[!t]
\centering
\scriptsize
\providecolor{bghead}{HTML}{DBEFF8}\providecolor{bgzebra}{HTML}{F2F9FC}\providecolor{bgpass}{HTML}{D7F0D2}\providecolor{bgfail}{HTML}{FCDCD7}
\setlength{\tabcolsep}{3pt}
\caption{Parameter sensitivity: margin sweep.}
\label{tab:marginsweep}
\begin{tabular}{@{}lllll@{}}
\toprule
Setting & $\varepsilon=0.0025$ & $\varepsilon=0.005$ (default) & $\varepsilon=0.0075$ & $\varepsilon=0.01$ \\
\midrule
(A), weighted-$F_1$ & 357/4: 3.83 / 5.10 $\checkmark$ & 284/3: 4.23 / 5.95 $\checkmark$ & 234/2: 4.32 / 6.36 $\checkmark$ & 209/1: 3.74 / 5.67 $\checkmark$ \\
(A), balanced accuracy & 856/1: 0.80 / 0.91 $\checkmark$ & 848/1: 0.81 / 0.92 $\checkmark$ & 836/0: 0.46 / 0.53 $\checkmark$ & 825/0: 0.47 / 0.54 $\checkmark$ \\
(A), macro-$F_1$ & 852/0: 0.45 / 0.52 $\checkmark$ & 829/0: 0.47 / 0.53 $\checkmark$ & 813/0: 0.48 / 0.55 $\checkmark$ & 794/0: 0.49 / 0.56 $\checkmark$ \\
(A), feature selection, KNN & 263/5: 6.06 / 9.75 $\checkmark$ & 225/1: 3.45 / 6.00 $\checkmark$ & 193/0: 2.30 / 4.20 $\checkmark$ & 169/0: 2.66 / 5.32 $\checkmark$ \\
(A), feature selection, SVC & 296/10: 8.20 / 11.52 $\times$ & 253/7: 7.74 / 11.41 $\times$ & 211/5: 7.73 / 12.22 $\times$ & 171/3: 7.42 / 12.61 $\times$ \\
(D) 26, 1,000 rows & 648/12: 4.06 / 4.71 $\checkmark$ & 558/6: 3.04 / 3.59 $\checkmark$ & 468/4: 2.91 / 3.52 $\checkmark$ & 404/3: 2.93 / 3.64 $\checkmark$ \\
(D) 27, 2,430 rows & 843/7: 2.16 / 2.43 $\checkmark$ & 688/1: 1.05 / 1.20 $\checkmark$ & 554/1: 1.32 / 1.54 $\checkmark$ & 440/1: 1.69 / 2.01 $\checkmark$ \\
(D) 27, 6,000 rows & 1088/10: 2.08 / 2.30 $\checkmark$ & 825/3: 1.37 / 1.57 $\checkmark$ & 641/1: 1.13 / 1.35 $\checkmark$ & 498/1: 1.48 / 1.88 $\checkmark$ \\
(D) 5 multiclass & 209/0: 1.97 / 2.86 $\checkmark$ & 197/0: 2.11 / 3.16 $\checkmark$ & 180/0: 2.34 / 3.69 $\checkmark$ & 165/0: 2.59 / 4.23 $\checkmark$ \\
(C), net benefit $p_t=0.2$ & 1301/0: 0.30 / 0.44 $\checkmark$ & 1199/0: 0.33 / 0.50 $\checkmark$ & 1146/0: 0.34 / 0.53 $\checkmark$ & 1080/0: 0.36 / 0.59 $\checkmark$ \\
(B) $\mathcal{Q}$ & 529/10: 3.72 / -- $\checkmark$ & 475/8: 3.56 / -- $\checkmark$ & 419/4: 2.62 / -- $\checkmark$ & 373/3: 2.51 / -- $\checkmark$ \\
\bottomrule
\end{tabular}

\vspace{2pt}
\parbox{\linewidth}{\justifying\scriptsize *Setting: collection, metric, or design. (D) 26/27: the datasets of (D) in the training-size study; rows = training rows. Cells: accepted/false: risk bound \% / LOCO \% and the pre-registered criterion of the setting, recomputed from the stored 5\% bootstrap lower bound of every trial (the deployed configuration does not depend on the margin). Risk bound: KL-ratio bound (real data) or Proposition 1 (population Q); LOCO is not defined on Q. Trials, yield, the LOCO$_0$ power check, and the number of units with a yield of at least $5$ per $100$ are recorded in the code repository.}
\end{table*}

\paragraph{Parameter sweeps.} On the synthetic population $\mathcal{Q}$, ten parameter variants were swept. The variants are the nine combinations of the fold count $F$ and the quantile level $q$ at $N_{\mathrm{boot}}=20{,}000$, and $N_{\mathrm{boot}}=2{,}000$ at the standard configuration $F{=}3$, $q{=}5\%$. All ten pass the pre-registered criterion, with Proposition~\ref{prop:bound} bounds from $2.0\%$ to $4.9\%$ (Table~\ref{tab:sens31}). The quantile level governs the trade-off between the bound and the yield. With a Bonferroni correction \cite{dunn1961} over the ten variants ($\delta=0.05/10=0.005$ per variant), every variant stays below $10\%$: the highest bound is $5.73\%$ ($F{=}3$, $q{=}10\%$) and the lowest $2.75\%$, computed from Table~\ref{tab:sens31}.

The margin sweep over $0.0025$, $0.005$, $0.0075$, and $0.01$ is reported in Table~\ref{tab:marginsweep}. Every setting that passes with the default margin also passes with the three other margins. Feature selection with SVC fails with every margin, as it does with the default. With $16$, $32$, and $64$ bags on $\mathcal{Q}$, the bound is $3.98\%$, $3.56\%$, and $2.69\%$, at yields of $32.8$, $33.9$, and $33.2$ per $100$. All three bag counts meet the criterion (Table~\ref{tab:citedruns}).

\paragraph{Monte Carlo error.} The Monte Carlo error of the bootstrap lower bound at $N_{\mathrm{boot}}=20{,}000$ is small (MINI-37, Table~\ref{tab:chronology}). The check re-runs the bootstrap with $20$ seeds per trial on all $1{,}400$ tasks of $\mathcal{Q}$ and on $110$ real-data trials. The median standard deviation of the decision statistic is $0.00014$ on $\mathcal{Q}$ and $0.00009$ on the real data, about $3\%$ and $2\%$ of $\varepsilon$. A single alternative seed changes the decision in $0.3\%$ and $0.7\%$ of the re-runs, and at least one of the $20$ seeds does so in $1.0\%$ ($14$ of $1{,}400$) and $1.8\%$ ($2$ of $110$). At $N_{\mathrm{boot}}=2{,}000$, BRIDGE accepts $481$ updates with $7$ false improvements, against $475$ with $8$ at $20{,}000$.

\begin{table}[!t]
\centering
\scriptsize
\setlength{\tabcolsep}{2.4pt}
\caption{Parameter sensitivity: each margin tested against the default $0.005$.}
\label{tab:marginpaired}
\begin{tabular}{@{}rllrr@{}}
\toprule
Margin & Acc./false & $\Delta$, pp [95\% CI] & TOST $p$ & Acc.\ ratio \\
\midrule
\multicolumn{5}{@{}l}{\textit{(A)}} \\
0.0025 & 357 / 4 & +0.06 [-0.29; +0.35] & 0.011 & 1.26 \\
0.0075 & 234 / 2 & -0.20 [-0.88; +1.78] & 0.0498 & 0.82 \\
0.01 & 209 / 1 & -0.58 [-1.40; +0.04] & 0.0498 & 0.74 \\
\addlinespace[1pt]
\multicolumn{5}{@{}l}{\textit{(B)}} \\
0.0025 & 529 / 10 & +0.21 [-0.22; +0.81] & $<10^{-4}$ & 1.11 \\
0.0075 & 419 / 4 & -0.73 [-1.64; -0.02] & 0.018 & 0.88 \\
0.01 & 373 / 3 & -0.88 [-1.89; -0.04] & 0.0498 & 0.79 \\
\addlinespace[1pt]
\multicolumn{5}{@{}l}{\textit{(C)}} \\
0.0025 & 1,301 / 0 & +0.00 [+0.00; +0.00] & $<10^{-4}$ & 1.09 \\
0.0075 & 1,146 / 0 & +0.00 [+0.00; +0.00] & $<10^{-4}$ & 0.96 \\
0.01 & 1,080 / 0 & +0.00 [+0.00; +0.00] & $<10^{-4}$ & 0.90 \\
\addlinespace[1pt]
\multicolumn{5}{@{}l}{\textit{(D)}} \\
0.0025 & 1,375 / 13 & +0.41 [+0.09; +0.86] & $<10^{-4}$ & 1.24 \\
0.0075 & 948 / 3 & -0.22 [-0.63; +0.02] & $<10^{-4}$ & 0.85 \\
0.01 & 778 / 2 & -0.28 [-0.69; +0.01] & $<10^{-4}$ & 0.70 \\
\bottomrule
\end{tabular}

\vspace{2pt}
\parbox{\linewidth}{\justifying\scriptsize *Acc./false: accepted / false improvements. $\Delta$: false-improvement rate minus the false-improvement rate at the default margin, in percentage points [95\% CI]. TOST $p$: two one-sided tests of equivalence within $\pm2$ percentage points, Holm-adjusted. Each of the 12 comparisons is equivalent. Acc.\ ratio: accepted updates, relative to the default margin. For margin $\varepsilon$, the decision compares the stored 5\% lower bound with $\varepsilon$; all margins are therefore judged on the same trials. Default accepted / false: (A) 284 / 3, (B) 475 / 8, (C) 1,199 / 0, (D) 1,113 / 6. Brackets: cluster bootstrap over datasets, tasks or clinical tasks, 10,000 resamples; TOST with equivalence margin $\pm2$ percentage points, Holm over the 12 comparisons; specification committed before the computation. In (C) no accepted update is a false improvement.}
\end{table}

\begin{table}[!t]
\centering
\scriptsize
\setlength{\tabcolsep}{2.2pt}
\caption{Parameter sensitivity: quantile level $q$ against margin $\varepsilon$ on the real data.}
\label{tab:qepsreal}
\begin{tabular}{@{}lllll@{}}
\toprule
$q$ & $\varepsilon=0.0025$ & $\varepsilon=0.005$ & $\varepsilon=0.0075$ & $\varepsilon=0.01$ \\
\midrule
\multicolumn{5}{@{}l}{\textit{(A)}} \\
2.5\% & 299/3, 3.99 eq. & 243/2, 4.15 n.e. & 202/1, 3.89 n.e. & 164/0, 2.76 n.e. \\
5\% & 357/4, 3.83 eq. & 284/3, 4.23 def. & 234/2, 4.32 n.e. & 209/1, 3.74 n.e. \\
10\% & 422/11, 5.94 n.e. & 357/4, 3.83 eq. & 297/3, 4.02 eq. & 251/2, 4.00 n.e. \\
\addlinespace[1pt]
\multicolumn{5}{@{}l}{\textit{(D)}} \\
2.5\% & 1,180/7, 1.52 eq. & 972/5, 1.52 eq. & 799/2, 1.17 eq. & 669/0, 0.61 eq. \\
5\% & 1,375/13, 1.96 eq. & 1,113/6, 1.47 def. & 948/3, 1.19 eq. & 778/2, 1.21 eq. \\
10\% & 1,611/25, 2.68 eq. & 1,326/12, 1.92 eq. & 1,107/6, 1.48 eq. & 934/3, 1.21 eq. \\
\bottomrule
\end{tabular}

\vspace{2pt}
\parbox{\linewidth}{\justifying\scriptsize *Cells: accepted / false improvements, KL-ratio bound (\%), and equivalence of the false-improvement rate with the default ($q=5\%$, $\varepsilon=0.005$) by a paired TOST with margin $\pm2$ percentage points and Holm correction over the 11 variants of a setting (eq.: equivalent; n.e.: not shown equivalent; def.: the default; which with 11 datasets reflects limited power, not a difference). All cells also meet LOCO, LOCO$_0$ and yield. Decisions at every $(q,\varepsilon)$ are taken from the same cross-fit counts and bootstrap draws as the default. Pre-registration MINI-44 (run list: Table~\ref{tab:chronology2}).}
\end{table}

\begin{table}[!t]
\centering
\scriptsize
\setlength{\tabcolsep}{2.0pt}
\caption{Parameter sensitivity: tests of tuning on the real data.}
\label{tab:tuningtests}
\begin{tabular}{@{}lrrrrrc@{}}
\toprule
Design & Naive & BRIDGE & Bound $\downarrow$ & LOCO $\downarrow$ & LOCO$_0$ $\downarrow$ & Crit. \\
\midrule
\multicolumn{7}{@{}l}{\textit{(A)}} \\
Paper & 1,064 / 203 (19.1) & 284 / 3 (1.1) & 4.23 & 5.95 & 2.11 & $\checkmark$ \\
Tuned HGB & 1,061 / 246 (23.2) & 211 / 5 (2.4) & 7.73 & 10.66 & 3.02 & $\times$ \\
\multicolumn{7}{@{}l}{\quad baseline change, $F_1$ points: +0.30 [+0.04; +0.61]} \\
Gate tuned & 1,064 / 203 (19.1) & 299 / 2 (0.7) & 3.30 & 4.59 & 1.97 & $\checkmark$ \\
\addlinespace[1pt]
\multicolumn{7}{@{}l}{\textit{(D)}} \\
Paper & 3,562 / 445 (12.5) & 1,113 / 6 (0.5) & 1.47 & 1.62 & 0.39 & $\checkmark$ \\
Tuned HGB & 3,523 / 540 (15.3) & 906 / 10 (1.1) & 2.53 & 2.81 & 0.49 & $\checkmark$ \\
\multicolumn{7}{@{}l}{\quad baseline change, $F_1$ points: +0.25 [+0.13; +0.39]} \\
Gate tuned & 3,562 / 445 (12.5) & 1,217 / 7 (0.6) & 1.47 & 1.61 & 0.35 & $\checkmark$ \\
\bottomrule
\end{tabular}

\vspace{2pt}
\parbox{\linewidth}{\justifying\scriptsize *Naive, BRIDGE: accepted / false improvements (false \%). Crit.: pre-registered criterion met ($\checkmark$) or not met ($\times$). Paper: the paper's design, fixed HGB and frozen gate. Same trials as the paper's runs (same seeds, splits and bags; the paper-design rows reproduce the stored decisions exactly in all 4,800 trials). Tuned HGB: tuned baseline, Optuna TPE, 25 iterations, 3-fold cross-validation inside SEARCH only, the HGB space and budget of the tuning runs; the tuned HGB replaces the fixed one as current model and member. Gate tuned: gate tuned per task, POOL split into two thirds and one third; the gate runs on the two thirds for every $q\in\{2.5,5,10\}\%$ and $\varepsilon\in\{0.0025,0.005,0.0075,0.01\}$, the chosen candidate is scored on the third, and the best pair (ties: the default) is applied on the full POOL. Brackets: cluster bootstrap over datasets, 10,000 resamples. Pre-registered criterion as in Table~\ref{tab:criterion}. Pre-registration MINI-44 (run list: Table~\ref{tab:chronology2}). Bound, LOCO, LOCO$_0$ in \%.}
\end{table}

\begin{table}[!htb]
\centering
\scriptsize
\setlength{\tabcolsep}{2.4pt}
\caption{Parameter sensitivity on the synthetic tasks (B).}
\label{tab:sens31}
\begin{tabular}{@{}lrrrrrr@{}}
\toprule
Variant & Acc. & False & $U$, \% $\downarrow$ & Cons., \% $\downarrow$ & Yield $\uparrow$ & Gain $\uparrow$ \\
\midrule
$F{=}2$, $q{=}2.5\%$ & 382 & 2 & 2.00 & 2.45 & 27.3 & 13.81 \\
$F{=}2$, $q{=}5\%$ & 445 & 4 & 2.47 & 4.41 & 31.8 & 15.09 \\
$F{=}2$, $q{=}10\%$ & 548 & 10 & 3.59 & 5.50 & 39.1 & 16.89 \\
$F{=}3$, $q{=}2.5\%$ & 406 & 4 & 2.70 & 3.08 & 29.0 & 14.34 \\
\textbf{$F{=}3$, $q{=}5\%$ (standard)} & \textbf{475} & \textbf{8} & \textbf{3.56} & \textbf{4.70} & \textbf{33.9} & \textbf{15.61} \\
$F{=}3$, $q{=}10\%$ & 563 & 16 & 4.90 & 6.24 & 40.2 & 17.12 \\
$F{=}3$, $q{=}5\%$, $N_{\mathrm{boot}}{=}2{,}000$ & 481 & 7 & 3.22 & 4.37 & 34.4 & 15.73 \\
$F{=}5$, $q{=}2.5\%$ & 414 & 5 & 3.02 & 4.07 & 29.6 & 14.39 \\
$F{=}5$, $q{=}5\%$ & 483 & 10 & 4.07 & 5.70 & 34.5 & 15.62 \\
$F{=}5$, $q{=}10\%$ & 575 & 16 & 4.80 & 6.96 & 41.1 & 17.24 \\
\bottomrule
\end{tabular}

\vspace{2pt}
\parbox{\linewidth}{\justifying\scriptsize *Acc., False: accepted and false improvements; $U$: Proposition~\ref{prop:bound} bound at $\delta{=}0.05$; Cons.: conservative variant (Corollary~\ref{cor:cons}); yield: updates accepted per $100$ decisions; gain: total reference gain. The standard configuration (bold) is mid-range.}
\end{table}

\paragraph{Matched-yield comparison.} In the matched-yield comparison on the $11$ new datasets, BRIDGE and the corrected $t$-test are tied at $0\%$ below $125$ accepted updates (Fig.~\ref{fig:yield}). Between $125$ and $450$ accepted updates, BRIDGE is $0.4$ to $1.1$ percentage points lower; the curves cross near $450$, both at $4.0\%$, and by $550$ the corrected $t$-test is $1.6$ points lower ($7.6\%$ against $6.0\%$). BRIDGE is below the $5\times2$~cv test at every counted yield, by up to $4.0$ points.

\paragraph{Fixed validation margin.} A fixed validation margin offers a simple and strong alternative to the gate. The pre-registered margin $t^*{=}0.05$ was calibrated on the development size of $2{,}430$ rows and fails under fresh seeds in the two settings that the pre-registration named: the tasks with $6{,}000$ rows and the new datasets under weighted-$F_1$. A margin selected post hoc passes all settings. These comparisons are exploratory: the claim is that BRIDGE reaches a risk close to that of a well-chosen fixed margin without choosing a margin per task, not that BRIDGE dominates the margin. The fixed-margin paragraph of Appendix~\ref{app:perdataset} gives the per-setting yields, the leave-one-setting-out analysis, and run MINI-29 (Table~\ref{tab:chronology}).

\paragraph{Parameter sensitivity on the real data.} Collection (D) passes every variant tested (Table~\ref{tab:scope}). In (A), the $16$- and $64$-bag variants pass the criterion, but the TOST does not establish equivalence with the default within $\pm2$ percentage points. Two variants fail in (A): a bag drawn from all of SEARCH, a fraction of $100\%$ (bound $13.95\%$, LOCO $21.33\%$), and a SEARCH of $1{,}000$ rows, which finds no false improvement on the six datasets tested but fails the power check ($\mathrm{LOCO}_0$ $20.77\%$). The run labels and full numbers are in Table~\ref{tab:citedruns}.

\paragraph{Computational cost.} The gate re-scores stored predictions instead of retraining, adding about $0.03$~s per run, or $0.4$--$3.8\%$ of the time to fit the bagged base models on SEARCH, and its inference cost equals that of the deployed predictor (Table~\ref{tab:complexity}). The $16$-bag setting also meets the criterion (Table~\ref{tab:citedruns}), so the fitting cost can be halved; the $16$- and $64$-bag variants are in Table~\ref{tab:scope}.

\begin{table}[!htb]
\centering
\scriptsize
\setlength{\tabcolsep}{2.4pt}
\caption{Wall-clock cost of BRIDGE.}
\label{tab:complexity}
\begin{tabular}{@{}lrrrrr@{}}
\toprule
 & MIC & Mktg & Census & Homes. & Sant. \\
\midrule
POOL rows & 741 & 741 & 1,800 & 1,800 & 1,800 \\
Features & 104 & 25 & 41 & 297 & 369 \\
Fit bags on SEARCH (s) & 1.39 & 0.73 & 1.30 & 6.19 & 4.24 \\
Naive selection (s) $\downarrow$ & 0.006 & 0.006 & 0.006 & 0.007 & 0.007 \\
Cross-fitted selection (s) $\downarrow$ & 0.017 & 0.017 & 0.019 & 0.019 & 0.019 \\
Bootstrap (s) $\downarrow$ & 0.011 & 0.011 & 0.010 & 0.009 & 0.006 \\
BRIDGE gate total (s) $\downarrow$ & 0.027 & 0.028 & 0.029 & 0.027 & 0.024 \\
Inference, baseline (ms/row) $\downarrow$ & 0.174 & 0.099 & 0.055 & 0.145 & 0.160 \\
Inference, ensemble (ms/row) $\downarrow$ & 0.63 & 0.33 & 0.27 & 1.17 & 0.78 \\
Gate / fit (\%) $\downarrow$ & 1.96 & 3.84 & 2.20 & 0.44 & 0.58 \\
\bottomrule
\end{tabular}

\vspace{2pt}
\parbox{\linewidth}{\justifying\scriptsize *Columns: MIC; Mktg, Marketing\_Campaign; Census, Census-Income-KDD; Homes., Homesite; Sant., Santander2016 (all datasets of (A)). Wall-clock cost (mean of 3 trials, one CPU core, Intel Core i9-13900K): fitting the bagged base models on SEARCH (32 bags each of HGB, DT and SVC), naive selection, the BRIDGE gate (cross-fitted selection and bootstrap) and inference latency of the deployed predictor (batch of 2,000 rows; baseline = 32 HGB bags, ensemble = 96 bagged models). Inference cost equals that of the deployed predictor, independent of BRIDGE. Gate total: cross-fitted selection plus bootstrap; percentages use unrounded times.}
\end{table}

\section{Per-Dataset Results and Paired Comparisons} \label{app:perdataset} \apptables{tab:perdataset}{tab:ci} For each rule, Table~\ref{tab:perdataset} gives the counts of accepted updates and false improvements on each of the $11$ new datasets. The naive rule accepts the most false improvements on three datasets: KDD98 ($83$ accepted, $54$ false improvements), Santander ($90/39$), and seismic ($94/35$). BRIDGE rejects every update on these three datasets.

\begin{table}[!htb]
\centering
\scriptsize
\setlength{\tabcolsep}{2.2pt}
\caption{Per-dataset results on the 11 new datasets.}
\label{tab:perdataset}
\begin{tabular}{@{}lrrrrrr@{}}
\toprule
Dataset & Naive & $5\times2$ & Corr. & Re-sel. & No-m. & BRIDGE \\
\midrule
Census-KDD & 100/0 & 60/0 & 86/0 & 40/0 & 55/0 & 17/0 \\
Homesite & 99/3 & 49/0 & 71/0 & 41/0 & 53/0 & 30/0 \\
KDD98 & 83/54 & 7/3 & 8/6 & 3/3 & 2/1 & 0/0 \\
LICD & 100/0 & 79/0 & 90/0 & 79/0 & 86/0 & 72/0 \\
MIC & 100/1 & 75/0 & 96/1 & 72/1 & 82/1 & 71/1 \\
Marketing & 100/0 & 78/0 & 98/0 & 81/0 & 87/0 & 76/0 \\
Pumpkin & 100/32 & 24/6 & 35/8 & 4/1 & 9/1 & 3/0 \\
Rice & 98/20 & 28/3 & 42/5 & 10/1 & 17/2 & 3/0 \\
Santander & 90/39 & 7/4 & 12/5 & 2/2 & 4/2 & 0/0 \\
pc3 & 100/19 & 42/8 & 62/10 & 17/0 & 33/5 & 12/2 \\
seismic & 94/35 & 12/5 & 8/3 & 0/0 & 3/0 & 0/0 \\
Total & 1064/203 & 461/29 & 608/38 & 349/8 & 431/12 & 284/3 \\
\bottomrule
\end{tabular}

\vspace{2pt}
\parbox{\linewidth}{\justifying\scriptsize *Per-dataset results on the 11 new datasets, 100 trials each (1,100 in total). Columns: the naive rule; the $5\times2$~cv paired $t$-test; the corrected resampled $t$-test (Corr.); the corrected $t$ with per-split re-selection (Re-sel.); BRIDGE without the margin (No-m.); BRIDGE. Dataset names shortened (Table~\ref{tab:splits}). Cells: accepted / false improvements per rule; false improvement = gain $\le 0$ on the locked reference (Definition~\ref{def:risk}); 0/0 = no acceptance (false-improvement rate undefined). Totals reproduce Table~\ref{tab:rules}.}
\end{table}

\paragraph{Bags and skipped trials.} A drawn bag is retained when every class contributes at least $10$ examples, so that each class is represented and the internal validation split of the base model remains useful. The multiclass run uses a minimum of $2$ examples instead (Appendix~\ref{app:prereg}). When a bag is discarded, the bagged score averages the remaining bags of the $32$ drawn. A trial is skipped when the minority class is too small for any bag to meet the rule. All skipped trials belong to the training-size study with $1{,}000$ rows. In that study porto-seguro is skipped in all trials, and avida-hil6 is skipped in some. The counts are $100$ and $3$ for the confirmatory run MINI-20 and $40$ and $2$ for the development run MINI-17 (runs in Table~\ref{tab:chronology}). Every other run has complete trials.

\paragraph{Paired comparisons.} Table~\ref{tab:ci} reports cluster-bootstrap confidence intervals for the difference in false-improvement rate between BRIDGE and each comparator. Each interval is computed at the number of accepted updates of that rule. The gain over BRIDGE without the margin is small and significant. This gain is $-1.7$ pp ($[-4.8;\,-0.3]$) on the real data and $-1.8$ pp ($[-3.1;\,-0.7]$) on $\mathcal{Q}$.

\begin{table}[!htb]
\centering
\scriptsize
\setlength{\tabcolsep}{3pt}
\caption{Paired comparisons: the difference BRIDGE minus each comparator.}
\label{tab:ci}
\begin{tabular}{@{}llrr@{}}
\toprule
Comparator & $\Delta$, pp [95\% CI] & $p$ & Holm $p$ \\
\midrule
\multicolumn{4}{@{}l}{\textit{Real data (11 new datasets)}} \\
naive & $-18.0\ [-30.2;\,-7.1]$ & $<$0.0002$^{\dagger}$ & $<$0.0012$^{\dagger}$ \\
McNemar, same POOL & $-11.0\ [-19.2;\,-1.5]$ & 0.0026 & 0.0078 \\
$5\times2$~cv $t$ & $-5.2\ [-12.2;\,-1.4]$ & 0.0006 & 0.0024 \\
corrected $t$ & $-5.2\ [-11.7;\,-1.6]$ & 0.0002 & 0.0010 \\
BRIDGE without margin & $-1.7\ [-4.8;\,-0.3]$ & 0.0046 & 0.0092 \\
corrected $t$, re-selection & $-1.2\ [-6.3;\,+1.3]$ & 0.32 & 0.32 \\
same-POOL cell, own point & $-1.6\ [-5.1;\,+0.2]$ & 0.22 & -- \\
same-POOL cell, matched yield & $-0.4\ [-1.9;\,+1.0]$ & 1.00 & -- \\
margin $0.025$, 11 new datasets & $-0.2\ [-2.2;\,+0.6]$ & 0.68 & -- \\
margin $0.025$, 1,000 rows & $-2.2\ [-4.2;\,-0.6]$ & 0.0008 & -- \\
\addlinespace[1pt]
\multicolumn{4}{@{}l}{\textit{Synthetic population $\mathcal{Q}$}} \\
naive & $-19.0\ [-21.3;\,-16.8]$ & $<$0.0002$^{\dagger}$ & $<$0.0012$^{\dagger}$ \\
McNemar, same POOL & $-33.1\ [-41.2;\,-25.4]$ & $<$0.0002$^{\dagger}$ & $<$0.0012$^{\dagger}$ \\
$5\times2$~cv $t$ & $-3.9\ [-5.8;\,-2.1]$ & $<$0.0002$^{\dagger}$ & $<$0.0012$^{\dagger}$ \\
corrected $t$ & $-4.0\ [-5.7;\,-2.4]$ & $<$0.0002$^{\dagger}$ & $<$0.0012$^{\dagger}$ \\
BRIDGE without margin & $-1.8\ [-3.1;\,-0.7]$ & $<$0.0002$^{\dagger}$ & $<$0.0012$^{\dagger}$ \\
corrected $t$, re-selection & $-1.3\ [-2.8;\,0.0]$ & 0.051 & 0.051 \\
same-POOL cell, own point & $-1.5\ [-2.7;\,-0.5]$ & 0.0028 & -- \\
same-POOL cell, matched yield & $+1.1\ [+0.2;\,+2.1]$ & 0.012 & -- \\
margin $0.025$, 11 new datasets & $-0.3\ [-1.4;\,+0.9]$ & 0.64 & -- \\
margin $0.02$ & $-2.5\ [-4.1;\,-1.0]$ & 0.0018 & -- \\
\bottomrule
\end{tabular}

\vspace{2pt}
\parbox{\linewidth}{\justifying\scriptsize *$\Delta$: false-improvement rate of BRIDGE minus that of the comparator, in percentage points [95\% CI]. $p$: two-sided. Holm $p$: adjusted over the six comparators. The fixed margins are selected post hoc. Paired cluster bootstrap at each rule's own operating point: in every resample both rules are evaluated on the same resampled decisions; clusters are the 11 datasets (real data) or the 1,400 tasks ($\mathcal{Q}$); 10,000 resamples, seed 20261012; percentile 95\% intervals. Two-sided p = 2 $\times$ min(share of resampled differences $\geq 0$, share $\leq 0$), resolution 2/10,000 = 0.0002.$^{\dagger}$ No resampled difference fell on the opposite side of zero (0 of 10,000): the empirical p is below its resolution, which is not an upper confidence bound on the true tail probability (likewise for the Holm-adjusted value); resamples in which a rule accepts nothing are left out for that rule. Holm adjustment over the six comparators; each test has the null hypothesis of equal false-improvement rates against the alternative of a difference. The same-POOL cell rows (at each rule's own operating point, and at matched yield, i.e., the same number of top-ranked accepted updates) and the fixed-margin rows are post-hoc, exploratory comparisons with nominal intervals, outside the Holm family.}
\end{table}

The corrected $t$-test with per-split re-selection and the best fixed margin chosen post hoc reach false-improvement rates that do not differ significantly from the false-improvement rate of BRIDGE. Two fixed margins are exceptions: $t=0.025$ at $1{,}000$ training rows and $t=0.02$ on $\mathcal{Q}$. In these two settings BRIDGE is lower by $2.2$ and $2.5$ percentage points. The table note gives the test construction, the Holm correction over the six comparisons, and the resolution of the $p$-values; the fixed-margin rows are exploratory and have nominal intervals.

The difference between BRIDGE and the same-POOL cell with the bootstrap lower bound and the margin, at $173$ candidates, is also in Table~\ref{tab:ci}. The difference is computed twice, at the number of accepted updates of each rule and at matched numbers. Section~\ref{sec:exp-ablation} discusses both results.

\paragraph{Fixed-margin detail.} The pre-registered grid of fixed margins spans $0.005$ to $0.050$ in steps of $0.005$. Chosen post hoc in sample, the values $0.020$ to $0.035$ pass all $7$ settings. The seven settings are the tasks with $1{,}000$, $2{,}430$, and $6{,}000$ training rows, the new datasets under weighted-$F_1$, balanced accuracy, and macro-$F_1$, and the population $\mathcal{Q}$.

The yield of a fixed margin depends strongly on the chosen value. Across the weighted-$F_1$ settings the yield ranges from $5.4$ to $49.4$ per $100$ decisions, against $21.5$ to $33.9$ for BRIDGE (computed from the stored runs of Table~\ref{tab:marginsweep}). On the tasks with $6{,}000$ rows, the fixed margin reaches a yield of $5.4$ to $21.0$, below the $30.6$ of BRIDGE. Near $t{=}0.05$ the fixed margin accepts few updates at the larger sizes. The yields are $1.4$ and $0.1$, and the bounds are $16.1\%$ and $100\%$ (MINI-29, Table~\ref{tab:chronology}). By contrast, the yield of BRIDGE grows with the sample size: $21$, $25$, and $31$ at $1{,}000$, $2{,}430$, and $6{,}000$ rows (Table~\ref{tab:general}).

Against the best fixed margin, the false-improvement rate of BRIDGE does not differ significantly, either on the $11$ new datasets or on $\mathcal{Q}$. At the smallest training size, BRIDGE has a lower false-improvement rate than the best fixed margin ($-2.2$ pp, nominal $95\%$ CI $[-4.2;\,-0.6]$). On large data BRIDGE accepts more improvements, with a yield of $31$ against $14$ per $100$. The difference is $+16.8$ ($[+10.2;\,+23.7]$), computed with the paired bootstrap of Table~\ref{tab:ci}. An exploratory leave-one-setting-out analysis, added after the pre-registered results, chooses the best fixed margin from six settings and tests it on the seventh, in turn. The fixed margin passes six of the seven settings and fails on the new datasets under weighted-$F_1$ (LOCO $10.99\%$, bound $8.50\%$), whereas BRIDGE passes all seven, with a bound of $4.23\%$ on that setting.

\paragraph{Base-learner audit.} The base-learner audit (Table~\ref{tab:audit}) shows that AUC ranges from $0.566$ to $0.579$ across learners, while the number of boosting rounds, log-loss, extreme-probability rate (below $0.01$ or above $0.99$), and fit time differ. The differences relevant to the gate therefore lie in the predicted probabilities and in iteration control, not in the ranking of cases (Section~\ref{sec:exp-base}).

\begin{table}[!htb]
\centering
\scriptsize
\setlength{\tabcolsep}{2.2pt}
\caption{Audit of the base learners.}
\label{tab:audit}
\begin{tabular}{@{}lrrrrrr@{}}
\toprule
Learner & AUC $\uparrow$ & Log-loss $\downarrow$ & Extreme, \% $\downarrow$ & wF1 $\uparrow$ & Rounds & Fit, s $\downarrow$ \\
\midrule
HGB & 0.578 & 0.675 & 0.00 & 0.552 & 16.67 & 0.028 \\
XGBoost, default & 0.566 & 0.810 & 1.68 & 0.559 & 300.00 & 0.350 \\
XGBoost, regul. & 0.579 & 0.673 & 0.00 & 0.520 & 7.50 & 0.032 \\
\bottomrule
\end{tabular}

\vspace{2pt}
\parbox{\linewidth}{\justifying\scriptsize *Extreme: share of predicted probabilities below 0.01 or above 0.99; wF1: weighted-$F_1$ at threshold 0.5; rounds: boosting rounds in use; fit: fit time per bag. Audit of the base learners on the locked reference data (median over 1,400 tasks; mean for the share of probabilities below 0.01 or above 0.99; descriptive).}
\end{table}

\paragraph{Dataset-level bootstrap.} A dataset-level bootstrap upper bound, which accounts for dataset clustering rather than conditioning on the observed collection, gives $3.1\%$ for BRIDGE and $29.4\%$ for the naive rule on the $11$ new datasets of (A), against $13.0\%$ and $12.5\%$ for the $5\times2$~cv and corrected $t$ tests (Table~\ref{tab:citedruns}). On $\mathcal{Q}$ the $5\times2$~cv and corrected $t$ tests meet the criterion (bounds $8.14\%$ and $7.95\%$, Table~\ref{tab:synthetic}) while failing on the real data. Under a stricter $5\%$ threshold, BRIDGE passes on $\mathcal{Q}$ (Proposition~\ref{prop:bound} bound $3.56\%$) and on the real-data KL-ratio bound ($4.23\%$) but not on the real-data LOCO ($5.95\%$).

\FloatBarrier
\section{Patient-Identity Audit}
\label{app:identity}
\apptables{tab:identity}{tab:clinsource}
The patient-identity audit (Table~\ref{tab:identity}) records, for each of the six clinical sources, the identity key, the row and unique-key counts, the excess duplicate rows and the rows actually removed; de-duplication and filtering are applied in the loaders before any split. On the co-primary net-benefit metric, the few accepted changes that are not better than both ``treat none'' and ``treat all'' are concentrated in the HIV-1 cohort (ACTG 175), where the prevalence exceeds $p_t$ (Table~\ref{tab:clinconseq}). The mammography source carries no patient identifier, so its inclusion cannot be verified at the patient level, and the primary result is unchanged when that source is removed (Table~\ref{tab:clinsource}).

\begin{table}[!htb]
\centering
\scriptsize
\setlength{\tabcolsep}{2.4pt}
\caption{Patient-identity audit of the clinical sources.}
\label{tab:identity}
\begin{tabular}{@{}llrrr@{}}
\toprule
Source & Identity key & Rows & Unique keys & Dup. \\
\midrule
WiDS 2020 (GOSSIS) & patient\_id & 91,713 & 91,713 & 0 \\
\multicolumn{5}{@{}l}{\quad one row per patient} \\
\multicolumn{5}{@{}l}{\quad unique encounter\_id: 91,713} \\
\multicolumn{5}{@{}l}{\quad hospitals: 147} \\
COVID-19 Mexico & id (record id) & 566,602 & 563,201 & 3,401 \\
\multicolumn{5}{@{}l}{\quad duplicates are excess rows} \\
\multicolumn{5}{@{}l}{\quad ids occurring more than once: 1,486} \\
\multicolumn{5}{@{}l}{\quad 4,887 rows with such ids removed before filtering} \\
\multicolumn{5}{@{}l}{\quad confirmed cases kept: 218,784} \\
NHANES 2017-2018 & SEQN & 9,254 & 9,254 & 0 \\
\multicolumn{5}{@{}l}{\quad one row per respondent (survey design)} \\
\multicolumn{5}{@{}l}{\quad the 4 outcomes share respondents and form one LOCO cluster} \\
ACTG 175 (UCI 890) & none in UCI copy & 2,139 &  &  \\
\multicolumn{5}{@{}l}{\quad randomized trial, one row per participant (documentation)} \\
BRFSS 2015 & SEQNO (state-level) &  &  &  \\
\multicolumn{5}{@{}l}{\quad one record per interviewed respondent (survey design)} \\
\multicolumn{5}{@{}l}{\quad the 4 outcomes form one LOCO cluster} \\
Mammography (OpenML 310) & none &  &  &  \\
\multicolumn{5}{@{}l}{\quad identical (X, y) rows kept once} \\
\multicolumn{5}{@{}l}{\quad identical X with conflicting labels removed} \\
\multicolumn{5}{@{}l}{\quad patient independence cannot be verified} \\
\bottomrule
\end{tabular}

\vspace{2pt}
\parbox{\linewidth}{\justifying\scriptsize *Dup.: excess duplicate rows. Indented rows: notes per source. Patient-identity audit of the clinical sources (de-duplication and filtering applied in the loaders before any split). Participant identifier unavailable in the UCI copy of ACTG 175 and patient identifier unavailable in the OpenML mammography data; the BRFSS record number SEQNO is unique only within a state.}
\end{table}

\begin{table}[!htb]
\centering
\scriptsize
\setlength{\tabcolsep}{2.2pt}
\caption{Patient-identity audit: clinical results by source.}
\label{tab:clinsource}
\begin{tabular}{@{}lrrrrrr@{}}
\toprule
Source or subset & BRIDGE & Unc. & Bound $\downarrow$ & Cons. $\downarrow$ & LOCO $\downarrow$ & Naive \\
\midrule
\multicolumn{7}{@{}l}{\textit{Net benefit, $p_t=0.2$ (co-primary)}} \\
ACTG 175 & 198/0 & 2 & -- & -- & -- & 200/0 \\
BRFSS (4 tasks) & 210/0 & 0 & -- & -- & -- & 684/282 \\
COVID-19 Mexico & 200/0 & 0 & -- & -- & -- & 200/0 \\
Mammography & 10/0 & 1 & -- & -- & -- & 200/0 \\
NHANES (4 tasks) & 401/0 & 1 & -- & -- & -- & 782/171 \\
WiDS ICU & 180/0 & 0 & -- & -- & -- & 200/0 \\
All sources & 1,199/0 & 4 & 0.33 & 1.07 & 0.50 & 2,266/453 \\
without ACTG 175 & 1,001/0 & 2 & 0.39 & 0.92 & 0.67 & 2,066/453 \\
without BRFSS & 989/0 & 4 & 0.39 & 1.29 & 0.67 & 1,582/171 \\
without COVID-19 Mexico & 999/0 & 4 & 0.39 & 1.29 & 0.67 & 2,066/453 \\
without Mammography & 1,189/0 & 3 & 0.33 & 0.92 & 0.50 & 2,066/453 \\
without NHANES & 798/0 & 3 & 0.50 & 1.40 & 0.67 & 1,484/282 \\
without WiDS ICU & 1,019/0 & 4 & 0.39 & 1.27 & 0.65 & 2,066/453 \\
\addlinespace[1pt]
\multicolumn{7}{@{}l}{\textit{Balanced accuracy (co-primary)}} \\
ACTG 175 & 177/0 & 23 & -- & -- & -- & 200/0 \\
BRFSS (4 tasks) & 783/0 & 0 & -- & -- & -- & 800/0 \\
COVID-19 Mexico & 200/0 & 0 & -- & -- & -- & 200/0 \\
Mammography & 200/0 & 0 & -- & -- & -- & 200/0 \\
NHANES (4 tasks) & 713/8 & 104 & -- & -- & -- & 800/8 \\
WiDS ICU & 200/0 & 0 & -- & -- & -- & 200/0 \\
All sources & 2,273/8 & 127 & 0.82 & 7.10 & 1.25 & 2,400/8 \\
without ACTG 175 & 2,096/8 & 104 & 0.88 & 6.45 & 1.42 & 2,200/8 \\
without BRFSS & 1,490/8 & 127 & 1.25 & 10.85 & 1.45 & 1,600/8 \\
without COVID-19 Mexico & 2,073/8 & 127 & 0.90 & 7.79 & 1.45 & 2,200/8 \\
without Mammography & 2,073/8 & 127 & 0.90 & 7.79 & 1.45 & 2,200/8 \\
without NHANES & 1,560/0 & 23 & 0.24 & 2.49 & 0.48 & 1,600/0 \\
without WiDS ICU & 2,073/8 & 127 & 0.90 & 7.79 & 1.45 & 2,200/8 \\
\bottomrule
\end{tabular}

\vspace{2pt}
\parbox{\linewidth}{\justifying\scriptsize *Confirmatory clinical run with $200$ trials per task (800 for the four BRFSS or NHANES tasks, 2,400 for all sources). BRIDGE and Naive: accepted / false improvements. Unc.: the accepted updates that are reference-uncertain. False improvement: gain $\le 0$ on the locked reference; reference-uncertain: the Bayesian-bootstrap interval of the reference gain contains $0$. Bound: KL-ratio bound; conservative: every accepted update that is not clearly improving counted as a false improvement; LOCO: leave-one-source-out. Single-source rows give counts only. Bound, Cons., LOCO in \%.}
\end{table}

The naive rule accepts the most false improvements on three clinical tasks: BRFSS kidney disease ($133$ false improvements of $133$ accepted), BRFSS stroke ($144$ of $151$), and NHANES coronary heart disease ($97$ of $182$). BRIDGE accepts no update on any of the three tasks. On the real data the same pattern appears: BRIDGE accepts no update on KDD98, Santander, and seismic (Table~\ref{tab:perdataset}).

\section{OpenML Collection (E): Datasets, Extension, and Sensitivities}
\label{app:settingE}
\apptables{tab:openmlE}{tab:openmlE}

\begin{table*}[!t]
\centering
\scriptsize
\setlength{\tabcolsep}{2.2pt}
\caption{Datasets of the OpenML collection (E) and its extension (E2), with per-dataset decisions and curation flags.}
\label{tab:openmlE}
\begin{minipage}[t]{0.494\textwidth}\centering
\begin{tabular}[t]{@{}llrrrrr@{}}
\toprule
Dataset & Source & N & K & Feat. & Naive & BRIDGE \\
\midrule
\multicolumn{7}{@{}l}{\textit{(E) $14$ OpenML datasets}} \\
airline-satisfaction & OML~46920 & 129,880 & 2 & 21 & 99/6 & 22/0 \\
KDDCup09-churn & OML~1112 & 50,000 & 2 & 230 & 80/62 & 0/0 \\
bank-marketing & OML~1461 & 45,211 & 2 & 16 & 100/0 & 53/0 \\
pulsar-HTRU2 & OML~45558 & 17,898 & 2 & 8 & 97/45 & 0/0 \\
jm1 & OML~1053 & 10,885 & 2 & 21 & 100/2 & 63/0 \\
support2 & OML~46860 & 9,105 & 2 & 46 & 93/26 & 5/0 \\
telco-customer-churn & OML~42178 & 7,043 & 2 & 19 & 99/22 & 15/1 \\
phoneme & OML~1489 & 5,404 & 2 & 5 & 96/26 & 3/0 \\
spambase & OML~44 & 4,601 & 2 & 57 & 98/30 & 5/1 \\
sick & OML~38 & 3,772 & 2 & 29 & 100/16 & 4/0 \\
bioresponse & OML~4134 & 3,751 & 2 & 1,776 & 94/57 & 1/0 \\
ipums-la-97-small & OML~382 & 7,019 & 8 & 60 & 60/14 & 2/0 \\
otto-group & OML~45548 & 61,878 & 9 & 93 & 60/7 & 5/0 \\
corporate-credit-ratings & OML~46372 & 5,000 & 10 & 7 & 60/4 & 14/0 \\
\multicolumn{7}{@{}l}{\textit{(E2) $20$ OpenML and UCI datasets}} \\
in-vehicle-coupon & OML~46937 & 12,684 & 2 & 24 & 86/32 & 4/2 \\
wilt & OML~40983 & 4,839 & 2 & 5 & 100/1 & 28/0 \\
\bottomrule
\end{tabular}
\end{minipage}\hfill
\begin{minipage}[t]{0.494\textwidth}\centering
\begin{tabular}[t]{@{}llrrrrr@{}}
\toprule
Dataset & Source & N & K & Feat. & Naive & BRIDGE \\
\midrule
APSFailure & OML~41138 & 76,000 & 2 & 170 & 99/2 & 1/0 \\
iranian-churn & UCI~563 & 3,150 & 2 & 13 & 100/0 & 92/0 \\
SECOM & UCI~179 & 1,567 & 2 & 590 & 74/38 & 0/0 \\
steel-plates-faults & UCI~198 & 1,941 & 2 & 27 & 100/2 & 31/0 \\
myocardial-infarction & UCI~579 & 1,700 & 2 & 111 & 99/40 & 10/1 \\
OVA-breast & OML~1128 & 1,545 & 2 & 10,935 & 100/33 & 12/1 \\
lending-club & OML~43729 & 9,578 & 2 & 13 & 100/4 & 24/0 \\
TVS-loan-default & OML~43743 & 119,528 & 2 & 30 & 83/47 & 0/0 \\
personal-loan & OML~43826 & 5,000 & 2 & 13 & 97/31 & 5/0 \\
fitness-club$^{c}$ & OML~46927 & 1,500 & 2 & 6 & 96/30 & 3/0 \\
good-customer$^{c}$ & OML~46938 & 1,723 & 2 & 13 & 99/22 & 5/1 \\
churn$^{b}$ & OML~46915 & 5,000 & 2 & 19 & 100/0 & 100/0 \\
wine-white & OML~40498 & 4,898 & 7 & 11 & 60/8 & 5/1 \\
shuttle$^{f}$ & OML~40685 & 58,000 & 7 & 9 & -- & -- \\
covertype & OML~180 & 110,393 & 7 & 54 & 60/3 & 11/0 \\
CIFAR-10-small$^{e}$ & OML~40926 & 20,000 & 10 & 3,072 & 60/10 & 6/1 \\
SVHN-small$^{d}$ & OML~42140 & 9,927 & 10 & 3,072 & 60/0 & 58/0 \\
pendigits (amend.)$^{a}$ & OML~32 & 10,992 & 10 & 16 & 60/0 & 57/0 \\
\bottomrule
\end{tabular}
\end{minipage}

\vspace{2pt}
\parbox{\textwidth}{\justifying\scriptsize *(E): pre-registered OpenML collection; (E2): its pre-registered extension; both run through the released program with the same protocol (HGB, default candidate space, weighted-$F_1$; $100$ trials per binary and $60$ per multiclass dataset). Naive, BRIDGE: accepted / false improvements. Decisions per dataset: $100$ (binary) or $60$ (multiclass). K: classes. Feat.: features. OML: OpenML ID. Names shortened. Flags: $^{a}$rows grouped by $44$ writers, against the grouped-data rule (amendment for \texttt{shuttle}); $^{b}$original source lost, same data as OpenML 40701, described there as artificial; $^{c}$origin of the data not verifiable (Kaggle practice or unattributed data); $^{d}$flattened image whose source-photograph grouping cannot be audited; $^{e}$flattened image (stress test); $^{f}$all $60$ trials stopped with a structural error (a class too small after the reference split), replaced by the amendment. Flagged datasets are kept as pre-registered and reported; without the datasets flagged $a$--$c$, (E)+(E2) covers $29$ datasets and still meets the criterion (naive $22.6\%$ false, BRIDGE $479$ accepted / $8$ false improvements, $1.67\%$; bound $4.28\%$, LOCO $5.38\%$), and so do its $22$ binary datasets (naive $24.9\%$, BRIDGE $1.59\%$; bound $4.61\%$, LOCO $6.23\%$); the binary subgroup of (E2) alone does not ($11$ datasets, LOCO $13.49\%$).}
\end{table*}

The datasets of the OpenML collection (E) and its extension (E2), with the decisions and the curation flags of each, are listed in Table~\ref{tab:openmlE}. The OpenML collection (E) is the $14$-dataset collection of Section~\ref{sec:exp-settings}, evaluated with the frozen gate and reported in Table~\ref{tab:summary}. BRIDGE accepts $192$ of $1{,}280$ decisions with $2$ false improvements ($1.0\%$), bound $5.40\%$, LOCO $8.43\%$, $\mathrm{LOCO}_0$ $3.62\%$, yield $15$, and $8$ datasets with a yield of at least $5$. The pass is not uniform by class count. The binary subgroup of (E) ($11$ datasets) fails narrowly on LOCO ($10.30\%$), a limitation found after the results. The multiclass subgroup ($3$ datasets) has no false improvement, but its bound is too wide to pass ($32.59\%$, with LOCO and $\mathrm{LOCO}_0$ at $100\%$). The row is retained as first reported, not replaced by the extension.

A second pre-registered collection, E2, adds $19$ datasets and $1{,}700$ decisions under the same protocol and the same frozen gate; the amendment of its dataset list deviates from the curation rule and is recorded below. E2 passes jointly: BRIDGE accepts $452$ updates with $7$ false improvements ($1.55\%$), bound $4.16\%$, LOCO $5.44\%$, $\mathrm{LOCO}_0$ $1.19\%$, yield $26.6$, and $14$ datasets with a yield of at least $5$. The naive rule accepts $1{,}633$ updates with $303$ false improvements ($18.6\%$). The difference in false-improvement rate, BRIDGE minus naive, is $-17.0$ pp ($95\%$ CI $[-24.3,-9.6]$).

The joint result is read by class count, in addition to the pooled row. The binary subgroup of E2 ($14$ datasets, $1{,}400$ decisions) meets every criterion: $315$ accepted with $5$ false improvements ($1.59\%$), bound $5.00\%$, LOCO $7.60\%$, $\mathrm{LOCO}_0$ $2.04\%$, yield $22.5$, and $9$ datasets with a yield of at least $5$. The BRIDGE-minus-naive difference is $-19.6$ pp ($95\%$ CI $[-28.1,-10.7]$). This pass depends on \texttt{churn}, a dataset flagged $^{b}$ as artificial, which contributes $100$ accepted updates and no false improvement; the dependence was found after the results (a post hoc limitation). Without the flagged datasets, the combined binary subgroup of (E) and (E2) meets the LOCO condition ($22$ datasets; naive $24.9\%$, BRIDGE $1.59\%$; Table~\ref{tab:openmlE}). The binary subgroup of (E2) alone, without the datasets flagged $a$--$c$, exceeds the LOCO limit ($13.49\%$).

The multiclass subgroup of E2 ($5$ datasets, $300$ decisions) meets the requirements on the false-improvement rate and yield. This subgroup has $137$ accepted updates, $2$ false improvements ($1.46\%$), and a yield of $45.7$. The LOCO bound of the subgroup is $14.10\%$, above the $10\%$ condition, so the subgroup fails the criterion. The BRIDGE-minus-naive difference is $-5.5$ pp ($95\%$ CI $[-7.7,0.0]$).

Pooling the two pre-registered collections, E+E2 covers $33$ datasets and $2{,}980$ decisions and meets every criterion: BRIDGE accepts $644$ with $9$ false improvements ($1.40\%$), bound $3.38\%$, LOCO $4.04\%$, $\mathrm{LOCO}_0$ $0.75\%$, yield $21.6$, and $22$ datasets with a yield of at least $5$. The BRIDGE-minus-naive difference is $-20.2$ pp ($95\%$ CI $[-26.7,-13.8]$). The binary subgroup ($25$ datasets, $2{,}500$ decisions) also meets every criterion: $486$ accepted with $7$ false improvements ($1.44\%$), bound $3.87\%$, LOCO $4.96\%$, $\mathrm{LOCO}_0$ $1.09\%$, yield $19.4$, and $15$ datasets with a yield of at least $5$. The BRIDGE-minus-naive difference is $-22.6$ pp ($95\%$ CI $[-30.1,-15.3]$).

\paragraph{Deviation in the E2 amendment.} The frozen E2 list contains \texttt{shuttle}, whose $60$ trials all stop with a structural error because a class is too small after the reference split. An amendment, frozen before any replacement metric was read, replaced \texttt{shuttle} with \texttt{pendigits} (OpenML 32). The post-run provenance audit then found a conflict with the frozen curation rule. The $10{,}992$ rows of \texttt{pendigits} come from $44$ writers, and the rule excludes data grouped by writer, subject, or session, whereas the runner splits rows because the OpenML table carries no writer identifier. \texttt{pendigits} is therefore not treated as passing the primary curation. The deviation is recorded, the run is retained as executed, and no third dataset is chosen after the results are seen.

\paragraph{Provenance flags.} The same provenance audit flags three further datasets of (E2). These are \texttt{churn}, whose original source is lost and whose data are described as artificial; \texttt{Fitness\_Club}; and \texttt{Is-this-a-good-customer}, whose origin cannot be verified. All three remain reported as pre-registered, and no replacement dataset is introduced. Table~\ref{tab:openmlE} also reports the results without the datasets flagged $a$--$c$. With these datasets removed, the collection meets the criterion ($29$ datasets; naive $22.6\%$ false, BRIDGE $1.67\%$).

\paragraph{Sensitivity without \texttt{pendigits}.} Because of this deviation, the pooled claim rests on the sensitivity analysis without \texttt{pendigits}. This analysis covers $32$ datasets and $2{,}920$ decisions and meets every criterion: $587$ accepted, $9$ false improvements ($1.53\%$), bound $3.73\%$, LOCO $4.55\%$, $\mathrm{LOCO}_0$ $0.85\%$, yield $20.1$, and $21$ datasets with a yield of at least $5$. The BRIDGE-minus-naive difference is $-20.5$ pp ($95\%$ CI $[-27.1,-14.2]$). The multiclass subgroup of this analysis ($7$ datasets, $420$ decisions) meets the requirements on the false-improvement rate and yield. This subgroup has $101$ accepted updates, $2$ false improvements ($1.98\%$), and a yield of $24.0$. The subgroup fails all three checks: the bound ($10.87\%$), the LOCO condition ($30.27\%$), and the power check ($\mathrm{LOCO}_0$ $13.02\%$).

\paragraph{Image stress tests and the group structure of SVHN.} Two multiclass datasets of the extension, CIFAR-10 and the Street View House Numbers (SVHN), are flattened images, not tabular application data: each image contributes $3{,}072$ pixel features ($32\times32$ pixels in three colour channels), and both serve as high-dimensional stress tests. OpenML 42140 holds a $10\%$ stratified subsample of SVHN without an identifier of the source photograph, so a group split cannot audit the independence of crops from the same photograph. This gap is reported as a limitation, not as a confirmed violation. Without \texttt{pendigits} and SVHN, the multiclass subgroup ($6$ datasets, $360$ decisions) has $43$ accepted updates, $2$ false improvements ($4.65\%$), and a yield of $11.9$, and fails the bound ($30.27\%$), the LOCO condition ($49.85\%$), and the power check ($\mathrm{LOCO}_0$ $21.47\%$).

\paragraph{Volkert supplement.} A later supplement adds Volkert (OpenML 41166), an anonymous numerical tabular benchmark with $58{,}310$ rows, $180$ features, $10$ classes, and $60$ complete trials. The dataset was frozen after the confirmatory runs, so its analysis is exploratory. On Volkert alone, BRIDGE accepts $4$ updates and no false improvement, at a yield of $6.7$. The rule of at least three datasets with a yield of at least $5$ needs more than one dataset, so this result is reported separately. Pooled with E+E2, the supplement gives $34$ datasets and $3{,}040$ decisions that meet every criterion: BRIDGE accepts $648$ updates with $9$ false improvements ($1.39\%$), at a bound of $3.36\%$, LOCO $4.01\%$, and $\mathrm{LOCO}_0$ $0.75\%$. The yield is $21.3$, $23$ datasets reach a yield of at least $5$, and the BRIDGE-minus-naive difference is $-19.9$ pp ($95\%$ CI $[-26.4,-13.6]$).

Because the supplement is exploratory, the pooled row leaves the confirmatory claim unchanged. The multiclass subgroup of the pooled row ($9$ datasets, $540$ decisions) has a LOCO bound of $10.56\%$, slightly above the $10\%$ condition.

\paragraph{Collection outside the validated class range.} Under the same program and protocol as (E), $11$ further multiclass OpenML datasets were run as an exploratory collection with $60$ trials each. These datasets have $3$, $4$, $6$, $11$, or $26$ classes, outside the validated range of $7$ to $10$ classes. The collection meets the criterion jointly ($175$ accepted, $2$ false improvements; bound $5.89\%$, LOCO $8.02\%$). Its class-count groups with $K=3$ and $K=6$ fail the criterion. Each of the remaining groups holds one dataset and is described without a criterion. The result therefore varies with the number of classes (Table~\ref{tab:scope}).

\paragraph{One-pass analysis.} A one-pass analysis repeats the comparison with one decision per dataset instead of the $60$ or $100$ repeated seeds. On E2, BRIDGE accepts $7$ of $19$ decisions with no false improvement, against $2$ false improvements among $17$ naive acceptances ($11.8\%$); pooled over E+E2, BRIDGE accepts $8$ of $33$ with no false improvement, against $7$ among $31$ naive acceptances ($22.6\%$). These numbers are reported for completeness, because the main analysis rests on the full repetitions and a single decision per dataset carries no cluster interval.

\FloatBarrier
\section{Detailed Result Tables}
\label{app:results}
\apptables{tab:summary}{tab:clinconseq}

This appendix gives the detailed result tables behind Sections~\ref{sec:method} and~\ref{sec:experiments}, numbered in print order. Each of the five groups below lists its tables and the main-text subsection that uses each table. In all appendix tables, $\downarrow$ and $\uparrow$ mark columns where a smaller or a larger value is better. A bound is the one-sided $95\%$ upper bound on the false-improvement rate. The criterion is the pre-registered criterion of Table~\ref{tab:criterion}.

\begin{table*}[!t]
\centering
\scriptsize
\providecolor{bghead}{HTML}{DBEFF8}\providecolor{bgzebra}{HTML}{F2F9FC}\providecolor{bgpass}{HTML}{D7F0D2}\providecolor{bgfail}{HTML}{FCDCD7}
\setlength{\tabcolsep}{3pt}
\caption{Main results in detail, (A)--(E).}
\label{tab:summary}
\begin{tabular}{@{}lrrlrllrl@{}}
\toprule
Setting & Decisions & \begin{tabular}[b]{@{}c@{}}Naive:\\accepted\end{tabular} & \begin{tabular}[b]{@{}c@{}}Naive \% false\\{[}95\% CI{]} $\downarrow$\end{tabular} & \begin{tabular}[b]{@{}c@{}}BRIDGE:\\accepted\end{tabular} & \begin{tabular}[b]{@{}c@{}}BRIDGE \% false\\{[}95\% CI{]} $\downarrow$\end{tabular} & \begin{tabular}[b]{@{}c@{}}Reduction, pp\\{[}95\% CI{]} $\uparrow$\end{tabular} & \begin{tabular}[b]{@{}c@{}}Upper bound\\$\hat U$ or $U$ (\%) $\downarrow$\end{tabular} & \begin{tabular}[b]{@{}c@{}}False removed \%\\{[}95\% CI{]} $\uparrow$\end{tabular} \\
\midrule
(A) 11 new datasets & 1,100 & 1,064 & $19.1\ [8.1;\,31.6]$ & 284 & $1.1\ [0.0;\,4.0]$ & $18.0\ [7.1;\,30.2]$ & 4.23 & 99 [95; 100] \\
(B) Synthetic tasks & 1,400 & 1,323 & $20.7\ [18.5;\,22.9]$ & 475 & $1.7\ [0.6;\,2.9]$ & $19.0\ [16.8;\,21.3]$ & 3.56 & 97 [95; 99] \\
(C) Clinical tasks, net benefit & 2,400 & 2,266 & $20.0\ [4.3;\,40.4]$ & 1,199 & 0.0 & $20.0\ [4.5;\,40.3]$ & 0.33 & 100 [100; 100] \\
(D) 39 supporting datasets & 3,700 & 3,562 & $12.5\ [7.4;\,18.6]$ & 1,113 & $0.5\ [0.1;\,1.2]$ & $12.0\ [7.2;\,17.9]$ & 1.47 & 99 [97; 100] \\
(E) OpenML collection, 14 datasets & 1,280 & 1,236 & $25.6\ [15.0;\,37.8]$ & 192 & $1.0\ [0.0;\,4.0]$ & $24.6\ [14.0;\,36.0]$ & 5.40 & 99 [98; 100] \\
\bottomrule
\end{tabular}

\vspace{2pt}
\parbox{\linewidth}{\justifying\scriptsize CI = confidence interval. Reduction: naive minus BRIDGE, in percentage points. Intervals: cluster bootstrap of the random-selection analysis (seed 20261014); the paired intervals of Table~\ref{tab:rules} come from the comparator runs (seed 20261022), so the interval limits can differ in the first decimal. Over the $6$ clinical sources, the interval is $0.0$ to $31.9$ pp. Upper bound: the KL-ratio bound $\hat U$ \eqref{eq:Uhat} on the real data, and the Proposition~\ref{prop:bound} bound $U$ \eqref{eq:U} on (B). False removed: Table~\ref{tab:selectivity}.}
\end{table*}

\begin{table*}[!t]
\centering
\scriptsize
\setlength{\tabcolsep}{1.8pt}
\renewcommand{\arraystretch}{1.15}
\caption{Hypothesis tests: estimates and $p$-values, collection by collection.}
\label{tab:testsdetail}
\newcommand{\cell}[2]{\begin{tabular}[t]{@{}c@{}}#1\\#2\end{tabular}}
\begin{tabular}{@{}llccccc@{}}
\toprule
Test & \begin{tabular}[b]{@{}l@{}}Comparison\\(T1: BRIDGE minus)\end{tabular} & (A) & (B) & (C) & (D) & (E) \\
\midrule
T1 \eqref{eq:hdiff} & Naive & $-18.0$ ($p<0.0012^{*}$) & $-19.0$ ($p<0.0012^{*}$) & $-20.0$ ($p=0.034^{*}$) & $-12.0$ ($p=0.0009^{*}$) & $-24.6$ ($p<0.0002^{*}$) \\
 & McNemar, same POOL & $-11.0$ ($p=0.0078^{*}$) & $-33.1$ ($p<0.0012^{*}$) & $0.0$ ($p=1.00$) & $-6.1$ ($p=0.0009^{*}$) & -- \\
 & McNemar, cross-fit & -- & -- & $0.0$ ($p=1.00$) & $-2.0$ ($p=0.018^{*}$) & -- \\
 & $5\times2$~cv paired $t$ & $-5.2$ ($p=0.0024^{*}$) & $-3.9$ ($p<0.0012^{*}$) & $-2.5$ ($p=0.034^{*}$) & $-3.1$ ($p=0.0009^{*}$) & -- \\
 & Corrected resampled $t$ & $-5.2$ ($p=0.0010^{*}$) & $-4.0$ ($p<0.0012^{*}$) & $-2.6$ ($p=0.10$) & $-3.6$ ($p=0.0009^{*}$) & -- \\
 & Corrected $t$, re-selection & $-1.2$ ($p=0.32$) & $-1.3$ ($p=0.051$) & $-0.4$ ($p=0.34$) & $-1.0$ ($p=0.0009^{*}$) & -- \\
 & BRIDGE without margin & $-1.7$ ($p=0.0092^{*}$) & $-1.8$ ($p<0.0012^{*}$) & $-0.6$ ($p=0.34$) & $-1.1$ ($p=0.0016^{*}$) & -- \\
 & Holdout 2/3 + 1/3 & -- & -- & $0.0$ ($p=1.00$) & $+0.3$ ($p=0.47$) & -- \\
 & Holdout 1/2 + 1/2 & -- & -- & $0.0$ ($p=1.00$) & $-0.3$ ($p=0.47$) & -- \\
\midrule
T2 \eqref{eq:hequiv} & Margins vs $0.005$ & equivalent ($p\le0.0498^{*}$) & equivalent ($p\le0.0498^{*}$) & equivalent ($p<10^{-4\,*}$) & equivalent ($p<10^{-4\,*}$) & -- \\
\midrule
T3 \eqref{eq:hrandom} & Random draw, pooled & $54.2$ vs $3$ ($p<10^{-25\,*}$) & $98.4$ vs $8$ ($p<10^{-47\,*}$) & $239.7$ vs $0$ ($p<10^{-176\,*}$) & $139.0$ vs $6$ ($p<10^{-67\,*}$) & $49.2$ vs $2$ ($p<10^{-23\,*}$) \\
 & \begin{tabular}[t]{@{}l@{}}Random draw, within\\dataset or task\end{tabular} & $5.5$ vs $3$ ($p=0.15$) & -- & $0.7$ vs $0$ ($p=0.45$) & $27.4$ vs $6$ ($p<10^{-9\,*}$) & $12.9$ vs $2$ ($p<10^{-5\,*}$) \\
\bottomrule
\end{tabular}

\vspace{2pt}
\parbox{\textwidth}{\justifying\scriptsize T1: false-improvement rate of BRIDGE minus that of the comparator, in percentage points; a negative value means that BRIDGE accepts a smaller share of false improvements. Intervals: Table~\ref{tab:rules}. T2: the largest Holm-adjusted $p$ among the margin comparisons. T3: false improvements expected in a random draw, against the count of BRIDGE. $^{*}$: $H_0$ rejected at $\alpha=0.05$. --: comparison not run. Details: Tables~\ref{tab:ci}, \ref{tab:rulesCD}, \ref{tab:marginpaired}, and~\ref{tab:selectivity}.}
\end{table*}

\begin{table*}[!t]
\centering
\scriptsize
\setlength{\tabcolsep}{1.2pt}
\caption{Acceptance rules on (A)--(E): counts, bounds, yield, gain, and paired differences, in one format.}
\label{tab:rules}\label{tab:rulesCD}\label{tab:holdout}
\begin{minipage}[t]{0.494\textwidth}\centering
\begin{tabular}[t]{@{}lr@{\hspace{5pt}}rrrrrc@{}}
\toprule
Rule & \begin{tabular}[b]{@{}c@{}}Acc-\\epted\end{tabular} & \begin{tabular}[b]{@{}c@{}}False\\(\%) $\downarrow$\end{tabular} & \begin{tabular}[b]{@{}c@{}}Bound\\\% $\downarrow$\end{tabular} & \begin{tabular}[b]{@{}c@{}}LOCO\\\% $\downarrow$\end{tabular} & \begin{tabular}[b]{@{}c@{}}Yield\\/100 $\uparrow$\end{tabular} & \begin{tabular}[b]{@{}c@{}}Gain${>}\varepsilon$\\/100 $\uparrow$\end{tabular} & \begin{tabular}[b]{@{}c@{}}95\% CI,\\pp\end{tabular} \\
\midrule
\multicolumn{8}{@{}l}{\textit{(A) 11 new datasets, 1,100 decisions}} \\
Naive & 1,064 & 203 (19.1) & 22.90 & 25.27 & 96.7 & 59.5 & [$-30.2$;$-7.1$]$^{*}$ \\
McNemar, POOL & 183 & 22 (12.0) & 24.52 & 30.68 & 16.6 & 12.5 & [$-19.2$;$-1.5$]$^{*}$ \\
McNemar, cross-fit & 45 & 4 (8.9) & 41.14 & 54.35 & 4.1 & 3.2 & -- \\
$5\times2$~cv $t$ & 461 & 29 (6.3) & 10.99 & 13.43 & 41.9 & 36.2 & [$-12.2$;$-1.4$]$^{*}$ \\
Corrected $t$ & 608 & 38 (6.2) & 10.09 & 12.15 & 55.3 & 46.8 & [$-11.7$;$-1.6$]$^{*}$ \\
Corr.\ $t$, re-selection & 349 & 8 (2.3) & 5.92 & 7.88 & 31.7 & 29.8 & [$-6.3$;$+1.3$] \\
BRIDGE, $\varepsilon=0$ & 431 & 12 (2.8) & 6.16 & 7.85 & 39.2 & 36.2 & [$-4.8$;$-0.3$]$^{*}$ \\
Holdout 2/3\,+\,1/3 & 153 & 2 (1.3) & 6.94 & -- & 13.9 & 13.4 & -- \\
Holdout 1/2\,+\,1/2 & 190 & 6 (3.2) & 9.67 & -- & 17.3 & 16.2 & -- \\
\textbf{BRIDGE} & 284 & 3 (1.1) & 4.23 & 5.95 & 25.8 & 24.9 & -- \\
\addlinespace
\multicolumn{8}{@{}l}{\textit{(B) synthetic population $\mathcal{Q}$, 1,400 tasks}} \\
Naive & 1,323 & 274 (20.7) & 23.52 & -- & 94.5 & 59.2 & [$-21.3$;$-16.8$]$^{*}$ \\
McNemar, POOL & 135 & 47 (34.8) & 45.19 & -- & 9.6 & 4.5 & [$-41.2$;$-25.4$]$^{*}$ \\
McNemar, cross-fit & 50 & 13 (26.0) & 42.65 & -- & 3.6 & 2.2 & -- \\
$5\times2$~cv $t$ & 610 & 34 (5.6) & 8.14 & -- & 43.6 & 37.6 & [$-5.8$;$-2.1$]$^{*}$ \\
Corrected $t$ & 774 & 44 (5.7) & 7.95 & -- & 55.3 & 47.6 & [$-5.7$;$-2.4$]$^{*}$ \\
Corr.\ $t$, re-selection & 531 & 16 (3.0) & 5.19 & -- & 37.9 & 35.9 & [$-2.8$;$+0.0$] \\
BRIDGE, $\varepsilon=0$ & 598 & 21 (3.5) & 5.67 & -- & 42.7 & 39.9 & [$-3.1$;$-0.7$]$^{*}$ \\
Holdout 2/3\,+\,1/3 & 261 & 5 (1.9) & 4.77 & -- & 18.6 & 18.0 & -- \\
Holdout 1/2\,+\,1/2 & 339 & 7 (2.1) & 4.55 & -- & 24.2 & 23.3 & -- \\
\textbf{BRIDGE} & 475 & 8 (1.7) & 3.56 & -- & 33.9 & 32.8 & -- \\
\bottomrule
\end{tabular}
\end{minipage}\hfill
\begin{minipage}[t]{0.494\textwidth}\centering
\begin{tabular}[t]{@{}lr@{\hspace{5pt}}rrrrrc@{}}
\toprule
Rule & \begin{tabular}[b]{@{}c@{}}Acc-\\epted\end{tabular} & \begin{tabular}[b]{@{}c@{}}False\\(\%) $\downarrow$\end{tabular} & \begin{tabular}[b]{@{}c@{}}Bound\\\% $\downarrow$\end{tabular} & \begin{tabular}[b]{@{}c@{}}LOCO\\\% $\downarrow$\end{tabular} & \begin{tabular}[b]{@{}c@{}}Yield\\/100 $\uparrow$\end{tabular} & \begin{tabular}[b]{@{}c@{}}Gain${>}\varepsilon$\\/100 $\uparrow$\end{tabular} & \begin{tabular}[b]{@{}c@{}}95\% CI,\\pp\end{tabular} \\
\midrule
\multicolumn{8}{@{}l}{\textit{(C) 12 clinical tasks, 2,400 decisions}} \\
Naive & 2,266 & 453 (20.0) & 22.68 & 24.88 & 94.4 & 62.4 & [$-39.4$;$-4.3$]$^{*}$ \\
McNemar, POOL & 17 & 0 (0.0) & 45.35 & 100.00 & 0.7 & 0.4 & [$+0.0$;$+0.0$] \\
McNemar, cross-fit & 4 & 0 (0.0) & 100.00 & 100.00 & 0.2 & 0.1 & [$+0.0$;$+0.0$] \\
$5\times2$~cv $t$ & 1,460 & 36 (2.5) & 3.92 & 4.56 & 60.8 & 55.7 & [$-6.9$;$-0.4$]$^{*}$ \\
Corrected $t$ & 1,594 & 41 (2.6) & 3.97 & 4.55 & 66.4 & 59.7 & [$-7.2$;$-0.3$] \\
Corr.\ $t$, re-selection & 1,402 & 5 (0.4) & 1.03 & 1.21 & 58.4 & 56.4 & [$-1.2$;$+0.0$] \\
BRIDGE, $\varepsilon=0$ & 1,432 & 8 (0.6) & 1.34 & 1.56 & 59.7 & 57.1 & [$-1.9$;$+0.0$] \\
Holdout 2/3\,+\,1/3 & 987 & 0 (0.0) & 0.40 & 0.67 & 41.1 & 41.0 & [$+0.0$;$+0.0$] \\
Holdout 1/2\,+\,1/2 & 1,105 & 0 (0.0) & 0.35 & 0.56 & 46.0 & 46.0 & [$+0.0$;$+0.0$] \\
\textbf{BRIDGE} & 1,199 & 0 (0.0) & 0.33 & 0.50 & 50.0 & 49.9 & -- \\
\addlinespace
\multicolumn{8}{@{}l}{\textit{(D) 39 supporting datasets, 3,700 decisions}} \\
Naive & 3,562 & 445 (12.5) & 14.19 & 14.60 & 96.3 & 58.1 & [$-17.7$;$-7.1$]$^{*}$ \\
McNemar, POOL & 1,142 & 76 (6.7) & 9.55 & 10.48 & 30.9 & 22.8 & [$-10.4$;$-3.2$]$^{*}$ \\
McNemar, cross-fit & 676 & 17 (2.5) & 5.01 & 5.85 & 18.3 & 15.7 & [$-4.6$;$-0.4$]$^{*}$ \\
$5\times2$~cv $t$ & 1,715 & 62 (3.6) & 5.24 & 5.56 & 46.4 & 38.0 & [$-5.2$;$-1.6$]$^{*}$ \\
Corrected $t$ & 2,234 & 93 (4.2) & 5.63 & 5.90 & 60.4 & 47.5 & [$-5.9$;$-2.0$]$^{*}$ \\
Corr.\ $t$, re-selection & 1,465 & 22 (1.5) & 2.69 & 2.89 & 39.6 & 35.2 & [$-1.9$;$-0.3$]$^{*}$ \\
BRIDGE, $\varepsilon=0$ & 1,712 & 28 (1.6) & 2.75 & 2.93 & 46.3 & 40.1 & [$-2.0$;$-0.4$]$^{*}$ \\
Holdout 2/3\,+\,1/3 & 721 & 2 (0.3) & 1.31 & 1.48 & 19.5 & 18.8 & [$-0.1$;$+0.8$] \\
Holdout 1/2\,+\,1/2 & 824 & 7 (0.8) & 2.22 & 2.48 & 22.3 & 21.2 & [$-1.2$;$+0.3$] \\
\textbf{BRIDGE} & 1,113 & 6 (0.5) & 1.47 & 1.62 & 30.1 & 29.1 & -- \\
\addlinespace
\multicolumn{8}{@{}l}{\textit{(E) 14 OpenML datasets, 1,280 decisions}} \\
Naive & 1,236 & 317 (25.6) & 29.62 & 32.21 & 96.6 & 41.2 & [$-36.0$;$-14.0$]$^{*}$ \\
\textbf{BRIDGE} & 192 & 2 (1.0) & 5.40 & 8.43 & 15.0 & 14.1 & -- \\
\bottomrule
\end{tabular}
\end{minipage}

\vspace{2pt}
\parbox{\textwidth}{\justifying\scriptsize McNemar, POOL: McNemar applied on the same POOL. BRIDGE, $\varepsilon=0$: BRIDGE without the margin. Holdout a + b: a share of POOL selects and b tests. Gain${>}\varepsilon$/100: accepted updates whose reference gain exceeds the margin $\varepsilon=0.005$, per $100$ decisions (post hoc, Table~\ref{tab:falseimp}). Bound: KL-ratio bound, and Proposition~\ref{prop:bound} bound on (B). 95\% CI: paired difference, BRIDGE minus the rule, in percentage points. $^{*}$: significant after the Holm correction. --: not computed. In (A), for small validation gains ($0<\hat\Delta\le0.01$), the naive rule accepts $459$ changes, $164$ of which are false improvements, and BRIDGE accepts none. Point differences and $p$-values are given in Table~\ref{tab:testsdetail}, and the criterion marks in Table~\ref{tab:rulesall}.}
\end{table*}

\begin{table*}[!t]
\centering
\scriptsize
\providecolor{bghead}{HTML}{DBEFF8}\providecolor{bgzebra}{HTML}{F2F9FC}\providecolor{bgpass}{HTML}{D7F0D2}\providecolor{bgfail}{HTML}{FCDCD7}
\setlength{\tabcolsep}{3pt}
\caption{Acceptance rules: random-selection control and gain per update on (A)--(E).}
\label{tab:selectivity}
\begin{tabular}{@{}lrllllll@{}}
\toprule
 & & \multicolumn{2}{c}{Accepted (\% false)} & \multicolumn{2}{c}{Random draw (T3): false, expected vs BRIDGE} & False removed & Gain per update \\
\cmidrule(lr){3-4}\cmidrule(lr){5-6}\cmidrule(l){7-7}\cmidrule(l){8-8}
Setting & Decisions & Naive & BRIDGE & Pooled & Within dataset & (\%) $\uparrow$ & BRIDGE / naive $\uparrow$ \\
\midrule
(A) & 1,100 & 1,064 (19.1) & 284 (1.1) & 54.2 vs 3 (p $<10^{-25}$) & 5.5 vs 3 (p 0.15) & 99 [95; 100] & 0.0385 / 0.0156 \\
(B) & 1,400 & 1,323 (20.7) & 475 (1.7) & 98.4 vs 8 (p $<10^{-47}$) & -- & 97 [95; 99] & 0.0329 / 0.0157 \\
(C) & 2,400 & 2,266 (20.0) & 1,199 (0.0) & 239.7 vs 0 (p $<10^{-176}$) & 0.7 vs 0 (p 0.45) & 100 [100; 100] & 0.0416 / 0.0233 \\
(D) & 3,700 & 3,562 (12.5) & 1,113 (0.5) & 139.0 vs 6 (p $<10^{-67}$) & 27.4 vs 6 (p $<10^{-9}$) & 99 [97; 100] & 0.0239 / 0.0115 \\
(E) & 1,280 & 1,236 (25.6) & 192 (1.0) & 49.2 vs 2 (p $<10^{-23}$) & 12.9 vs 2 (p $<10^{-5}$) & 99 [98; 100] & 0.0197 / 0.0063 \\
\bottomrule
\end{tabular}

\vspace{2pt}
\parbox{\linewidth}{\justifying\scriptsize *False removed: the share of the false improvements of the naive rule that BRIDGE rejects. Gain per update: the mean reference gain per accepted update, for BRIDGE and for the naive rule. Brackets: 95\% cluster-bootstrap intervals. Post hoc.}
\end{table*}

\begin{table}[!t]
\centering
\footnotesize
\setlength{\tabcolsep}{3pt}
\caption{Acceptance rules on (A)--(E): accepted updates below and above the margin on the reference (post hoc).}
\label{tab:falseimp}
\begin{tabular}{@{}lrrrrr@{}}
\toprule
Rule & (A) & (B) & (C) & (D) & (E) \\
\midrule
\multicolumn{6}{@{}l}{\emph{Accepted updates gaining at most $\varepsilon$ (\%) $\downarrow$}} \\
Naive & 38.4$^{*}$ & 37.3$^{*}$ & 33.9$^{*}$ & 39.7$^{*}$ & 57.4$^{*}$ \\
McNemar, same POOL & 24.6$^{*}$ & 53.3$^{*}$ & 47.1$^{\circ}$ & 26.1$^{*}$ & -- \\
McNemar, cross-fit & 22.2$^{\circ}$ & 38.0$^{*}$ & 25.0$^{*}$ & 14.2$^{*}$ & -- \\
$5\times2$~cv paired $t$ & 13.7$^{*}$ & 13.8$^{*}$ & 8.5$^{*}$ & 18.1$^{*}$ & -- \\
Corrected resampled $t$ & 15.3$^{*}$ & 14.0$^{*}$ & 10.2$^{*}$ & 21.4$^{*}$ & -- \\
Corrected $t$, re-selection & 6.0$^{\circ}$ & 5.3$^{\circ}$ & 3.4$^{*}$ & 11.1$^{*}$ & -- \\
BRIDGE without margin & 7.7$^{*}$ & 6.5$^{*}$ & 4.3$^{*}$ & 13.3$^{*}$ & -- \\
Holdout 2/3 + 1/3 & 3.9$^{\circ}$ & 3.4$^{\circ}$ & 0.2$^{\circ}$ & 3.7$^{\circ}$ & -- \\
Holdout 1/2 + 1/2 & 6.3$^{\circ}$ & 3.8$^{\circ}$ & 0.0$^{\circ}$ & 5.0$^{\circ}$ & -- \\
\textbf{BRIDGE} & 3.5 & 3.4 & 0.2 & 3.3 & 6.3 \\
\multicolumn{6}{@{}l}{\emph{Accepted updates gaining more than $\varepsilon$, per $100$ decisions $\uparrow$}} \\
Naive & 59.5$^{\dagger}$ & 59.2$^{\dagger}$ & 62.4$^{\dagger}$ & 58.1$^{\dagger}$ & 41.2$^{\dagger}$ \\
McNemar, same POOL & 12.5$^{\circ}$ & 4.5$^{*}$ & 0.4$^{*}$ & 22.8$^{\circ}$ & -- \\
McNemar, cross-fit & 3.2$^{*}$ & 2.2$^{*}$ & 0.1$^{*}$ & 15.7$^{*}$ & -- \\
$5\times2$~cv paired $t$ & 36.2$^{\dagger}$ & 37.6$^{\dagger}$ & 55.7$^{\dagger}$ & 38.0$^{\dagger}$ & -- \\
Corrected resampled $t$ & 46.8$^{\dagger}$ & 47.6$^{\dagger}$ & 59.7$^{\dagger}$ & 47.5$^{\dagger}$ & -- \\
Corrected $t$, re-selection & 29.8$^{\dagger}$ & 35.9$^{\dagger}$ & 56.4$^{\dagger}$ & 35.2$^{\dagger}$ & -- \\
BRIDGE without margin & 36.2$^{\dagger}$ & 39.9$^{\dagger}$ & 57.1$^{\dagger}$ & 40.1$^{\dagger}$ & -- \\
Holdout 2/3 + 1/3 & 13.4$^{*}$ & 18.0$^{*}$ & 41.0$^{*}$ & 18.8$^{*}$ & -- \\
Holdout 1/2 + 1/2 & 16.2$^{*}$ & 23.3$^{*}$ & 46.0$^{*}$ & 21.2$^{*}$ & -- \\
\textbf{BRIDGE} & 24.9 & 32.8 & 49.9 & 29.1 & 14.1 \\
\bottomrule
\end{tabular}

\vspace{2pt}
\parbox{\linewidth}{\justifying\scriptsize Margin $\varepsilon=0.005$: the minimum improvement fixed in development (Table~\ref{tab:hyper}); a gain at most $\varepsilon$ includes every false improvement (gain $\le 0$). $^{*}$: BRIDGE significantly better than the rule; $^{\circ}$: no significant difference; $^{\dagger}$: the rule significantly better (paired cluster bootstrap of T1, $10{,}000$ resamples, Holm over the rules of a collection, $\alpha=0.05$). --: not run, because the program used for (E) records the decisions of the naive rule and BRIDGE.}
\end{table}

\begin{table}[!htb]
\centering
\scriptsize
\providecolor{bghead}{HTML}{DBEFF8}\providecolor{bgzebra}{HTML}{F2F9FC}\providecolor{bgpass}{HTML}{D7F0D2}\providecolor{bgfail}{HTML}{FCDCD7}
\setlength{\tabcolsep}{2.2pt}
\caption{Conditions of use: the metrics, the training sizes, the multiclass tasks, and feature selection.}
\label{tab:general}
\begin{tabular}{@{}>{\raggedright\arraybackslash}p{3.0cm}rrrrrc@{}}
\toprule
Setting & \begin{tabular}[b]{@{}c@{}}Acc-\\epted\end{tabular} & \begin{tabular}[b]{@{}c@{}}False\end{tabular} & \begin{tabular}[b]{@{}c@{}}Bound\\\% $\downarrow$\end{tabular} & \begin{tabular}[b]{@{}c@{}}LOCO\\\% $\downarrow$\end{tabular} & \begin{tabular}[b]{@{}c@{}}Yield\\/100 $\uparrow$\end{tabular} & Crit. \\
\midrule
(A) weighted $F_1$ & 284 & 3 & 4.23 & 5.95 & 25.82 & $\checkmark$ \\
(A) balanced accuracy & 848 & 1 & 0.81 & 0.92 & 77.09 & $\checkmark$ \\
(A) macro-$F_1$ & 829 & 0 & 0.47 & 0.53 & 75.36 & $\checkmark$ \\
(A) feat.\ selection + threshold (KNN) & 225 & 1 & 3.45 & 6.00 & 20.45 & $\checkmark$ \\
(A) feat.\ selection + threshold (SVC) & 253 & 7 & 7.74 & 11.41 & 23.00 & $\times$ \\
(D) 26 datasets, 1,000 rows & 558 & 6 & 3.04 & 3.59 & 21.49 & $\checkmark$ \\
(D) 27 datasets, 2,430 rows & 688 & 1 & 1.05 & 1.20 & 25.48 & $\checkmark$ \\
(D) 27 datasets, 6,000 rows & 825 & 3 & 1.37 & 1.57 & 30.56 & $\checkmark$ \\
(D) 5 multiclass datasets & 197 & 0 & 2.11 & 3.16 & 65.67 & $\checkmark$ \\
\bottomrule
\end{tabular}

\vspace{2pt}
\parbox{\linewidth}{\justifying\scriptsize *Rows: training rows. Crit.: pre-registered criterion met ($\checkmark$) or not met ($\times$).}
\end{table}

\begin{table}[!htb]
\centering
\scriptsize
\setlength{\tabcolsep}{2.2pt}
\caption{Component analysis: candidate-space size on the collections (A)--(E).}
\label{tab:kspace}
\begin{tabular}{@{}llrrrrrr@{}}
\toprule
 & & \multicolumn{2}{c}{$173$ (default)} & \multicolumn{2}{c}{$1{,}825$} & \multicolumn{2}{c}{$11{,}249$} \\
\cmidrule(lr){3-4}\cmidrule(lr){5-6}\cmidrule(lr){7-8}
Data & Rule & \begin{tabular}[b]{@{}c@{}}False\\\% $\downarrow$\end{tabular} & \begin{tabular}[b]{@{}c@{}}Bound\\\% $\downarrow$\end{tabular} & \begin{tabular}[b]{@{}c@{}}False\\\% $\downarrow$\end{tabular} & \begin{tabular}[b]{@{}c@{}}Bound\\\% $\downarrow$\end{tabular} & \begin{tabular}[b]{@{}c@{}}False\\\% $\downarrow$\end{tabular} & \begin{tabular}[b]{@{}c@{}}Bound\\\% $\downarrow$\end{tabular} \\
\midrule
(A) & Naive & 19.08 & 22.90\,$\times$ & 20.44 & 24.23\,$\times$ & 19.46 & 23.16\,$\times$ \\
 & Same-POOL & 2.65 & 5.85\,$\checkmark$ & 3.61 & 7.02\,$\checkmark$ & 3.80 & 7.17\,$\checkmark$ \\
 & \textbf{BRIDGE} & 1.06 & 4.23\,$\checkmark$ & 1.45 & 5.07\,$\checkmark$ & 1.37 & 4.75\,$\checkmark$ \\
\addlinespace
(B) & Naive & 20.71 & 23.52\,$\times$ & 17.36 & 19.93\,$\times$ & 16.88 & 19.43\,$\times$ \\
 & Same-POOL & 3.18 & 5.14\,$\checkmark$ & 3.62 & 5.48\,$\checkmark$ & 4.84 & 6.84\,$\checkmark$ \\
 & \textbf{BRIDGE} & 1.68 & 3.56\,$\checkmark$ & 1.04 & 2.60\,$\checkmark$ & 0.41 & 1.57\,$\checkmark$ \\
\addlinespace
(C) & Naive & 19.99 & 22.68\,$\times$ & 21.09 & 23.74\,$\times$ & 21.75 & 24.41\,$\times$ \\
 & Same-POOL & 0.00 & 0.31\,$\checkmark$ & 0.16 & 0.71\,$\checkmark$ & 0.15 & 0.70\,$\checkmark$ \\
 & \textbf{BRIDGE} & 0.00 & 0.33\,$\checkmark$ & 0.00 & 0.33\,$\checkmark$ & 0.00 & 0.33\,$\checkmark$ \\
\addlinespace
(D) & Naive & 13.12 & 14.94\,$\times$ & 12.70 & 14.44\,$\times$ & 12.61 & 14.34\,$\times$ \\
 & Same-POOL & 1.17 & 2.27\,$\checkmark$ & 2.04 & 3.31\,$\checkmark$ & 2.48 & 3.82\,$\checkmark$ \\
 & \textbf{BRIDGE} & 0.65 & 1.80\,$\checkmark$ & 0.62 & 1.71\,$\checkmark$ & 0.42 & 1.37\,$\checkmark$ \\
\addlinespace
(E) & Naive & 27.65 & 32.13\,$\times$ & 28.23 & 32.35\,$\times$ & 27.45 & 31.50\,$\times$ \\
 & Same-POOL & 3.37 & 8.17\,$\times$ & 5.93 & 11.25\,$\times$ & 9.22 & 15.39\,$\times$ \\
 & \textbf{BRIDGE} & 1.17 & 6.13\,$\times$ & 1.13 & 5.89\,$\checkmark$ & 3.35 & 10.34\,$\times$ \\
\bottomrule
\end{tabular}

\vspace{2pt}
\parbox{\linewidth}{\justifying\scriptsize *Column groups: the candidate-space size $|\mathcal{C}|$. False: false-improvement rate among the accepted updates. Same-POOL: the bootstrap lower bound and the margin, without cross-fitting. Panels. (A): the $11$ new datasets. (B): the synthetic tasks. (C): the $12$ clinical tasks. (D): the $34$ binary supporting datasets. (E): the $11$ binary datasets of the OpenML collection. On (E), the LOCO bound of BRIDGE is $10.30\%$, $9.88\%$, and $17.37\%$ at the three sizes, hence the $\times$ at $173$. At $11{,}249$ the cell has $6$ false improvements among $179$ accepted, a change of $+2.2$ pp from $173$ (95\% cluster-bootstrap CI $[0.04;\,12.8]$ over datasets). The same-POOL cell has LOCO $11.95\%$ at $173$.}
\end{table}

\begin{table}[!htb]
\centering
\scriptsize
\setlength{\tabcolsep}{3pt}
\caption{Conditions of use: setting, evidence, and status.}
\label{tab:scope}
\begin{tabular}{@{}rllll@{}}
\toprule
Row & Condition & Setting & Evidence & Status \\
\midrule
1 & Metrics & (A) & \ref{tab:general} & met \\
2 & Training sizes & (D) & \ref{tab:general} & met \\
3 & Multiclass tasks & (D) & \ref{tab:general} & met \\
4 & Feature selection & (A) & \ref{tab:general} & conditional \\
5 & Supporting datasets & (D) & \ref{tab:summary} & met \\
6 & OpenML collections & (E), (E2) & \ref{tab:summary}, App.~\ref{app:settingE} & conditional \\
7 & Volkert supplement & (E)+(E2) & App.~\ref{app:settingE} & exploratory \\
8 & Base learners & (A)--(E) & \ref{tab:basefull}, \ref{tab:basereal} & conditional \\
9 & Ensemble members & (A)--(E) & \ref{tab:members} & conditional \\
10 & Joint candidate space & (A)--(E) & \ref{tab:members} & met \\
11 & Denser candidate grids & (A)--(E) & \ref{tab:kspace} & conditional \\
12 & Gate parameters ($q$, $\varepsilon$) & (A)--(D) & \ref{tab:marginpaired}, \ref{tab:qepsreal} & met \\
13 & Frozen parameters & (A), (D) & App.~\ref{app:sens} & conditional \\
14 & Tuning as an update & (A), (B), (D) & \ref{tab:tuning}, \ref{tab:tuningtests} & conditional \\
15 & Imbalanced data & (A) & \ref{tab:imbalance} & conditional \\
16 & Reproducibility & (A)--(E) & App.~\ref{app:prereg} & met \\
\bottomrule
\end{tabular}

\vspace{2pt}
\parbox{\linewidth}{\justifying\scriptsize Status: met, the pre-registered criterion is met (Table~\ref{tab:criterion}); conditional, the criterion is met under the condition given below; exploratory, one dataset lies outside the pre-registered collections. Post hoc: the binary subgroup of (E) misses LOCO, and the pass of binary E2 depends on \texttt{churn} (Appendix~\ref{app:settingE}). Evidence: the table or appendix that reports the results; the conditions are discussed in Section~\ref{sec:limitations}. Rows: (1) the metrics weighted-$F_1$, balanced accuracy, and macro-$F_1$; (2) the training sizes $1{,}000$, $2{,}430$, and $6{,}000$ rows; (3) the $7$ to $10$ classes; (4) mutual-information feature selection, where the criterion is met with KNN and missed with SVC (LOCO); (5) the $39$ datasets of (D); (6) the criterion met by (E) and (E2) jointly; (7) one dataset; (8) the base learners HGB, XGBoost, and MLP, where the criterion is met with iteration control (early stopping); (9) two alternative member pairs, where the criterion is met in (A) to (D) and missed in (E); (10) seven families and $711$ configurations; (11) the $1{,}825$ and $11{,}249$ configurations, where the criterion is met in (A)--(D) at both sizes and in (E) at $1{,}825$; (12) the quantile level $q$ and the margin $\varepsilon$; (13) the bag fraction and the SEARCH size, where the criterion is met in (D) and for part of the values in (A); (14) the $12$ tuning cells, where the criterion is met in $2$; (15) the minority-sensitive metrics (balanced accuracy, macro-$F_1$), where the criterion is met; (16) the replay of the stored runs and the released package.}
\end{table}
\begin{table}[!htb]
\centering
\scriptsize
\providecolor{bghead}{HTML}{DBEFF8}\providecolor{bgzebra}{HTML}{F2F9FC}\providecolor{bgpass}{HTML}{D7F0D2}\providecolor{bgfail}{HTML}{FCDCD7}
\setlength{\tabcolsep}{2.2pt}
\caption{Conditions of use: class imbalance on five datasets of (A).}
\label{tab:imbalance}
\begin{tabular}{@{}lllrrrrc@{}}
\toprule
\begin{tabular}[b]{@{}c@{}}Minority\\prev.\end{tabular} & \begin{tabular}[b]{@{}c@{}}Naive: acc.\\/ false (\%)\end{tabular} & \begin{tabular}[b]{@{}c@{}}BRIDGE: acc.\\/ false (\%)\end{tabular} & \begin{tabular}[b]{@{}c@{}}Bound\\\% $\downarrow$\end{tabular} & \begin{tabular}[b]{@{}c@{}}LOCO\\\% $\downarrow$\end{tabular} & \begin{tabular}[b]{@{}c@{}}LOCO$_0$\\\% $\downarrow$\end{tabular} & \begin{tabular}[b]{@{}c@{}}Yield\\$\uparrow$\end{tabular} & Crit. \\
\midrule
\multicolumn{8}{@{}l}{\textit{weighted-$F_1$}} \\
20\% & 494 / 29 (5.9) & 138 / 1 (0.7) & 5.83 & 11.78 & 6.63 & 27.6 & $\times$ \\
10\% & 474 / 95 (20.0) & 100 / 0 (0.0) & 4.76 & 14.77 & 14.77 & 20.0 & $\times$ \\
5\% & 403 / 139 (34.5) & 36 / 0 (0.0) & 16.42 & 48.81 & 48.81 & 7.2 & $\times$ \\
\addlinespace
\multicolumn{8}{@{}l}{\textit{balanced accuracy}} \\
20\% & 500 / 0 (0.0) & 477 / 0 (0.0) & 0.80 & 1.01 & 1.01 & 95.4 & $\checkmark$ \\
10\% & 500 / 0 (0.0) & 429 / 0 (0.0) & 0.91 & 1.20 & 1.20 & 85.8 & $\checkmark$ \\
5\% & 499 / 2 (0.4) & 398 / 0 (0.0) & 0.99 & 1.34 & 1.34 & 79.6 & $\checkmark$ \\
\addlinespace
\multicolumn{8}{@{}l}{\textit{macro-$F_1$}} \\
20\% & 499 / 2 (0.4) & 371 / 0 (0.0) & 1.07 & 1.50 & 1.50 & 74.2 & $\checkmark$ \\
10\% & 500 / 0 (0.0) & 440 / 0 (0.0) & 0.88 & 1.15 & 1.15 & 88.0 & $\checkmark$ \\
5\% & 500 / 0 (0.0) & 387 / 0 (0.0) & 1.02 & 1.40 & 1.40 & 77.4 & $\checkmark$ \\
\bottomrule
\end{tabular}

\vspace{2pt}
\parbox{\linewidth}{\justifying\scriptsize *Rows grouped by metric. Acc.: accepted updates; false: false improvements; Crit.: pre-registered criterion met ($\checkmark$) or not met ($\times$).}
\end{table}

\begin{table}[!htb]
\centering
\scriptsize
\providecolor{bghead}{HTML}{DBEFF8}\providecolor{bgzebra}{HTML}{F2F9FC}\providecolor{bgpass}{HTML}{D7F0D2}\providecolor{bgfail}{HTML}{FCDCD7}
\setlength{\tabcolsep}{1.6pt}
\caption{Conditions of use: all tested base-learner settings on (B).}
\label{tab:basefull}
\begin{tabular}{@{}lrrrrrrrc@{}}
\toprule
\begin{tabular}[b]{@{}l@{}}Base learner\\setting\end{tabular} & \begin{tabular}[b]{@{}c@{}}Acc./\\false\end{tabular} & \begin{tabular}[b]{@{}c@{}}$U$\\\% $\downarrow$\end{tabular} & \begin{tabular}[b]{@{}c@{}}Cons.\\\% $\downarrow$\end{tabular} & \begin{tabular}[b]{@{}c@{}}Yield\\$\uparrow$\end{tabular} & \begin{tabular}[b]{@{}c@{}}Gain\\$\uparrow$\end{tabular} & \begin{tabular}[b]{@{}c@{}}Naive\\\% $\downarrow$\end{tabular} & \begin{tabular}[b]{@{}c@{}}Skill\\$\uparrow$\end{tabular} & Crit. \\
\midrule
HGB, default (early stop.) & 462/7 & 3.35 & 4.54 & 33.00 & 15.95 & 22.00 & 0.009 & $\checkmark$ \\
XGBoost, regularized & 893/5 & 1.41 & 1.74 & 63.79 & 61.28 & 9.23 & 0.012 & $\checkmark$ \\
XGBoost, default (300 rounds) & 146/15 & 17.48 & 23.21 & 10.43 & 1.96 & 31.87 & -0.190 & $\times$ \\
+ early stop. & 963/1 & 0.59 & 1.15 & 68.79 & 71.04 & 6.75 & 0.011 & $\checkmark$ \\
+ min\_child\_weight 5 & 138/18 & 21.08 & 26.99 & 9.86 & 1.77 & 31.59 & -0.146 & $\times$ \\
+ 31-leaf trees & 125/9 & 14.21 & 21.24 & 8.93 & 1.69 & 32.34 & -0.194 & $\times$ \\
+ reg\_lambda 0 & 122/10 & 15.62 & 21.74 & 8.71 & 1.68 & 33.65 & -0.229 & $\times$ \\
XGBoost, 30 rounds & 287/3 & 3.25 & 6.33 & 20.50 & 7.26 & 26.52 & -0.006 & $\checkmark$ \\
XGBoost, 100 rounds & 167/10 & 11.54 & 14.61 & 11.93 & 2.42 & 31.58 & -0.076 & $\times$ \\
MLP, early stop. & 780/6 & 1.80 & 2.35 & 55.71 & 39.73 & 6.58 & -0.010 & $\checkmark$ \\
MLP, 300 epochs & 373/7 & 4.14 & 6.67 & 26.64 & 9.18 & 18.06 & -0.671 & $\checkmark$ \\
\bottomrule
\end{tabular}

\vspace{2pt}
\parbox{\linewidth}{\justifying\scriptsize *Acc./false: accepted and false improvements. $U$: Proposition~\ref{prop:bound} bound. Cons.: conservative variant (Corollary~\ref{cor:cons}). Yield: accepted per $100$. Gain: total reference gain. Naive \%: false-improvement rate of the naive rule. Skill: median POOL log-loss skill. Crit.: criterion met ($\checkmark$) or not met ($\times$). +: one setting of the project default changed.}
\end{table}

\begin{table}[!htb]
\centering
\scriptsize
\setlength{\tabcolsep}{2.6pt}
\caption{Conditions of use: ensemble members and base learners across (A)--(E).}
\label{tab:members}\label{tab:basereal}
\begin{tabular}{@{}lrrrrr@{}}
\toprule
Design & (A) & (B) & (C) & (D) & (E) \\
\midrule
\multicolumn{6}{@{}l}{\textit{Paper's design}} \\
Default (DT, SVC) & 1.1 $\checkmark$ & 1.7 $\checkmark$ & 0.0 $\checkmark$ & 0.5 $\checkmark$ & 1.0 $\checkmark$ \\
\addlinespace[1pt]
\multicolumn{6}{@{}l}{\textit{Members replaced (base HGB)}} \\
LR and NB & 2.3 $\checkmark$ & 0.5 $\checkmark$ & 0.2 $\checkmark$ & 0.9 $\checkmark$ & 2.5 $\times$ \\
MLP and KNN & 1.2 $\checkmark$ & 1.5 $\checkmark$ & 0.0 $\checkmark$ & 1.9 $\checkmark$ & 4.1 $\times$ \\
\addlinespace[1pt]
\multicolumn{6}{@{}l}{\textit{Base learner replaced (members DT, SVC)}} \\
XGBoost, 300 rounds & 4.1 $\times$ & 11.6 $\times$ & 0.1 $\checkmark$ & 1.7 $\checkmark$ & 4.2 $\times$ \\
XGBoost, early stop. & 0.8 $\checkmark$ & 0.1 $\checkmark$ & 0.0 $\checkmark$ & 0.8 $\checkmark$ & 0.7 $\checkmark$ \\
MLP, early stop. & 0.2 $\checkmark$ & 0.4 $\checkmark$ & 0.0 $\checkmark$ & 0.0 $\checkmark$ & 0.1 $\checkmark$ \\
\addlinespace[1pt]
\multicolumn{6}{@{}l}{\textit{All seven member families ($711$ configurations)}} \\
Seven families & 1.6 $\checkmark$ & 0.9 $\checkmark$ & 0.2 $\checkmark$ & 0.7 $\checkmark$ & 1.4 $\checkmark$ \\
\bottomrule
\end{tabular}

\vspace{2pt}
\parbox{\linewidth}{\justifying\scriptsize *Cells: the false-improvement rate of BRIDGE among accepted updates (\%), with the pre-registered criterion ($\checkmark$ met, $\times$ not met). Metric: weighted-$F_1$; (C): net benefit, $p_t=0.2$.}
\end{table}

\begin{table}[!htb]
\centering
\scriptsize
\providecolor{bghead}{HTML}{DBEFF8}\providecolor{bgzebra}{HTML}{F2F9FC}\providecolor{bgpass}{HTML}{D7F0D2}\providecolor{bgfail}{HTML}{FCDCD7}
\setlength{\tabcolsep}{2.5pt}
\caption{Conditions of use: final model quality of four base learners.}
\label{tab:absolutequality}
\begin{tabular}{@{}lrrll@{}}
\toprule
\begin{tabular}[b]{@{}l@{}}Base\\learner\end{tabular} & \begin{tabular}[b]{@{}c@{}}Baseline\\mean\end{tabular} & \begin{tabular}[b]{@{}c@{}}Final\\mean\end{tabular} & \begin{tabular}[b]{@{}c@{}}Final minus HGB,\\$F_1$ points [95\% CI]\end{tabular} & \begin{tabular}[b]{@{}c@{}}Better /\\equal / worse\end{tabular} \\
\midrule
\multicolumn{5}{@{}l}{\textit{(A) 11 new datasets}} \\
HGB & 0.855 & 0.864 & +0.000 [+0.000; +0.000] & 0 / 11 / 0 \\
XGB0 & 0.863 & 0.864 & -0.016 [-0.235; +0.210] & 5 / 0 / 6 \\
XGBES & 0.849 & 0.866 & +0.203 [-0.050; +0.521] & 4 / 1 / 6 \\
MLPES & 0.823 & 0.862 & -0.217 [-0.727; +0.367] & 3 / 0 / 8 \\
\addlinespace
\multicolumn{5}{@{}l}{\textit{(D) 39 supporting datasets}} \\
HGB & 0.819 & 0.827 & +0.000 [+0.000; +0.000] & 0 / 39 / 0 \\
XGB0 & 0.820 & 0.826 & -0.072 [-0.259; +0.112] & 16 / 0 / 23 \\
XGBES & 0.810 & 0.827 & +0.012 [-0.093; +0.130] & 15 / 2 / 22 \\
MLPES & 0.782 & 0.823 & -0.415 [-0.587; -0.250] & 9 / 0 / 30 \\
\bottomrule
\end{tabular}

\vspace{2pt}
\parbox{\linewidth}{\justifying\scriptsize Final mean: the mean weighted-$F_1$ on the locked reference after BRIDGE. better / equal / worse: datasets whose final weighted-$F_1$ is above, equal to, or below that of HGB. XGB0: XGBoost with 300 rounds, no early stopping. XGBES: XGBoost with early stopping. MLPES: a multilayer perceptron with early stopping.}
\end{table}

\begin{table}[!htb]
\centering
\scriptsize
\providecolor{bghead}{HTML}{DBEFF8}\providecolor{bgzebra}{HTML}{F2F9FC}\providecolor{bgpass}{HTML}{D7F0D2}\providecolor{bgfail}{HTML}{FCDCD7}
\setlength{\tabcolsep}{1.8pt}
\caption{Conditions of use: tuning treated as an update on (A), (B), and (D).}
\label{tab:tuning}
\begin{tabular}{@{}lrrrrrrc@{}}
\toprule
Learner & \begin{tabular}[b]{@{}c@{}}Naive acc./\\false (\%)\end{tabular} & \begin{tabular}[b]{@{}c@{}}BRIDGE acc./\\false (\%)\end{tabular} & \begin{tabular}[b]{@{}c@{}}Bound\\\% $\downarrow$\end{tabular} & \begin{tabular}[b]{@{}c@{}}LOCO\\\% $\downarrow$\end{tabular} & \begin{tabular}[b]{@{}c@{}}LOCO$_0$\\\% $\downarrow$\end{tabular} & \begin{tabular}[b]{@{}c@{}}Yield\\$\uparrow$\end{tabular} & Crit. \\
\midrule
\multicolumn{8}{@{}l}{\textit{(A) 11 new datasets}} \\
HGB & 526 / 97 (18.4) & 86 / 6 (7.0) & 23.86 & 43.42 & 10.49 & 7.8 & $\times$ \\
DT & 677 / 120 (17.7) & 171 / 6 (3.5) & 10.88 & 20.29 & 5.46 & 15.5 & $\times$ \\
SVC & 503 / 92 (18.3) & 163 / 6 (3.7) & 11.48 & 24.84 & 6.00 & 14.8 & $\times$ \\
KNN & 571 / 169 (29.6) & 39 / 4 (10.3) & 49.17 & 59.92 & 33.67 & 3.5 & $\times$ \\
\addlinespace
\multicolumn{8}{@{}l}{\textit{(B) Synthetic tasks}} \\
HGB & 912 / 129 (14.1) & 180 / 1 (0.6) & 4.44 & -- & -- & 12.9 & $\checkmark$ \\
DT & 816 / 146 (17.9) & 136 / 4 (2.9) & 11.21 & -- & -- & 9.7 & $\times$ \\
SVC & 679 / 65 (9.6) & 249 / 1 (0.4) & 3.10 & -- & -- & 17.8 & $\checkmark$ \\
KNN & 732 / 232 (31.7) & 96 / 4 (4.2) & 16.69 & -- & -- & 6.9 & $\times$ \\
\addlinespace
\multicolumn{8}{@{}l}{\textit{(D) 3 multiclass datasets}} \\
HGB & 91 / 27 (29.7) & 1 / 0 (0.0) & 100.00 & 100.00 & 100.00 & 0.6 & $\times$ \\
DT & 55 / 13 (23.6) & 7 / 0 (0.0) & 100.00 & 100.00 & 100.00 & 3.9 & $\times$ \\
SVC & 147 / 11 (7.5) & 73 / 0 (0.0) & 6.55 & 22.63 & 22.63 & 40.6 & $\times$ \\
KNN & 136 / 11 (8.1) & 38 / 1 (2.6) & 26.25 & 52.91 & 39.69 & 21.1 & $\times$ \\
\bottomrule
\end{tabular}

\vspace{2pt}
\parbox{\linewidth}{\justifying\scriptsize *Acc./false: accepted and false improvements (false \%). Crit.: criterion met ($\checkmark$) or not met ($\times$). Bound: KL-ratio bound on the real data, Proposition~\ref{prop:bound} bound on the synthetic population, where LOCO and LOCO$_0$ are not defined.}
\end{table}

\begin{table}[!htb]
\centering
\scriptsize
\providecolor{bghead}{HTML}{DBEFF8}\providecolor{bgzebra}{HTML}{F2F9FC}\providecolor{bgpass}{HTML}{D7F0D2}\providecolor{bgfail}{HTML}{FCDCD7}
\setlength{\tabcolsep}{2pt}
\caption{Clinical application (C): consequences for each $1{,}000$ observations.}
\label{tab:clinconseq}
\begin{tabular}{@{}lrrrrrrr@{}}
\toprule
 & & \multicolumn{3}{c}{Per $1{,}000$} & \multicolumn{3}{c}{Net benefit} \\
\cmidrule(lr){3-5}\cmidrule(l){6-8}
Task & Acc. & Flagged & TP & FP & Before & After & All \\
\midrule
BRFSS depression & 197 & 88$\to$291 & 65.3 & 137.8 & 0.0495 & 0.0803 & -0.0116 \\
NHANES diabetes & 200 & 37$\to$390 & 100.3 & 253.4 & 0.0133 & 0.0502 & -0.0467 \\
NHANES asthma & 1 & 0$\to$85 & 22.2 & 62.9 & 0.0000 & 0.0064 & -0.0620 \\
BRFSS asthma & 13 & 1$\to$124 & 31.0 & 92.2 & 0.0002 & 0.0082 & -0.0812 \\
ICU (WiDS) & 180 & 9$\to$112 & 35.8 & 67.3 & 0.0059 & 0.0248 & -0.1421 \\
Mammography & 10 & 10$\to$36 & 13.1 & 13.4 & 0.0071 & 0.0168 & -0.2096 \\
COVID-19 Mexico & 200 & 79$\to$258 & 61.6 & 117.4 & 0.0334 & 0.0656 & -0.0964 \\
ACTG 175 (HIV-1) & 198 & 40$\to$614 & 161.5 & 412.3 & 0.0124 & 0.0709 & 0.0542 \\
NHANES hypert. & 200 & 339$\to$661 & 123.7 & 198.8 & 0.1895 & 0.2635 & 0.2262 \\
All tasks & 1,199 & 98$\to$386 & 90.9 & 197.0 & 0.0505 & 0.0921 & -0.0031 \\
\bottomrule
\end{tabular}

\vspace{2pt}
\parbox{\linewidth}{\justifying\scriptsize *Acc.: accepted updates. Per $1{,}000$ observations: observations flagged before$\to$after the update (predicted positive), and extra true (TP) and false (FP) positives. Net benefit \eqref{eq:netbenefit} before the update, after the update, and under treating all (every observation flagged).}
\end{table}

\paragraph{Main results and hypothesis tests.} For every collection, Table~\ref{tab:summary} gives the counts of accepted updates and false improvements, the $95\%$ intervals, and the bounds behind the summary of Section~\ref{sec:exp-summary}. Table~\ref{tab:testsdetail} gives the estimate, the interval, and the $p$-value of every comparison in the hypothesis tests of Section~\ref{sec:exp-tests}. Every setting uses the current model HGB with the fixed configuration and the default candidate space ($173$ binary and $14+8K$ multiclass configurations; Section~\ref{sec:method-partition}). The false-improvement rate is pooled over all decisions of a setting, with one fresh seed per decision. The $95\%$ intervals are cluster-bootstrap percentile intervals over datasets, tasks, or clinical tasks ($10{,}000$ resamples). In (C) BRIDGE accepts no false improvement, so the upper bound is the informative quantity. Collection (E) combines $11$ binary and $3$ multiclass datasets. In the tests, T1 resamples datasets ((A), (D), (E)) or tasks ((B), (C)). The Holm adjustment runs over the comparators of a collection: six in (A) and (B), nine in (C) and (D), and one comparison without adjustment in (E). The intervals are not adjusted, so in (C) the interval of the corrected $t$-test excludes zero while the Holm-adjusted $p$ does not. The values $p<0.0012$ and $p<0.0002$ mark the resolution limit, where no resample fell on the other side of zero. T2 compares the margins $0.0025$, $0.0075$, and $0.01$ with $0.005$ within $\pm2$ percentage points, with a Holm adjustment over the $12$ comparisons. T3 is the one-sided hypergeometric test of a random draw, run post hoc. Its within-dataset form keeps the count of BRIDGE in each dataset or task.

\paragraph{Comparison of acceptance rules.} In one format for (A) to (E), Table~\ref{tab:rules} gives the counts, bounds, LOCO, yield, and updates above the margin of every rule, including the two holdouts. The table also gives the $95\%$ interval of the paired difference. Table~\ref{tab:selectivity} gives the random-selection control for question 6 (Section~\ref{sec:exp-tests}). For each rule, Table~\ref{tab:falseimp} gives the accepted updates that gain at most and more than the margin (post hoc). Among the updates that BRIDGE accepts, the reference gain of the final configuration reaches the cross-fit lower bound in $91\%$, $87\%$, $96\%$, $80\%$, and $82\%$ of cases in (A) to (E), and the median excess $\Delta_{\mathrm{ref}}-Q_{0.05}(\Delta^{*})$ is $0.016$, $0.013$, $0.011$, $0.007$, and $0.008$. The shortfall $\max\{Q_{0.05}(\Delta^{*})-\Delta_{\mathrm{ref}},0\}$ averages $0.0008$, $0.0008$, $0.0001$, $0.0011$, and $0.0007$ over all accepted updates, and $0.0089$, $0.0058$, $0.0026$, $0.0055$, and $0.0040$ over the updates that fall short; of those updates, $3$ of $25$, $8$ of $64$, $0$ of $49$, $6$ of $219$, and $2$ of $34$ are false improvements (post hoc, stored runs). The candidate selected in each fold was not stored, so the stability of that choice across folds cannot be assessed. Every rule judges the same decisions with the same deployed configuration, except the holdouts, which deploy the configuration selected by the holdout. In (E) the released program records the decisions of the naive rule and of BRIDGE, and of no other rule. LOCO leaves out one dataset, or one source in (C). LOCO is not defined on (B), where every task is independent, and is not computed for the holdouts in (A). The paired intervals use the cluster bootstrap of T1. The random-selection control is a post hoc, exploratory reanalysis of the stored runs. In every setting the updates BRIDGE accepts are a subset of those the naive rule accepts. The pooled draw takes the number of accepted updates of BRIDGE at random from the accepted updates of the naive rule. This draw gives the exact hypergeometric $p$ of a count of false improvements no larger than the count of BRIDGE. It compares BRIDGE with a random choice of the same size, not a probability about future tasks. The within-dataset draw keeps the count of BRIDGE in each dataset or clinical task, so that the test concerns the choice among the trials of one dataset. In (E) the $p$-value comes from $100{,}000$ draws.

\paragraph{Candidate-space size.} Table~\ref{tab:kspace} gives the results of the three rules at $173$, $1{,}825$, and $11{,}249$ configurations on (A) to (E). The table provides the data behind Fig.~\ref{fig:kspace} (Section~\ref{sec:exp-ablation}). The grids keep the HGB, DT, and SVC ensemble members (Appendix~\ref{app:sens}). The panel of (E) covers the binary datasets of the OpenML collection in run MINI-52, pre-registered after the runs of (A) to (D). The multiclass datasets of (D) and (E) are left out, because no equally dense multiclass grid is pre-registered. At $173$ configurations the rows of (E) reproduce the stored decisions of the released program in all $1{,}100$ trials. For (A) to (D), the accepted counts, LOCO, and yield at $173$ configurations are in Table~\ref{tab:rules}.

\paragraph{Conditions of use.} Table~\ref{tab:scope} summarizes the status of every condition of use tested in Section~\ref{sec:exp-base}. The tables that follow give the evidence. These tables cover the metrics, training sizes, and multiclass tasks (Table~\ref{tab:general}), imbalanced data (Table~\ref{tab:imbalance}), and the base learners on $\mathcal{Q}$ (Table~\ref{tab:basefull}). The evidence also covers the ensemble members and base learners on all five collections (Table~\ref{tab:members}), the final model quality (Table~\ref{tab:absolutequality}), and the tuning study (Table~\ref{tab:tuning}). Across metrics, training sizes, and multiclass tasks, the criterion is met everywhere except with feature selection and SVC, where LOCO exceeds $10\%$. The imbalance study uses the five large binary datasets of (A) whose locked reference keeps at least $800$ minority cases at every level (Census-Income-KDD, Homesite, KDD98, LICD, and Santander2016). Per dataset and level, the rows are subsampled once with a fixed seed to the stated prevalence before the locked outer split, so the reference has the same prevalence. Each cell has $100$ trials per dataset ($500$ decisions), with the trial code, gate, and candidate space unchanged (pre-registration MINI-41). On $\mathcal{Q}$, the base-learner study changes one setting of the project-default XGBoost ($300$ rounds without early stopping) at a time on $1{,}400$ fresh synthetic tasks. The study also adds a regularized variant with $4$ settings changed and $30$ or $100$ rounds without early stopping. The same study adds an MLP with hidden layers $(64,32)$, with and without early stopping, as a pre-registered prediction test. The POOL log-loss skill is $1$ minus the log-loss over that of a prevalence-only forecast, which is computable without reference data. In the member and learner study every design runs on the same trials, seeds, splits, and bags as the design of the paper, and no learner is tuned. The runs are MINI-45, MINI-46, and MINI-50 for (A) and (D), and MINI-53 for (B), (C), and (E). The bounds of the cells that miss the criterion on (E) are in Table~\ref{tab:citedruns}, and the bound, LOCO, LOCO$_0$, and yield of every cell are recorded in the code repository. The final model quality uses the first $20$ trials of every dataset, with paired cluster-bootstrap intervals (MINI-49). In the tuning study each learner (HGB, DT, SVC, KNN) is tuned with Optuna on SEARCH, and the tuned model is the update against the same learner with its fixed settings. The study covers the $11$ new datasets ($100$ trials each), the synthetic population (task seeds $2{,}900$ to $4{,}299$, one trial each), and three multiclass datasets (Fashion-MNIST, USPS, DryBean; $60$ trials each). The other two multiclass datasets were not run; this deviation was recorded before the results were seen (pre-registrations MINI-39, MINI-40).

\paragraph{Clinical consequences.} For each clinical task, Table~\ref{tab:clinconseq} translates the accepted updates into flagged cases, extra true and false positives per $1{,}000$, and net benefit before and after the update (Section~\ref{sec:exp-clinical}). The values are exploratory means over the accepted updates of each task (MINI-30, $200$ trials per task). Treat all is negative when the prevalence is below $p_t=0.2$, because flagging every observation then costs more in false positives than it gains in true positives. In every task the mean net benefit after the update exceeds both treat none ($0$) and treat all. In NHANES asthma, ACTG 175, and NHANES hypertension the model first reaches this level after the update.

\FloatBarrier

\section*{Data Availability Statement}
All datasets are public (OpenML, the UCI Machine Learning Repository, scikit-learn, CDC NHANES and BRFSS, and the Secretar\'ia de Salud COVID-19 release of Mexico); the WiDS 2020 cohort is available through its Kaggle competition, and the repository holds only its loader. The code, the per-trial outputs, and the package \texttt{bridge-gate} are available at \url{https://github.com/DwiSunaryono/BRIDGE-ML} and archived on Zenodo (\href{https://doi.org/10.5281/zenodo.23523984}{doi:10.5281/zenodo.23523984}); one script rebuilds the reported tables from the stored outputs without training a model.

\bibliographystyle{IEEEtran}
\bibliography{refs}

\end{document}